\newif\ifarxiv 
\arxivtrue
\ifarxiv

\documentclass[11pt]{article}
\usepackage[left=1in, top=1in, bottom=1in, right=1in]{geometry}
\usepackage{authblk}

\usepackage[utf8]{inputenc} 
\usepackage[T1]{fontenc}    
\usepackage{url}            
\usepackage{booktabs}       
\usepackage{amsfonts}       
\usepackage{nicefrac}       
\usepackage{microtype}      
\usepackage{xcolor}         
\title{Mechanics of Next Token Prediction with Self-Attention}
\author{Yingcong Li\thanks{Equal contribution.}\, $^1$ \quad Yixiao Huang$^*$\,$^1$ \quad M. Emrullah Ildiz$^1$ \\
{Ankit Singh Rawat$^2$ \quad Samet Oymak$^1$}
}
\affil{$^1$ University of Michigan, Ann Arbor \\
{\small \texttt{\{yingcong,eildiz,oymak\}@umich.edu, yixiao.huang@my.cityu.edu.hk} 
\vspace{-3mm}}
}
\affil{$^2$ Google Research NYC \\
{\small \texttt{ankitsrawat@google.com} 
\vspace{-3mm}}
} 
\else
\documentclass[twoside]{article}
\usepackage[accepted]{aistats2024}

\fi
\usepackage[utf8]{inputenc} 
\usepackage{natbib}
\usepackage[T1]{fontenc}    
\usepackage{titletoc}
\usepackage{hyperref}       
\usepackage{url}            
\usepackage{booktabs}       
\usepackage{amsfonts,amsmath,amssymb}       
\usepackage{nicefrac}       
\usepackage{microtype}      
\usepackage{xcolor}         
\usepackage{enumitem}
\usepackage{mathtools}

\usepackage{graphicx}
\usepackage{subcaption}
\usepackage[normalem]{ulem}
\usepackage[textwidth=1.8cm]{todonotes}
\usepackage{ragged2e}

\usepackage{wrapfig,bm,comment,color}
\usepackage{breakurl,epsfig,epsf,fmtcount,semtrans,multirow,multicol,boldline}
\usepackage{tcolorbox}
\tcbuselibrary{skins}
\usepackage{tikz}
\usepackage{pifont}

\usepackage{appendix}
\usepackage{xcolor}

\definecolor{darkred}{RGB}{150,0,0}
\definecolor{darkgreen}{RGB}{0,150,0}
\definecolor{darkblue}{RGB}{0,0,200}
\hypersetup{colorlinks=true, linkcolor=darkred, citecolor=darkgreen, urlcolor=darkblue}

\setcounter{secnumdepth}{3}
\newtheorem{theorem}{Theorem}

\newtheorem{assumption}{Assumption}

\newtheorem{lemma}{Lemma}

\newtheorem{definition}{Definition}


\def \endprf{\hfill {\vrule height6pt width6pt depth0pt}\medskip}

\newenvironment{proof}{\noindent {\bf Proof.} }{\endprf\par}
\newenvironment{proofof}[1]{\noindent {\bf Proof of {#1}.} }{\endprf\par}

\newcommand\DoToC{%
  \startcontents
  \printcontents{}{1}{\textbf{Contents}\vskip3pt\hrule\vskip5pt}
  \vskip3pt\hrule\vskip5pt
}

\newcommand{\cln}[1]{\red{}}



\newcommand{\ch}{c_\text{up}}
\newcommand{\cl}{c_\text{dn}}

\newcommand{\cmin}{c_\text{min}}
\newcommand{\cmax}{c_\text{max}}


\newcommand{\hb}{\vct{h}}

\newcommand{\st}{\star}

\newcommand{\pl}{\parallel}

\newcommand{\beq}{\begin{equation}}
\newcommand{\ba}{\begin{align}}
\newcommand{\ea}{\end{align}}

\newcommand{\eeq}{\end{equation}}

\newcommand{\nn}{\nonumber}



\newcommand{\V}{{\mtx{V}}}

\newcommand{\diag}[1]{\text{diag}(#1)}

\newcommand{\Lc}{{\cal{L}}}
\newcommand{\Oc}{{\cal{O}}}
\newcommand{\Ocb}{\bar{\Oc}}

\newcommand{\Lcb}{{\bar{\cal{L}}}}

\newcommand{\smx}{\s^{\max}}

\newcommand{\Tmax}{T_\text{max}}

\newcommand{\Cb}{{\mtx{C}}}

\newcommand{\Eb}{{\mtx{E}}}

\newcommand{\Gc}{{\cal{G}}}

\newcommand{\Gck}{{\cal{G}}^{(k)}}

\newcommand{\Iden}{{\mtx{I}}}
\newcommand{\M}{{\mtx{M}}}

\newcommand{\z}{{\vct{z}}}

\newcommand{\sft}[1]{\mathbb{S}(#1)}
\newcommand{\sfp}[1]{\mathbb{S}'(#1)}
\newcommand{\tn}[1]{\|{#1}\|_{\ell_2}}


%

\newcommand{\tf}[1]{\left\|{#1}\right\|_{F}}

\newcommand{\Cc}{\mathcal{C}}
\newcommand{\Cck}{\mathcal{C}^{(k)}}

\newcommand{\Rc}{\mathcal{R}}

\newcommand{\Rcb}{\bar{\mathcal{R}}}

\newcommand{\bt}{{\boldsymbol{\beta}}}

\newcommand{\bal}{{\boldsymbol{\alpha}}}
\newcommand{\bgam}{{\boldsymbol{\gamma}}}
\newcommand{\bgammax}{{\gamma}^{\text{max}}}

\newcommand{\Sc}{\mathcal{S}}

\newcommand{\vb}{\vct{v}}

\newcommand{\Ic}{{\mathcal{I}}}
\newcommand{\Icb}{\bar{\mathcal{I}}}

\newcommand{\Xb}{\bar{\X}}
\newcommand{\xb}{\vct{\bar{x}}}

\newcommand{\cb}{\mtx{c}}

\newcommand{\w}{\vct{w}}

\newcommand{\Wm}{\W^\text{svm}}
\newcommand{\tWm}{\widetilde\W^\text{svm}}
\newcommand{\Wgd}{\W^\text{GD}}

\newcommand{\li}{\left<}
\newcommand{\ri}{\right>}
\newcommand{\s}{\vct{s}}
\newcommand{\ab}{\vct{a}}
\newcommand{\bb}{\vct{b}}
\newcommand{\ub}{{\vct{u}}}

\newcommand{\Xc}{\mathcal{X}}
\newcommand{\Yc}{\mathcal{Y}}



\newcommand{\Wb}{\bar{\W}}

\newcommand{\op}{\texttt{opt}}

\newcommand{\data}{\texttt{DSET}}

\newcommand{\bdata}{\overline{\texttt{DSET}}}

\newcommand{\datak}{\texttt{DSET}^{(k)}}

\newcommand{\x}{\vct{x}}
\newcommand{\xl}{\vct{\bar{x}}}
\newcommand{\xli}{\vct{\bar{x}}_{i}}

\newcommand{\y}{\vct{y}}

\newcommand{\W}{\mtx{W}}
\newcommand{\Wf}{\mtx{W}^{\text{fin}}}
\newcommand{\Whard}{\W_{\text{hard}}}
\newcommand{\Wsoft}{\W_{\text{soft}}}
\newcommand{\hWf}{\widehat{\mtx{W}}^{\text{fin}}}
\newcommand{\bWf}{{\mtx{\bar W}}^{\text{fin}}}
\newcommand{\tWf}{\widetilde{\mtx{W}}^{\text{fin}}}

\newcommand{\Wc}{{\cal{W}}}

\newcommand{\prj}{\boldsymbol{\Pi}}
\newcommand{\Scf}{\Sc_{\text{fin}}}
\newcommand{\Scsvm}{{\Sc_{\text{svm}}}}
\newcommand{\Scall}{\Sc_{\text{active}}}
\newcommand{\tScf}{\widetilde{\Sc}_{\text{fin}}}

\newcommand{\Wcf}{\Wc^{\text{fin}}}



\definecolor{emmanuel}{RGB}{255,127,0}

\newcommand{\R}{\mathbb{R}}

\newcommand{\<}{\langle}
\renewcommand{\>}{\rangle}

\newcommand{\grad}[1]{{\nabla\Lc(#1)}}

\newcommand{\e}{\mathrm{e}}
\newcommand{\eb}{\vct{e}}

\newcommand{\vct}[1]{\bm{#1}}
\newcommand{\mtx}[1]{\bm{#1}}


\newcommand{\X}{{\mtx{X}}}



\newif\ifdraft
\draftfalse

\ifdraft
    \newcommand{\asrtodo}[1]{\todo[color=cyan,size=\tiny]{AR: #1}}
    \newcommand{\asr}[1]{\textsf{\textcolor{red}{[\textbf{Ankit}: #1]}}}
    \newcommand{\yl}[1]{\textcolor{orange}{#1}}
    \newcommand{\needreview}[1]{\textcolor{cyan}{#1}}
    \newcommand{\ylm}[1]{\marginpar{\color{orange}\tiny\ttfamily YL: #1}}
    \newcommand{\ankit}[1]{\textsf{\textcolor{red}{[\textbf{ASR}: #1]}}}
    \newcommand{\shaw}[1]{\textcolor{violet}{#1}}
    \newcommand{\so}[1]{\textcolor{darkblue}{#1}}
    \newcommand{\SO}[1]{\textcolor{darkred}{[SO: #1]}}
    \newcommand{\red}{\textcolor{darkred}}

\else 
    \newcommand{\asrtodo}[1]{}
    \newcommand{\asr}[1]{}
    \newcommand{\yl}[1]{{#1}}
    \newcommand{\needreview}[1]{{#1}}
    \newcommand{\ylm}[1]{}
    \newcommand{\ankit}[1]{}
    \newcommand{\shaw}[1]{}
    \newcommand{\so}[1]{}
    \newcommand{\SO}[1]{}
    \newcommand{\red}{}

\fi

\date{}
\begin{document}

\ifarxiv

\maketitle
\else
\runningauthor{Yingcong Li, Yixiao Huang,  M. Emrullah Ildiz, Ankit Singh Rawat, Samet Oymak}
\twocolumn[

\aistatstitle{Mechanics of Next Token Prediction with Self-Attention}

\aistatsauthor{Yingcong Li$^{*\,1}$ \And Yixiao Huang$^{*\,1}$ \And  M. Emrullah Ildiz$^1$ \And Ankit Singh Rawat$^2$ \And Samet Oymak$^1$}

\aistatsaddress{University of Michigan, Ann Arbor$^1$ \And Google Research NYC$^2$
} ]
\fi
\vspace{-20pt}
\begin{abstract}

Transformer-based language models are trained on large datasets to predict the next token given an input sequence. Despite this simple training objective, they have led to revolutionary advances in natural language processing. Underlying this success is the self-attention mechanism. In this work, we ask: \emph{What does a single self-attention layer learn from next-token prediction?} We show that training self-attention with gradient descent learns an automaton which generates the next token in two distinct steps: \textbf{(1) Hard retrieval:} Given input sequence, self-attention precisely selects the \emph{high-priority input tokens} associated with the last input token. \textbf{(2) Soft composition:} It then creates a convex combination of the high-priority tokens from which the next token can be sampled. Under suitable conditions, we rigorously characterize these mechanics through a directed graph over tokens extracted from the training data. We prove that gradient descent implicitly discovers the strongly-connected components (SCC) of this graph and self-attention learns to retrieve the tokens that belong to the highest-priority SCC available in the context window. Our theory relies on decomposing the model weights into a directional component and a finite component that correspond to hard retrieval and soft composition steps respectively. This also formalizes a related implicit bias formula conjectured in [Tarzanagh et al.~2023]. We hope that these findings shed light on how self-attention processes sequential data and pave the path toward demystifying more complex architectures.
\vspace{-10pt}

\end{abstract}


\ifarxiv
\begin{figure*}[ht]
\else
\begin{figure*}[tb]
\fi
    \centering    
    \includegraphics[width=0.8\textwidth]{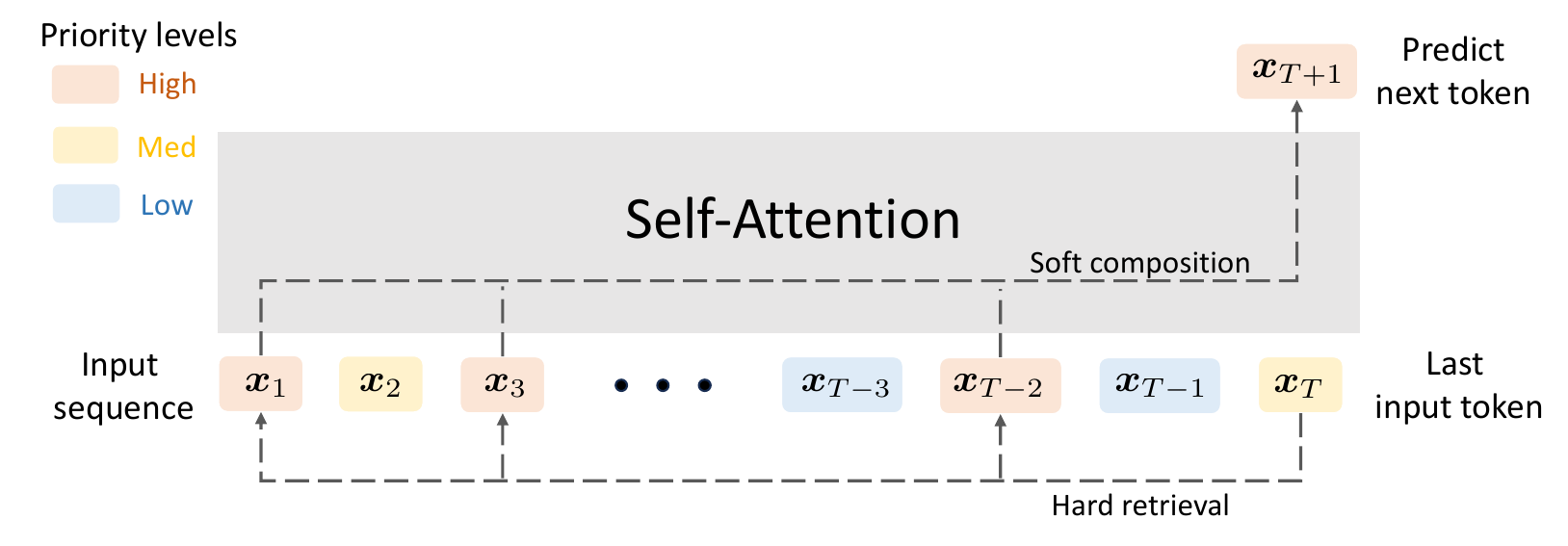}
    \vspace{-7pt}
    \caption{\small{Overview of our result on next-token prediction. We study the implicit bias of gradient descent where a 1-layer self-attention model is trained until convergence. We prove that, during test-time, this model implements a hard retrieval to precisely select the high-priority tokens and then outputs a convex combination of these as the output from which the next token can be sampled. The notion of \emph{high-priority} is formalized through the strongly-connected components of a directed graph associated to the last input token. 
    %
    }}
    \vspace{-6pt}
    \label{fig:next-token}
\end{figure*}

\ifarxiv
\section{Introduction}
\else
\section{INTRODUCTION}
\fi
Language modeling as enabled by Transformer architecture~\citep{vaswani2017} and seemingly simple 
training objectives such as next-token prediction~\citep{radford2018_gpt1,radford2019_gpt2} have not only led to breakthroughs in the field of natural language processing (NLP)~\citep{brown2020_gpt3,chowdhery2022_palm,gpt4_techreport,touvron2023_llama}, but rather straightforward adaptations of this symbiosis between Transformers and next-token prediction tasks have also realized remarkable performance in other domains, including vision~\citep{chen2020_pixels}, speech~\citep{chung2020generative}, reinforcement learning~\citep{chen2021_decision}, and even protein design~\citep{ferruz2022_protgpt2,nijkamp2022_progen2}. This widespread empirical success is often attributed to the (self-)attention mechanism of Transformers that produces high-quality contextual representations needed to realize excellent prediction performance in a wide range of domains. However, a rigorous understanding of how Transformers 
can learn such high-quality representations by solving next-token prediction task via natural algorithms such as gradient descent is largely missing from the literature.

This work aims to bridge this gap between the empirical success and principled understanding of Transformer-based language modeling by shedding light on the optimization landscape and key implicit biases faced by the self-attention mechanism in solving the next-token prediction task. In particular, focusing on a \textit{single-layer} self-attention model with linear classification head, and solving the next-token prediction task, 
we consider the following questions:
\begin{itemize}
    \item \textit{What relationships in the training data are captured by the single-layer self-attention model?}
    \item \textit{How exactly do these relationships dictate the optimization geometry of natural algorithms such as gradient descent?}
\end{itemize}

We show that the answers to both of these questions are intertwined which we achieve by significantly expanding the recently proposed framework that connects learning with Transformers to the celebrated support vector machines (SVMs)~\citep{tarzanagh2023margin,tarzanagh2023transformers}. 

\yl{
As illustrated in Figure~\ref{fig:next-token}, given training data as a collection of (input sequence, next token) pairs, self-attention model learns to (1) retrieve the high-priority tokens (highlighted with red color) to the last input token; and then (2) build a convex combination of these high-priority tokens. The notion of \emph{high-priority} is dictated by a directed graph learned from training data. SGD training accomplishes this by learning hard and soft components of the attention weights $\W$ to execute (1) and (2) respectively. Concretely, the following theorem dictates the evolution of attention weights during gradient descent.
}
\begin{theorem}[\emph{informal}]\label{thm informal}Consider training a single-layer self-attention model with gradient descent. The combined attention weights $\W:=\W_K\W_Q^\top$ evolve as 
    \[
    \W_{\text{GD}}\approx C\cdot\Whard+\Wsoft,
    \]
    where $C\cdot\Whard$ is the hard retrieval component selecting the high-priority tokens when $C\to\infty$; and $\Wsoft$ is the soft composition component  allocating nonzero softmax probabilities over selected tokens. 
\end{theorem}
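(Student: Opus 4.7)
The plan is to analyze the gradient descent trajectory in two steps: first pin down its asymptotic direction, then characterize the bounded residual. I would begin by writing the empirical cross-entropy loss $\Lc(\W)$ as an average over training sequences, with each term involving a softmax over context tokens composed with a linear classification head. From the training data I would then extract a directed token graph: vertices are tokens, and an edge encodes a ``preference'' relation induced by co-occurrence patterns across sequences. Computing the strongly-connected components (SCCs) of this graph and endowing them with a partial order yields, for each possible last input token, the set of high-priority tokens that the trained model should retrieve.

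The second step is to identify the hard retrieval direction $\Whard$ as the solution of an attention-SVM problem in the style of Tarzanagh et al.: a max-margin program whose constraints force high-priority tokens to receive strictly larger attention scores than their lower-priority competitors, uniformly across training sequences. A key sub-step is showing that $\Lc(\W)$ admits no finite minimizer and that $\Lc(C\Whard)$ approaches the infimum as $C\to\infty$; combining this with a descent-style comparison argument shows the gradient descent iterates $\W_t$ diverge in the direction of $\Whard$, and a finer analysis yields the stronger conclusion $\W_t = C_t\Whard + O(1)$ for some scalar sequence $C_t\to\infty$.

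Third, with the direction fixed, I would characterize the residual $\W_t - C_t\Whard$ and show it converges to a finite limit $\Wsoft$. As $C_t\Whard$ drives the softmax mass onto the high-priority tokens, the loss restricted to the orthogonal complement of $\Whard$ collapses to a smooth \emph{convex} function of the residual, since the softmax over the remaining high-priority tokens is a log-sum-exp of linear functionals. The residual gradient flow then shadows gradient flow on this reduced convex loss up to exponentially small errors, giving convergence to a unique minimizer $\Wsoft$ that plays the role of the soft composition weights.

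The main obstacle, as I see it, is the coupling between the divergent and residual components during training. The softmax selection induced by $C_t\Whard$ is only approximately hard at finite $t$, and the residual dynamics depend on exponentially small softmax weights on non-selected tokens; synchronizing the rate $C_t\to\infty$ with the convergence of the residual to $\Wsoft$ requires a delicate quantitative analysis in the spirit of the implicit-bias decomposition of Soudry et al.~for logistic regression, but complicated by the non-convex softmax composition. Verifying that the KKT conditions of the attention-SVM persist along the trajectory---which simultaneously formalizes the implicit-bias conjecture of Tarzanagh et al.---is where the bulk of the technical work will lie, and it is what licenses treating the limiting direction as the exact max-margin solution rather than merely a stationary point of the margin program.
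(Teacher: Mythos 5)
Your high-level plan — a divergent max-margin direction plus a bounded residual, with the residual problem collapsing to a convex log-sum-exp once the softmax is saturated — matches the paper's picture, and you correctly anticipate that the coupling between the two components is the crux. What you miss is the single structural fact that lets the paper sidestep the rate-matching/shadowing argument almost entirely: under Assumption~\ref{assume iden} (weight-tied head, $\cb_y^\top\eb_y=1$ and $\cb_y^\top\eb_k=0$ for $k\neq y$), Assumption~\ref{assume realizable}, and the log-loss, the full empirical objective $\Lc(\W)$ is \emph{convex}, and strictly convex on a specific subspace (Lemma~\ref{lemma cvx}). This is not obvious since the softmax is non-convex in $\W$, and it is the centerpiece of the paper's argument: a descent lemma plus global convexity gives $\Lc(\W(\tau))\to\inf\Lc$, and strict convexity on the relevant subspace pins down the soft component uniquely, with no need to track exponentially small softmax tails or synchronize $C_\tau$ with the residual dynamics. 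Your proposal treats the problem as generically non-convex and reaches for a Soudry-style rate-coupling argument, which is precisely what the paper shows can be avoided in this regime; indeed, for general losses and heads where convexity fails, Section~\ref{sec:local-gd} shows gradient descent only converges locally to SVM solutions of \emph{pseudo}-TPGs, so the global decomposition you assert would be false.

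Two further mismatches. First, your residual subspace is wrong: you project onto the orthogonal complement of $\Whard$, whereas the paper projects onto the \emph{cyclic subspace} $\Scf$ spanned by $(\eb_i-\eb_j)\eb_k^\top$ over within-SCC pairs $(i\asymp j)\in\Gck$, which is a strict subspace of $\Whard^\perp$. The gradient of $\Lc$ lives entirely in $\Scall=\Scf\oplus\Scsvm$, so the component of $\W(\tau)$ outside $\Scall$ is frozen, and only $\prj_{\Scf}(\W(\tau))$ converges to $\Wsoft$. Second, you claim the stronger statement $\W_\tau = C_\tau\Whard + O(1)$, but the formal result (Theorem~\ref{thm cyclic gd}) only gives $\W(\tau)/\tf{\W(\tau)}\to\Wm/\tf{\Wm}$ together with $\prj_{\Scf}(\W(\tau))\to\Wsoft$; the component of $\W(\tau)$ inside $\Scsvm$ but orthogonal to $\Wm$ is shown to be $o(\tf{\W(\tau)})$, not bounded. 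As written, your $O(1)$ residual is an overclaim relative to what the paper proves.
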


\yl{To capture the priority order among different tokens as observed in Figure~\ref{fig:next-token}, we construct directed graphs among the tokens in the vocabulary, namely \textit{token-priority graphs} (TPGs). An illustration is provided in Figure~\ref{fig:scc}, where a \textit{strongly connected component} (SCC, highlighted as dashed black rectangles) in a TPG corresponds to the tokens that are reachable from each other, indicating the absence of a strict priority among those tokens. 
The hard retrieval component $\W_{\text{hard}}$ captures the topological order of different SCCs (orange arrows) whereas the soft composition component $\W_{\text{soft}}$ captures the relationships of different tokens within each SCC (black arrows). These TPGs will geometrically capture the learning dynamics of self-attention. Specifically, we propose the SVM problem \eqref{graph svm}, solution of which describes the direction gradient descent converges to. This way, SGD asymptotically enforces the topological order between SCCs i.e. the $C\cdot\Whard$ term in Theorem \ref{thm informal} as $C\rightarrow \infty$. In practice, this implies that self-attention model favors suppressing lower priority tokens in favour of sampling higher priority tokens.}




A conjecture on the decomposition in Theorem \ref{thm informal} was first proposed in \citep{tarzanagh2023transformers}\footnote{Their conjecture aims to characterize the impact of the MLP layer that follow self-attention in a binary classification setting. However, the high-level claim is same.}. This decomposition is also related to the implicit bias of logistic regression on non-separable data \citep{ji2019implicit}. Our theory fully formalizes this decomposition under the next-token prediction setting and reveals fundamental connections to graphical structure in data (e.g.~through SCCs, TPGs).

\begin{figure*}[tb] 
    \centering    \includegraphics[scale=0.4]{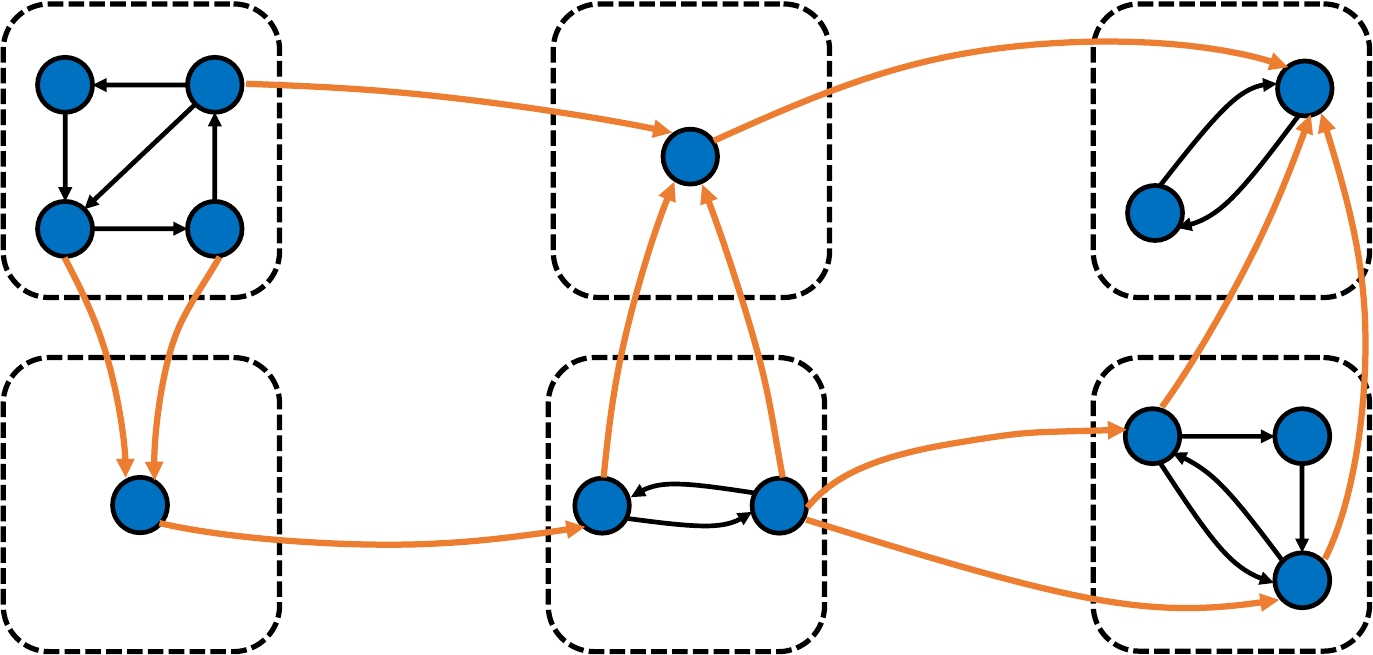}
    \caption{\small{\yl{A token-priority graph (TPG) is a directed graph derived from training data (see Sec \ref{sec ntg} for definition). The edges in TPG capture the input-output relationships between different tokens. A TPG can be partitioned into several SCCs depicted as dashed black squares. In light of Theorem \ref{thm informal}, black intra-SCC edges within each SCC induce the soft-composition component of the attention weights whereas the orange edges induce the hard-retrieval component enforcing the priority orders among various SCCs. }}}
    \label{fig:scc}
\end{figure*}

Overall, we carefully study the gradient descent and regularization path algorithms for attention-based next-token prediction and make the following contributions: 
\begin{enumerate}
    \item We study the optimization landscape of self-attention with log-loss and show that the problem is convex under suitable assumptions. We then establish a global convergence result to fully formalize Theorem \ref{thm informal} in terms of a directional component \eqref{graph svm} and a finite component (see~Sec~\ref{sec global gd}). Notably, results apply to arbitrary datasets as we don't require distributional assumptions.
    
    \item Our theory reveals insightful connections between continuous and discrete optimization, namely: \emph{Self-attention implicitly discovers the strongly-connected components of the TPGs during training}.
    \item  Under a general setting, we establish the implicit bias of the solution obtained by vanishing regularization~\citep{rosset2003margin, suggala2018connecting, ji2020gradient}. This yields a result similar to the gradient descent theory (see~Sec~\ref{sec rp}). We also show that, in general, gradient descent can exhibit local directional convergence rather than global. We characterize these local directions through the SVM solutions of \emph{pseudo TPGs} (see~Sec~\ref{sec:local-gd}).
    
\end{enumerate}

\ifarxiv
\section{Problem Setup}\label{sec setup}
\else
\section{PROBLEM SETUP}\label{sec setup}
\fi

\noindent\emph{Notation.} Let $[n]$ denote the set $\{1,\cdots,  n\}$. For a space $\Sc$, let $\Sc^\perp$ denote the orthogonal complement of $\Sc$ and $\prj_\Sc$ denote the projection operator on $\Sc$ with respect to Euclidean distance.

\smallskip
\noindent\textbf{Next-token prediction problem.} Let $K$ be the vocabulary size with $\Eb=\begin{bmatrix}\eb_1~\dots~\eb_K\end{bmatrix}^\top\in\R^{K\times d}$ denoting the embedding matrix consisting of $d$-dimensional token embeddings for the $K$ tokens in the vocabulary. The next-token prediction is a multi-class classification problem and the goal is to predict the ID $y\in[K]$ of the next token given an input sequence $\X=[\x_1~\dots~\x_T]^\top\in\R^{T\times d}$, where $\x_t\in\Eb$ for all $t\in[T]$.

Suppose that we have a training dataset $\data=\big\{(\X_i,y_i)\in \R^{T_i\times d}\times [K]\big\}_{i=1}^n$ consisting of $n$ sequences where we allow the sequences to have different lengths $T_i$, $i\in[n]$. Throughout this paper, we use $x_{it} \in [K]$ to denote the scalar token ID corresponding to the $t$-th token $\x_{it} \in \R^{d}$ of the input sequence $\X_i$, i.e., $\x_{it}=\eb_{x_{it}}$.  


\smallskip
\noindent\textbf{Self-attention model.} We 
consider a \textit{single-layer} self-attention model when making a prediction on a given input sequence $\X\in\R^{T\times d}$. Following the previous work \citep{tarzanagh2023transformers}, we denote the combined key-query weights by a trainable $\W\in\R^{d\times d}$ matrix, and assume identity value matrix. Let $\xl:=\x_T$ be the last token of the input sequence $\X$. 
Then, the single-layer self-attention outputs the following embedding to predict the next-token ID $y$:
\begin{align} 
\label{eq:sattn-def}
f_{\W}(\X)=\X^\top \sft{\X\W\xl},
\end{align}
where $\sft{\cdot}$ denotes the softmax operation which facilitates weighing tokens of $\X$ based on the data-dependent probabilities $\sft{\X\W\xl}$. Note that the output embedding $f_{\W}(\X)\in\R^d$ in \eqref{eq:sattn-def} is a weighted linear combination of the input token embeddings in $\X$.

\smallskip
\noindent\textbf{Empirical risk minimization (ERM) problem.} 
Let $\ell:\R\to\R$ be the loss function. Given training dataset $\data$, we consider the ERM problem with the following objective:
\begin{align}
\Lc(\W)=\frac{1}{n}\sum_{i=1}^n \ell(\cb_{y_i}^\top\X_i^\top \sft{\X_i\W\xl_i}).\label{erm}\tag{ERM}
\end{align}
Throughout this paper, we fix the linear classification head $\cb_{k}$\footnote{{Specifically, we assume well-pretrained head $\cb_1,\cdots,\cb_K$ such that $\ell(\cb_y^\top\eb_k)$ returns the minimal risk when $k=y$.} } and assume $\|{\cb_k}\|$ is bounded for all $k\in[K]$. Note that even though the classification head is linear, the problem of learning attention parameters $\W$ via ERM is not necessarily convex due to the softmax operator. 
%
In this work, we focus on this exact problem and consider the following two algorithms to optimze for $\W$:
\begin{tcolorbox}[colback=white!5!white,colframe=black!5!black,colback=white!5!white,
                  interior hidden,
                  arc=0pt,
                  boxrule=1pt,
                  boxsep=0pt,
                  left=5pt,
                  right=5pt,
                  top=5pt,]

\noindent\textbf{1.~Gradient descent:} Given starting point $\W(0)\in\R^{d\times d}$ and step size $\eta>0$, for $\tau\geq0$, 
    \begin{align}\label{algo gd}\tag{Algo-GD}
    \W(\tau+1)=\W(\tau)-\eta\nabla\Lc(\W(\tau)).
    \end{align}
\noindent\textbf{2.~Regularization path:} Given $R>0$, $\W\in\R^{d\times d}$,
    \begin{align}\label{algo rp}\tag{Algo-RP}
    \bar\W_R=\arg\min_{\tf{\W}\leq R}\Lc(\W).
    \end{align}
\end{tcolorbox}

{The next-token prediction task aims to capture various patterns present in the underlying dataset. Towards this, we introduce \textit{token-priority graph} (TPG) in the following section that summarizes the sequential priority orders presented in the training data. As we will see later, TPGs play a crucial role in characterizing the optimization geometry for both \eqref{algo gd} 
 and \eqref{algo rp} algorithms.}


\begin{figure*}[tb] 
    \centering    \includegraphics[width=0.95\textwidth,trim={0 1cm 0 1cm}, clip]{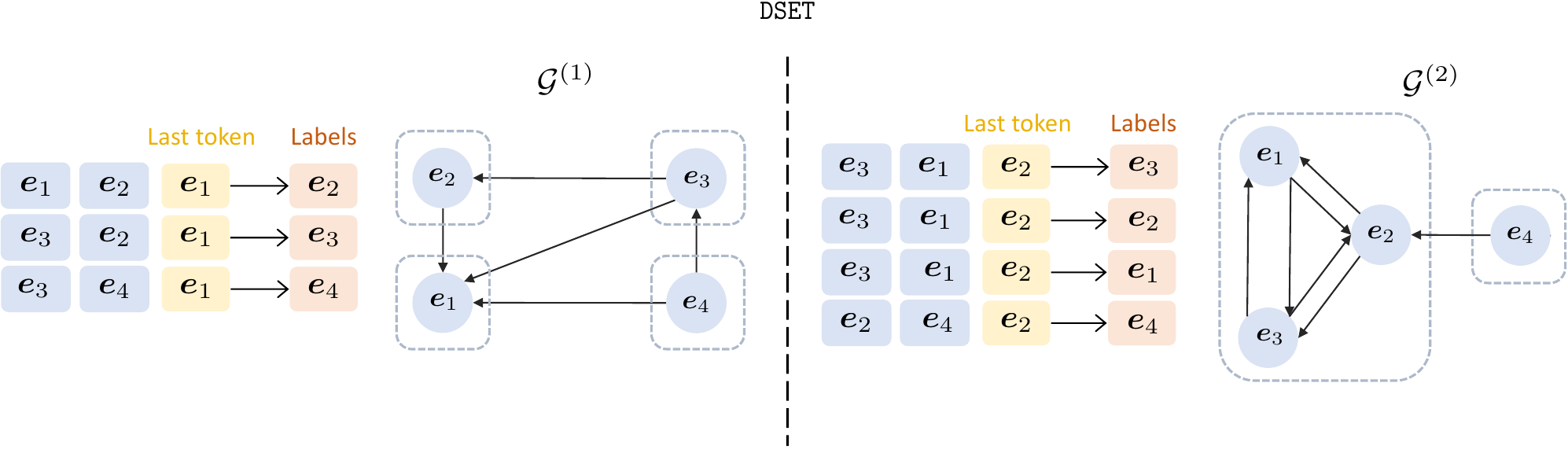}
    \caption{\small{{Illustration of token-priority graph (TPG).  Given the input sequences and labels (next tokens), we construct the TPGs $\{\Gc^{(k)}\}_{k=1}^K$ according to the last token.  
    Two TPGs $\Gc^{(1)}$ (left) and $\Gc^{(2)}$ (right) are constructed using the samples with $\eb_1$ and $\eb_2$ as the last tokens, respectively. In each graph, directed edges (label token$\to$input token) are added between tokens/nodes. Based on these directed edges, each graph can be partitioned into its strongly-connected components (SCCs, {highlighted as dashed grey rectangles}). {Each SCC is a set of tokens where each token is reachable from every other token within that SCC.} Further details are deferred to Section~\ref{sec ntg}.}
    }}
    \label{fig:intro}
\end{figure*}
\subsection{Token-priority Graph of the Dataset}\label{sec ntg}


A token-priority graph (TPG) is a directed graph with at most $K$ nodes corresponding to the elements in the vocabulary. We associate the dataset $\data=\{(\X_i,y_i)\}_{i=1}^n$ with multiple TPGs $\{\Gck\}_{k=1}^K$, with each TPG focusing on a subset of the dataset comprising of those input sequences that agree on the last token $\xl$. Concretely, we construct $\Gck$'s as follows:
%
%
\begin{enumerate}
    \item Split $\data$ into $K$ subsets $\{\datak\}_{k=1}^K$ with $\datak$ containing all input sequences that end with the same last token $\xl=\eb_k$.
    \item For each $(\X,y)\in\datak$ and for all $(x,y)$ pairs in $(\X,y)$ where $x$ is the corresponding token ID of $\x\in\X$, add a directed edge $(y\rightarrow x)$ to $\Gck$.
\end{enumerate}
{An illustration is provided in Fig.~\ref{fig:intro}, where we construct two {TPGs} ($\Gc^{(1)}$ and $\Gc^{(2)}$) based on the last tokens (depicted in yellow), and the directed edges are presented as arrows starting from labels (orange) to input tokens (blue/yellow) within each sequence.} Note that nodes of each $\Gck$ constitute a subset of the indices $[K]$. The edges in $\Gck$ capture the priorities across the tokens in {an extended data sequence,} conditioned on the last token of the input 
being $\xl=\eb_k$. 
We will see that if there is a cycle, i.e., {$y\rightarrow x$ and $x\rightarrow y$ are both directionally reachable in the graph}, 
then the self-attention {learnt via next-token prediction task} {can} assign comparable priorities to the tokens $x$ and $y$. In contrast, if $y$ always \emph{dominates} $x$, i.e.,~{$y\to x$ is reachable but $x\not\to y$,} 
then, when $x$ and $y$ are both present in an input sequence, self-attention will be learnt to suppress $x$ and select $y$ through an SVM mechanism along the line of \cite{tarzanagh2023transformers}. 


\smallskip
\noindent\textbf{Strongly-connected components in TPGs.}To formalize the aforementioned SVM mechanism, we need the notion of \emph{strongly-connected components} (SCCs). A directed graph is strongly connected if every node in the graph is reachable from every other node. SCCs of a directed graph form a partition into subgraphs that are themselves strongly connected. Given the {TPGs} $\{\Gck\}_{k=1}^{K}$ associated with the dataset $\data$, we can split the directed graph $\Gck$ into its SCCs, denoted by $\{\Cck_i\}_{i=1}^{N_k}$. Note that the number of SCCs in $\Gck$, as denoted by $N_k$, is at most the number of nodes in $\Gck$, which is upper bounded by the vocabulary size $K$. Furthermore, by definition, different SCCs within a graph consist of distinct nodes, i.e.,~$\Cck_{i}\bigcap\Cck_j=\emptyset$, for $i\neq j$. 
{Now, returning to Fig.~\ref{fig:intro}, {each of the dashed grey rectangle represents an SCC}. $\Gc^{(1)}$(left) contains four SCCs and therefore, all tokens within the graph have strict priority orders. In contrast, $\Gc^{(2)}$(right) consists of two SCCs, with one containing three nodes. Following the arrows, we can see that all the tokens/nodes within this specific SCC are directional reachable. }

{Before formally connecting {TPGs} and their SCCs to the SVM mechanism that enables next-token prediction, we introduce some necessary graph-related notation.} Given a directed graph $\Gc$, for $i,j\in[K]$ such that $i\neq j$:
\begin{itemize}
    \item $i\in \Gc$ denotes that the node $i$ belongs to $\Gc$.
    \item $(i\Rightarrow j)\in\Gc$ denotes that the directed edge $(i\rightarrow j)$ is present in $\Gc$ but {$j\rightarrow i$ is not reachable}.
    \item $(i \asymp j)\in\Gc$ means that both two nodes $(i,j)$ are in the same SCC of $\Gc$.
\end{itemize}
From the construction, for any two distinct nodes $i, j$ in the same TPG, they either satisfy $(i \Rightarrow j)$/$(j\Rightarrow i) \text{ or } (i \asymp j)$. 

\subsection{SVM Bias of Self-attention Learning}
\label{sec:graph-svm}

The main contribution of this paper is to establish the SVM equivalence that captures the optimization geometry of the next-token prediction problem. We will show that the self-attention model learnt via either \eqref{algo gd} or \eqref{algo rp} converges to the solution of an SVM defined by the {TPGs} of the underlying dataset $\data$. In particular, given  $(\Gck_k)_{k=1}^{K}$, 
we introduce the following SVM formulation:
\begin{align}\label{graph svm}\tag{Graph-SVM}
&\Wm=\arg\min_{\W}\tf{\W}\quad\\
&\text{s.t.}~~(\eb_i-\eb_j)^{\top}\W\eb_k
\begin{cases}=0\quad \forall(i\asymp j) \in\Gck\\
\geq 1\quad \forall(i\Rightarrow j)\in\Gck
\end{cases}
\ifarxiv
\text{for all}\quad k\in[K].\nn
\else
\hspace{-5pt}\forall k\in[K].\nn
\fi
\end{align}
Fix last token $\eb_k$, and consider any token IDs $i,j\in[K]$, $i\neq j$. When $(i\Rightarrow j)$, token ID $i$ has a higher \emph{priority} than $j$ and hence, the SVM problem \eqref{graph svm} aims to find a $\W$ such that $\W\eb_k$ achieves strictly higher correlation to token embedding $\eb_i$ than $\eb_j$, {that is, $\eb_i^\top\W\eb_k\geq\eb_j^\top\W\eb_k+1$, and then softmax operation will assign higher probability to the token $i$.} While if $(i\asymp j)$, there is not strict priority order between $i$ and $j$, and hence we set the correlation difference equal to zero to prevent the SVM solution $\W$ from distinguishing them. 
The existence of the solution $\Wm$ ensures the separability of tokens $i$'s from the $j$'s for all pairs $(i\Rightarrow j)\in\Gck$. Additionally, if for all $k\in[K]$, the number of SCCs\footnote{{Note that $N_k=0$ implies that within $\data$, there is not training sample whose input sequence has $\eb_k$ as its last token; or equivalently, $\datak=\emptyset$.}} $N_k\leq1$, then $\Wm=0$. 

\begin{lemma}\label{lemma feasible}
Suppose that the embedding matrix $\Eb$ is full row rank. Then, \eqref{graph svm} is feasible.
\end{lemma}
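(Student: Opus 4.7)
The plan is to construct an explicit feasible $\W$ via a two-stage procedure: first, assign a scalar "priority score" to each token respecting the SCC structure of every $\Gck$; second, interpolate these scores into a matrix $\W$ using the full row rank of $\Eb$.

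For the first stage, fix $k\in[K]$. A classical fact in graph theory is that contracting each SCC to a single node converts any finite directed graph into a DAG, which in turn admits a topological ordering. Concretely, let $\Cck_1,\dots,\Cck_{N_k}$ denote the SCCs of $\Gck$, and let $\pi^{(k)}:\{\Cck_1,\dots,\Cck_{N_k}\}\to\{1,\dots,N_k\}$ be a topological order such that whenever there is a directed path from $\Cck_a$ to $\Cck_b$ in the contracted DAG, $\pi^{(k)}(\Cck_a)>\pi^{(k)}(\Cck_b)$. Define scores $s^{(k)}(i)=\pi^{(k)}(\text{SCC of }i)$ for every $i\in\Gck$, and set $s^{(k)}(i)=0$ for nodes $i\in[K]\setminus\Gck$. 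By construction, $(i\asymp j)\in\Gck$ implies $s^{(k)}(i)=s^{(k)}(j)$ and $(i\Rightarrow j)\in\Gck$ implies $s^{(k)}(i)\geq s^{(k)}(j)+1$.

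For the second stage, collect these scores into a matrix $S\in\R^{K\times K}$ via $S_{ik}:=s^{(k)}(i)$. The goal reduces to producing $\W\in\R^{d\times d}$ with $\eb_i^\top\W\eb_k=S_{ik}$ for all $i,k\in[K]$, i.e.\ $\Eb\W\Eb^\top=S$. Since $\Eb\in\R^{K\times d}$ has full row rank (so in particular $K\leq d$), the Gram matrix $\Eb\Eb^\top$ is invertible and the right inverse $\Eb^\dagger:=\Eb^\top(\Eb\Eb^\top)^{-1}$ satisfies $\Eb\Eb^\dagger=I_K$. Taking $\W:=\Eb^\dagger S(\Eb^\dagger)^\top$ gives $\Eb\W\Eb^\top=S$, so $(\eb_i-\eb_j)^\top\W\eb_k=S_{ik}-S_{jk}$, which equals $0$ whenever $(i\asymp j)\in\Gck$ and is at least $1$ whenever $(i\Rightarrow j)\in\Gck$. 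Hence $\W$ is feasible for \eqref{graph svm}, and since the feasible set is nonempty and the objective $\tf{\W}$ is coercive, the minimizer $\Wm$ exists.

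The main obstacle is essentially conceptual rather than technical: one must verify that the SCC partition of each $\Gck$ really does descend to a DAG on the quotient, so that a consistent topological ordering is available. This is the only place where the combinatorial structure of the problem enters; once the scores are fixed, the full row rank hypothesis on $\Eb$ turns realizing them into a routine linear algebra exercise.
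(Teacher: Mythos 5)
Your proof is correct and follows essentially the same route as the paper's: assign integer priority scores to tokens via a topological ordering of the SCC-condensation of each $\Gck$, then realize the resulting score matrix as $\eb_i^\top\W\eb_k$ by lifting through a right inverse of $\Eb$ (your $\Eb^\dagger$ plays the same role as the paper's $\bar\Eb$ with $\bar\Eb\Eb^\top=\Iden$). The only cosmetic difference is that you make the $k$-dependence of the scores and the invocation of the condensation-DAG fact explicit, which the paper leaves implicit.
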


Next focusing on the nodes $i$, $j$, with $(i\asymp j)$, we introduce the following subspace definition.
\begin{definition}[Cyclic subspace]\label{def cyc subspace}
     Define cyclic subspace $\Scf$ as the span of all matrices $(\eb_i-\eb_j)\eb_k^\top$ for all $(i\asymp j)\in\Gck$ and $k\in[K]$.
\end{definition}
Note that since $\Wm$ satisfies all the ``$=0$'' constraints in \eqref{graph svm}, if \eqref{graph svm} is feasible and $\Wm\neq0$, $\Wm\perp\Scf$.

\subsection{Technical Assumptions}
\label{sec:technical-assumptions}
In what follows, we work with a few assumptions that will make the optimization landscape of the underlying learning problem more benign, and we introduce these assumptions along with their justifications.

\begin{assumption}\label{assume iden} 
For  $\forall y,k\in[K]$, $k\neq y$, $\cb_y^\top\eb_y=1$ and $\cb_y^\top\eb_{k}=0$.
\end{assumption}


This assumption essentially enforces that the rows of the prediction head $\Cb$ are aligned with the corresponding vocabulary embeddings in $\Eb$. This is a variation of the \emph{weight tying} strategy which is commonly employed in language models \citep{press2017using,vaswani2017}. It should be noted that Assumption~\ref{assume iden} implies $K\leq d$, which further establishes the feasibility of \eqref{graph svm} as demonstrated by Lemma~\ref{lemma feasible}. Given objective \eqref{erm} and a decreasing loss function $\ell$, our ideal goal is for the attention \eqref{eq:sattn-def} to output $\eb_y$ which minimizes the training risk.  
Recall that single-layer self-attention outputs a convex combination of the input tokens (cf.~\eqref{eq:sattn-def}). If tokens are linearly independent, the only way model can output {the embedding $\eb_y$ corresponding to the} target label $y$ would be if $\eb_y$ was among the input sequence. 
This motivates the following realizability assumption.
\begin{assumption}
\label{assume realizable} 
For any $(\X,y)\in\data$, the token $\eb_y$ is contained in the input sequence $\X$.
\end{assumption}

In the scenario where $(\X, y)$ is not realizable, 
self-attention would select $\eb_{\hat y}\neq\eb_y$ and the SVM formula would be established via separating $\hat y$ from the other tokens in the sequence instead of the true label $y$. Additionally, when Assumption~\ref{assume iden} holds, the model can only make a random prediction over the output labels (since any output of the self-attention model would result in the same training risk); consequently, such non-realizable examples will not play roles in optimizing $\W$, i.e. $\nabla_{\W} \ell(\cb_{y_i}^\top\X_i^\top \sft{\X_i\W\xli}) = 0$. 
\ifarxiv
\section{Global Convergence of Gradient Descent}\label{sec global gd}
\else
\section{GLOBAL CONVERGENCE OF GRADIENT DESCENT}\label{sec global gd}
\fi
In this section, we assume the log-loss function, i.e., $\ell(u)=-\log(u)$, and establish the gradient descent convergence of attention weight $\W$ via the convexity of $\Lc(\W)$. Note that although loss function $\ell$ is convex and the classification head is linear, due to the non-convexity of softmax, the convexity of $\Lc(\W)$ is not immediately clear. {Towards this, we introduce the following lemma: }

\begin{lemma} \label{lemma cvx}
    Suppose Assumptions \ref{assume iden} and \ref{assume realizable} hold and consider the log-loss $\ell(u) = -\log(u)$, then $\Lc(\W)$ is convex. Furthermore, $\Lc(\W)$ is strictly convex on $ \Scf$.
\end{lemma}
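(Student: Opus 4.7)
My plan is two-pronged: (i) exploit Assumptions \ref{assume iden}--\ref{assume realizable} to rewrite the per-example loss as a standard softmax cross-entropy over token IDs, which will immediately yield convexity; and then (ii) identify the null direction of the Hessian with the Frobenius orthogonal complement of $\Scf$, so that intersecting with $\Scf$ itself will force $\Delta\W=0$ and give strict convexity on $\Scf$.

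\textbf{Convexity.} Under Assumption~\ref{assume iden}, $\cb_{y_i}^\top\x_{it}=\mathbf{1}[x_{it}=y_i]$, so the argument inside the log in \eqref{erm} is the total softmax mass placed on positions of $\X_i$ carrying the label token $y_i$. Because the logits $(\X_i\W\xl_i)_t=\eb_{x_{it}}^\top\W\eb_{x_{iT_i}}$ depend only on the token IDs, I will group positions by the token multiplicity $n_{i,k}\geq 0$ to rewrite the $i$-th sample loss as
\[\mathrm{const}\,-\,\eb_{y_i}^\top\W\eb_{x_{iT_i}}\,+\,\log\sum_{k\in[K]}n_{i,k}\,e^{\eb_k^\top\W\eb_{x_{iT_i}}},\]
where Assumption~\ref{assume realizable} guarantees $n_{i,y_i}\geq 1$, so the log is finite. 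This is a linear function of $\W$ plus a weighted log-sum-exp of linear functions of $\W$, hence convex; averaging over $i$ preserves convexity.

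\textbf{Strict convexity on $\Scf$.} A short computation of the second directional derivative along an arbitrary $\Delta\W$ will give
\[\frac{1}{n}\sum_{i=1}^{n}\Var_{q_i}(\psi_i),\qquad \psi_i(\alpha):=\eb_\alpha^\top\Delta\W\eb_{x_{iT_i}},\]
where $q_i$ has weights $\propto n_{i,k}\,e^{\eb_k^\top\W\eb_{x_{iT_i}}}$ supported exactly on the token IDs appearing in $\X_i$. Vanishing of this sum forces $\psi_i$ to be constant on the tokens of every $\X_i$. Now each edge $(y\to\alpha)\in\Gc^{(k)}$ is witnessed by some sample in $\datak$ containing both $y$ (by Assumption~\ref{assume realizable}) and $\alpha$, so the per-sample constancy directly yields $(\eb_y-\eb_\alpha)^\top\Delta\W\eb_k=0$ for every edge of $\Gc^{(k)}$. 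Telescoping along any directed cycle through two nodes $i,j$ of a common SCC then extends this to $(\eb_i-\eb_j)^\top\Delta\W\eb_k=0$ for all $(i\asymp j)\in\Gc^{(k)}$. Reading this as a Frobenius inner product,
\[\langle(\eb_i-\eb_j)\eb_k^\top,\,\Delta\W\rangle_F=(\eb_i-\eb_j)^\top\Delta\W\eb_k=0,\]
shows that $\Delta\W$ annihilates every generator of $\Scf$, i.e.\ $\Delta\W\in\Scf^\perp$. Combined with the hypothesis $\Delta\W\in\Scf$, the self-orthogonality $\Scf\cap\Scf^\perp=\{0\}$ forces $\Delta\W=0$, so the directional Hessian is strictly positive on $\Scf$.

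The substantive step is Step~3 above: the passage from per-sample equalities to per-SCC equalities. The key observation is that each edge of $\Gc^{(k)}$ corresponds to a sample of $\datak$ whose label is present in its own sequence by realizability, so each edge delivers exactly one sample-level identity $\psi_k(\text{label})=\psi_k(\text{token})$. Concatenating these identities along a directed cycle in $\Gc^{(k)}$ yields the SCC identities that span $\Scf$, and at that point the Frobenius self-orthogonality closes the argument without requiring any spectral or Gram-matrix computation.
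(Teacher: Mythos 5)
Your argument is correct, and the core engine is the same as the paper's: exploit Assumption~\ref{assume iden} to reduce each per-sample loss to a log-sum-exp over token IDs (giving convexity), and then use the equality case of the positive-semidefiniteness of the log-sum-exp Hessian to characterize vanishing curvature directions and show they lie in $\Scf^\perp$. You package this slightly differently. The paper first changes variables by an invertible map $\W\mapsto\Eb\W\Eb^\top$ to reduce to the one-hot case $\R^{K\times K}$, then vectorizes and analyzes the explicit Hessian $\bigl((\boldsymbol{1}^\top\z)\diag\z-\z\z^\top\bigr)/(\boldsymbol{1}^\top\z)^2$ via the Cauchy--Schwarz equality condition; you instead group positions by token multiplicity and read off the second directional derivative as a variance $\Var_{q_i}(\psi_i)$, avoiding both the embedding change of variables and the explicit Gram form. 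The paper also proves the stronger statement that $\Lc$ is strictly convex on $\Scall$ (the span of \emph{all} edge matrices), which contains $\Scf$; your per-edge equalities already give $\Delta\W\perp\Scall$, so your extra telescoping step within an SCC is not strictly needed (orthogonality to a spanning set of $\Scall$ already gives orthogonality to $\Scf\subseteq\Scall$), though it is a harmless and self-contained way to land exactly on the lemma as stated. Both uses of Assumption~\ref{assume realizable} in your argument --- finiteness of the loss (so $n_{i,y_i}\geq 1$) and the fact that the label token is in the sequence so that the edge $y\to\alpha$ yields $\psi_i(y)=\psi_i(\alpha)$ --- are used in the paper's proof for the same purposes.
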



Let $(\X,y)\in\data$ be any sample and set $\gamma_t=\cb_y^\top\x_t$. Assumption~\ref{assume iden} guarantees that $\gamma_t=1$ when $x_t=y$, otherwise $\gamma_t=0$. Consider the attention output $\cb_y^\top\X^\top\sft{\X\W\xl}$ (cf.~\eqref{erm}) and let softmax probabilities be $s_t=\sft{\X\W\xl}_t$, where $\sum_{t}s_t=1$. Then, the loss of this single sample $\X$ is $\ell(\bar\gamma)$ where $\bar\gamma=\sum_{t}\gamma_ts_t=\sum_{x_t=y}s_t$.  Given log-loss, note that when $\bar\gamma\to0^+$, $-\log(\bar\gamma)$ results in the infinite loss, which suggests that, once attention weight $\W$ diverges to saturate the softmax probability, the finite training risk is achievable only when $s_t\not\to0$ for all $t$ satisfying $x_t=y$. Given different label $y$ for different input and recalling the SCC definition in Section~\ref{sec ntg}, attention selects all $\x_t$'s within the same SCC as $y$.
The following result characterizes the global directional convergence of the GD iterates to the solution of \eqref{graph svm}.

\begin{theorem}\label{thm cyclic gd} Consider a dataset $\data$ and suppose Assumptions~\ref{assume iden} and \ref{assume realizable} hold. Set loss function as $\ell(u)=-\log(u)$. 
Let $\Wm{\in\Scf^\perp}$ be the solution of \eqref{graph svm}. Starting from any $\W(0)$ with constant step size $\eta$, the algorithm \ref{algo gd} 
satisfies {$\lim_{\tau\to\infty}\tf{\W(\tau)}=\infty$ and}
\begin{align}
\qquad\lim_{\tau\rightarrow\infty}\prj_{\Scf}(\W(\tau))=\Wf.\label{gd}
\end{align}
Here $\Wf$ is the unique finite minima of the loss $\tilde{\Lc}(\W):=\lim_{R\rightarrow\infty}\Lc(\W+R\cdot\Wm)$ over $\Scf$. Additionally, if $\Wm\neq 0$, 
\[
\lim_{\tau\rightarrow\infty}\frac{\W(\tau)}{\tf{\W(\tau)}}=\frac{\Wm}{\tf{\Wm}}.
\]
Otherwise, $\prj_{\Scf^\perp}(\W(\tau))$ remains unchanged throughout the optimization.
\end{theorem}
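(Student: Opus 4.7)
The plan is to combine smooth-convex optimization on the cyclic subspace $\Scf$ with an implicit-bias (SVM) analysis on its orthogonal complement, using the decomposition $\W=\prj_{\Scf}(\W)+\prj_{\Scf^\perp}(\W)$. By Lemma~\ref{lemma cvx}, $\Lc$ is globally convex and strictly convex on $\Scf$; together with the smoothness of the softmax this gives Lipschitz $\nabla\Lc$, so standard smooth-convex GD theory yields monotone loss decrease $\Lc(\W(\tau))\downarrow\inf\Lc$ for sufficiently small $\eta$. Under Assumptions~\ref{assume iden}--\ref{assume realizable}, the per-sample loss simplifies to $-\log\big(\sum_{t:\,x_{it}=y_i}\sft{\X_i\W\xli}_t\big)$, so minimizing $\Lc$ is equivalent to pushing softmax mass onto the target-token positions.

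First, I would analyze the limiting loss $\tilde{\Lc}(\W):=\lim_{R\to\infty}\Lc(\W+R\Wm)$. For any sample $(\X,y)$ with last token $\eb_k$, the shift $R\Wm$ perturbs token-to-token logit gaps by $R(\eb_x-\eb_{x'})^\top\Wm\eb_k$, which by the \eqref{graph svm} constraints is at least $R$ whenever $x'$ lies outside the SCC of $y$ in $\Gck$ and is exactly zero whenever $x'$ lies in the same SCC as $y$. Hence, as $R\to\infty$, out-of-SCC softmax mass vanishes exponentially while in-SCC relative weights are preserved, so the limit $\tilde{\Lc}$ exists, is convex, and is strictly convex on $\Scf$ by an argument parallel to Lemma~\ref{lemma cvx} applied to the within-SCC restricted dataset. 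This yields a unique minimizer $\Wf$ of $\tilde{\Lc}$ on $\Scf$, and the same ray-monotonicity identifies $\inf_{\W}\Lc(\W)=\tilde{\Lc}(\Wf)$.

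Next, convergence $\prj_{\Scf}(\W(\tau))\to\Wf$ follows from combining monotone loss decrease of GD with the pointwise comparison $\Lc(\W(\tau))\geq\tilde{\Lc}(\prj_{\Scf}(\W(\tau)))$ (a consequence of ray-decrease along $\Wm$): every subsequential limit of $\prj_{\Scf}(\W(\tau))$ must minimize $\tilde{\Lc}$ on $\Scf$, and strict convexity singles out $\Wf$. The degenerate case $\Wm=0$ is a direct corollary: then $\Gck$ contains no $(i\Rightarrow j)$ pairs, so every per-sample gradient direction is a linear combination of outer products $(\eb_i-\eb_j)\eb_k^\top$ with $(i\asymp j)\in\Gck$, all of which lie in $\Scf$; hence $\prj_{\Scf^\perp}(\W(\tau))=\prj_{\Scf^\perp}(\W(0))$ is frozen and the theorem reduces to strictly-convex GD on the affine slice $\prj_{\Scf^\perp}(\W(0))+\Scf$.

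The main obstacle is the directional convergence $\W(\tau)/\tf{\W(\tau)}\to\Wm/\tf{\Wm}$ in the nontrivial case $\Wm\neq 0$. First, divergence $\tf{\W(\tau)}\to\infty$ is established from strict monotonicity of $R\mapsto\Lc(\W+R\Wm)$: any bounded cluster point of $\{\W(\tau)\}$ would have to be a finite minimizer of $\Lc$, contradicting $\inf\Lc$ being unattained. For the direction itself, I would extract any subsequential cluster $\widehat{\W}$ of $\W(\tau)/\tf{\W(\tau)}$, verify $\widehat{\W}\in\Scf^\perp$ from the finite $\Scf$-convergence above, and argue that $\widehat{\W}$ must satisfy the \eqref{graph svm} inequality constraints---any violation $(\eb_i-\eb_j)^\top\widehat{\W}\eb_k<0$ for some $(i\Rightarrow j)\in\Gck$ would leave positive softmax mass on an out-of-SCC token in the corresponding sample and produce a loss floor strictly above $\tilde{\Lc}(\Wf)$. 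Following the Soudry--Hoffer--Nacson--Gunasekar--Srebro and Ji--Telgarsky implicit-bias playbook, adapted to softmax attention via the orthogonality $\Wm\perp\Scf$ which decouples the SVM direction from within-SCC dynamics, a direct comparison of asymptotic loss decay along $\tf{\W(\tau)}\widehat{\W}$ versus $\tf{\W(\tau)}\Wm/\tf{\Wm}$ together with the minimum-norm characterization of $\Wm$ forces $\widehat{\W}=\Wm/\tf{\Wm}$, completing the proof.
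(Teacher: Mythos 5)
Your overall plan mirrors the paper's: decompose along $\Scf\oplus\Scf^\perp$, use the convexity from Lemma~\ref{lemma cvx} (and Lipschitzness) for a descent lemma, identify $\tilde{\Lc}$ with the within-SCC risk, get a unique finite minimizer $\Wf$ on $\Scf$, and then treat the $\Scf^\perp$ dynamics as an implicit-bias problem. The treatment of the $\Wm=0$ case and the identification of $\tilde{\Lc}$ are correct and essentially the paper's. One minor omission in the $\Scf$-part: before extracting subsequential limits of $\prj_{\Scf}(\W(\tau))$, you need to argue they are bounded; the paper gets this from a coercivity argument (a divergent $\tf{\prj_{\Scf}(\W)}$ makes the within-SCC loss blow up, cf.\ Lemma~\ref{lemma finite wfin}), which your sketch does not state.

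The genuine gap is in the directional convergence. Your argument establishes that any cluster direction $\widehat{\W}$ of $\W(\tau)/\tf{\W(\tau)}$ lies in $\Scf^\perp$ and satisfies the \emph{non-negativity} $(\eb_i-\eb_j)^\top\widehat{\W}\eb_k\geq 0$ for $(i\Rightarrow j)\in\Gck$ (any violation would leave a loss floor above $\inf\Lc$). But the cone of such non-negative separating directions is, in general, far larger than the ray spanned by $\Wm$: you still must show that gradient descent selects the min-norm/max-margin element of that cone, and the closing sentence (``a direct comparison of asymptotic loss decay\dots together with the minimum-norm characterization of $\Wm$ forces $\widehat{\W}=\Wm/\tf{\Wm}$'') asserts this without an argument. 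The paper's proof supplies exactly the missing step via Lemma~\ref{lemma smaller loss}: for any $\pi>0$ and sufficiently large $\tf{\prj_{\Scf^\perp}(\W)}$, one has $\Lc(\W)\geq\Lc\bigl((1+\pi)\tf{\prj_{\Scf^\perp}(\W)}\cdot\Wm/\tf{\Wm}+\prj_{\Scf}(\W)\bigr)$. Combined with convexity this yields the correlation inequality $\langle\prj_{\Scf^\perp}(\nabla\Lc(\W)),\,\prj_{\Scf^\perp}(\W)-(1+\pi)\tf{\prj_{\Scf^\perp}(\W)}\Wm/\tf{\Wm}\rangle\geq 0$, which is then telescoped along GD iterates to force $\langle\W^\perp(\tau)/\tf{\W^\perp(\tau)},\Wm/\tf{\Wm}\rangle\to 1$. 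That comparison lemma is the actual engine of the max-margin selection, and your proposal as written does not construct or replace it, so the directional-convergence claim is not yet proved.
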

This theorem demonstrates the directional convergence of attention weight $\W$, and the limits imply the decomposition $\W(\tau)\approx C(\tau)\cdot\Wm+\Wf$ for an appropriate $C(\tau)>0$ with $\lim_{\tau\to\infty}C(\tau)=\infty$. {Note that, for directional convergence to happen, we need $\Wm\neq0$ which happens if and only if \eqref{graph svm} has $``\geq1''$ constraints. That is, the token graph contains a strict priority order.} {This is consistent with our Theorem~\ref{thm informal} where $\Wm$ corresponds to the hard retrieval component ($\Whard$) that selects the high-priority tokens and $\Wf$ is the soft composition component ($\Wsoft$) that determines the softmax probability assignments among these selected tokens.} 
Importantly, this is a global convergence result thanks to the convexity of the optimization problem, which is enabled by the log likelihood optimization combined with Assumption~\ref{assume iden}. 
In Section~\ref{sec:local-gd}, we will demonstrate that global convergence of gradient descent does not hold in general.


\begin{figure*}[!tb]
\centering

\begin{minipage}{0.67\linewidth}
\captionsetup{type=figure}
\begin{subfigure}{0.5\linewidth}
  \begin{tikzpicture}
  \node at (0,0){\includegraphics[scale=0.23, trim={1.3cm 1.3cm 0 0}, clip]{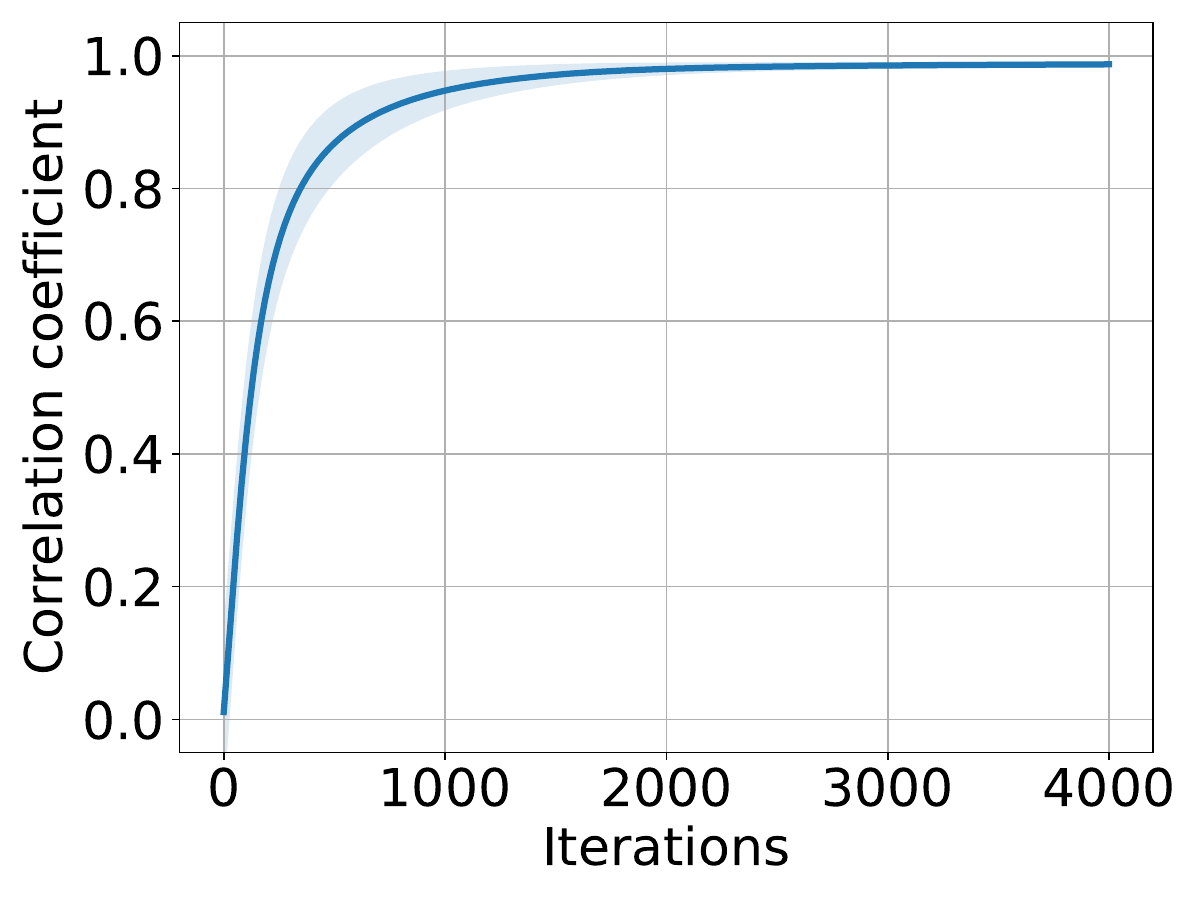}};
  \node at (0.2,-1.8) {\scriptsize{Iterations ($\tau$)}};
  \node[rotate=90] at (-2.45,0) {\scriptsize{Correlation coefficient}};
  \end{tikzpicture}
  \subcaption{\scriptsize{Evolution of $\frac{\W(\tau)}{\tf{\W(\tau)}}\to\frac{\Wm}{\tf{\Wm}}$}}\label{fig cyc Wm}
\end{subfigure}
\begin{subfigure}{0.5\linewidth}
  \begin{tikzpicture}
  \node at (0,0){\includegraphics[scale=0.23, trim={1.3cm 1.3cm 0 0}, clip]{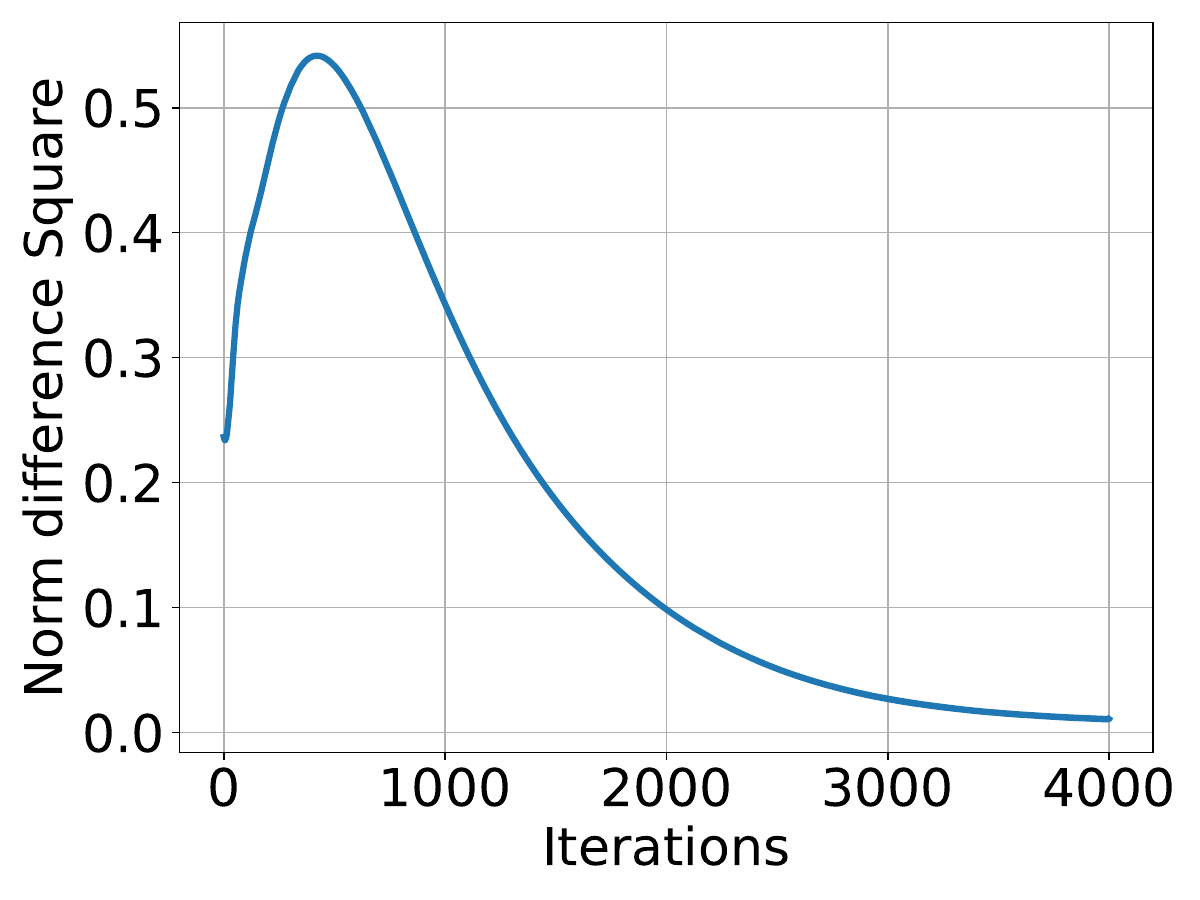}};
  \node at (0.2,-1.8) {\scriptsize{Iterations $(\tau)$}};
  \node[rotate=90] at (-2.45,0) {\scriptsize{$\tf{\hWf-\Wf}$}};
  \end{tikzpicture}
  \vspace{3pt}
  \subcaption{\scriptsize{Evolution of $\prj_{\Scf}(\W(\tau))\to\Wf$}}\label{fig cyc Wf}
\end{subfigure}
  \caption{\small{GD convergence of attention weight $\W$ when training with general dataset. 
  \textbf{(a)} shows the directional convergence of $\W(\tau)$; while \textbf{(b)} presents the convergence of $\prj_{\Scf}(\W(\tau))$.
  }}\label{fig cyc}
\end{minipage}
\hspace{5pt}
\begin{minipage}{0.3\linewidth}
  \begin{tikzpicture}
  \node at (0,0){\includegraphics[scale=0.23, trim={1.3cm 1.3cm 0 0}, clip]{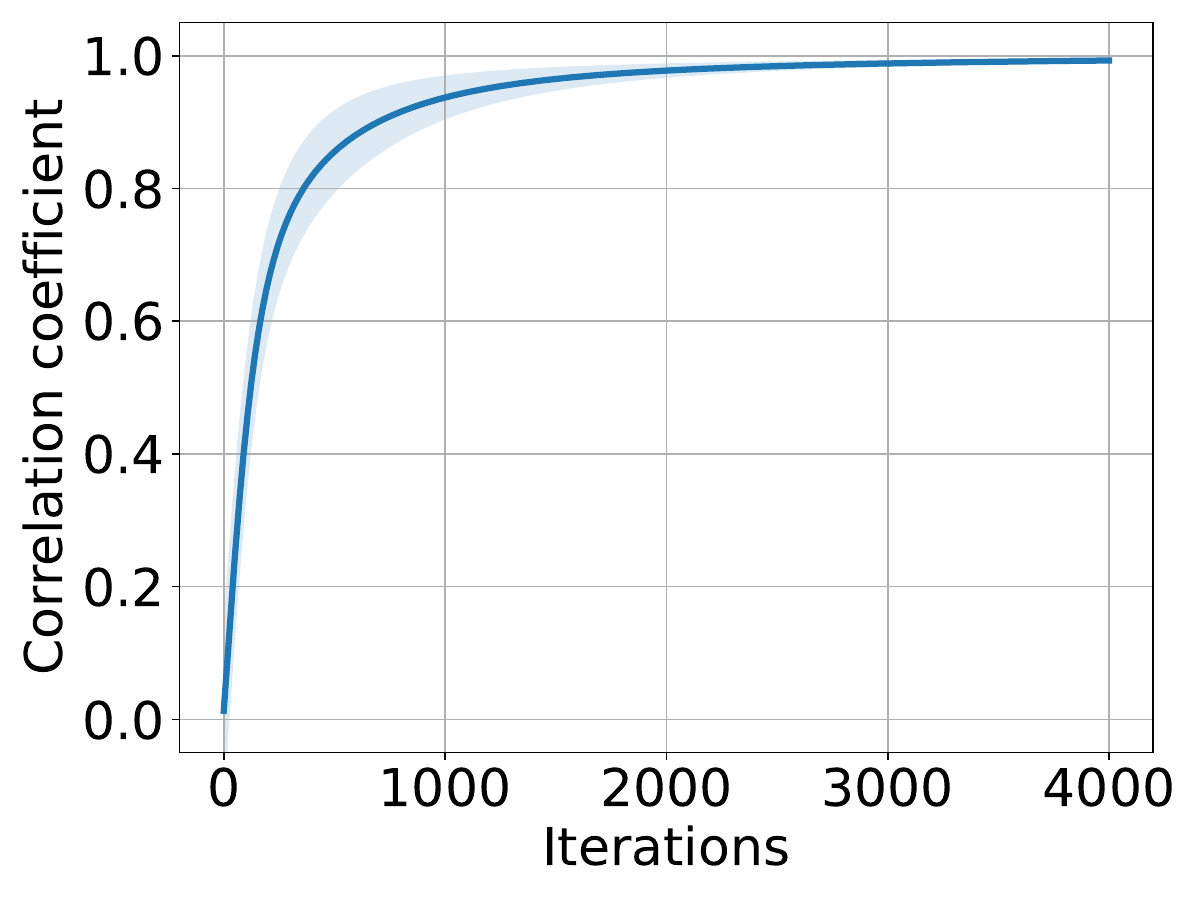}};
  \node at (0.2,-1.8) {\scriptsize{Iterations ($\tau$)}};
  \node[rotate=90] at (-2.45,0) {\scriptsize{Correlation coefficient}};
  \end{tikzpicture}  
\caption{\small{GD convergence of attention weight $\W$ when training with acyclic dataset (Def.~\ref{def acyc}). Correlation coefficient between $\W(\tau)$ and $\Wm$ are presented. 
}}\label{fig acyc}

\end{minipage}
\hfill
\end{figure*}

To illustrate Theorem~\ref{thm cyclic gd}, we conduct experiments with results presented in Figure~\ref{fig cyc}. We create embedding tables with $K=6,~d=8$ and randomly generate dataset with $n=6,~T=4$. Here we choose step size $\eta=0.01$ and perform the normalized gradient descent method to accelerate the increase in the norm of attention weight, so that softmax can easily saturate. Specifically, we update attention weight $\W$ via $\W(\tau+1)=\W(\tau)-\eta\frac{\nabla\Lc(\W(\tau))}{\tf{\nabla\Lc(\W(\tau))}}.$ 
At each iteration $\tau$, correlation coefficient is computed by $\li\W(\tau),\Wm\ri/(\tf{\W(\tau)}\tf{\Wm})$, and results averaged over $100$ random instances are displayed in Fig.~\ref{fig cyc Wm}, which end in correlation $\approx0.987$ after training with $4000$ iterations. In addition to the directional convergence of $\W(\tau)$, we also verify the convergence of finite component by tracking the matrix distance $\tf{\hWf-\Wf}$ where $\hWf=\prj_{\Scf}(\W(\tau))$, and results are displayed in Fig.~\ref{fig cyc Wf}, which obtains $<0.01$ final distance. Both results validate our Theorem~\ref{thm cyclic gd}.



\ifarxiv
\section{Implicit Bias of Self-attention}\label{sec rp}
\else
\section{IMPLICIT BIAS OF SELF-\\ATTENTION}\label{sec rp}
\fi
In Section~\ref{sec global gd}, we have discussed that gradient descent with log loss guarantees the global convergence. To proceed, in this section, we discuss the implicit bias of attention via analysis of regularization path (RP) as employed in \ref{algo rp} and identify the implicit bias of self-attention on more general next-token prediction problems. 




\begin{theorem}\label{thm general bias} 
Consider any dataset $\data$ and suppose Assumptions~\ref{assume iden} and \ref{assume realizable} hold. 
Additionally, assume loss $\ell:\R\rightarrow\R$ is strictly decreasing and $|\ell'|$ is bounded. 
Let $\Wm{\in\Scf^\perp}$ be the solution of  \eqref{graph svm}  and suppose $\Wm\neq0$.   
Then the solution of regularization path \ref{algo rp} obeys 
\begin{align*}
\lim_{R\rightarrow\infty}\frac{\bar\W_R}{R}= \frac{\Wm}{\tf{\Wm}}~~\text{and}~~\lim_{R\rightarrow\infty}\prj_{\Scf}(\bar\W_R)\in \Wcf.
\end{align*}
Here {$\Wcf=\arg\min_{\W\in\Scf}\lim_{R\rightarrow\infty}\Lc(\W+R\cdot\Wm)$} and we assume that $\Wcf$ \yl{is a bounded set}.
\end{theorem}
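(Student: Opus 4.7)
The plan is to exploit the saturation structure of softmax as $\tf{\W}$ grows along $\Wm$, and leverage the minimum-norm optimality of $\Wm$ in \eqref{graph svm} together with strict monotonicity and bounded $|\ell'|$ of $\ell$.

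\emph{Limiting loss.} First I would establish that
\[
\tilde{\Lc}(V) := \lim_{R\to\infty}\Lc\!\left(V + R\cdot\tfrac{\Wm}{\tf{\Wm}}\right)
\]
exists for every $V$ and depends only on $\prj_{\Scf}(V)$. For each $(\X_i,y_i)\in\datak$, the ``$\geq 1$'' margin constraints of \eqref{graph svm} force the softmax mass on every input token outside the top available SCC of $\Gck$ (the SCC that contains $y_i$ and intersects $\X_i$) to decay at rate $e^{-cR}$; because $\Wm\in\Scf^\perp$ and intra-SCC pair differences lie in $\Scf$, the softmax distribution restricted to the top SCC is governed entirely by $\prj_{\Scf}(V)$. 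Hence the surviving loss is a continuous function of $\prj_{\Scf}(V)$ whose minimum over $\Scf$ is attained on $\Wcf$.

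\emph{Directional convergence.} I would then proceed by sandwich. Upper bound: for any $W_0\in\Wcf$ (bounded by hypothesis), the matrix $W_0+\sqrt{R^2-\tf{W_0}^2}\cdot \Wm/\tf{\Wm}$ is feasible for \eqref{algo rp} once $R>\tf{W_0}$, giving $\Lc(\bar\W_R)\to \min_{\Scf}\tilde\Lc$. Lower bound: suppose along a subsequence $\bar\W_{R_j}/R_j\to\hat W$ with $\tf{\hat W}\leq 1$ and $\hat W\neq \Wm/\tf{\Wm}$. Either $\hat W$ satisfies all \eqref{graph svm} constraints rescaled by $1/\tf{\Wm}$ (the margin realized by $\Wm/\tf{\Wm}$ itself), in which case the minimum-norm optimality of $\Wm$ forces $\hat W=\Wm/\tf{\Wm}$, contradiction; or some $(i\Rightarrow j)\in\Gck$ constraint is violated, so along the subsequence a non-top-SCC token retains non-vanishing softmax mass. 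Since $\cb_{y_i}^\top \eb_x=0$ for $x\neq y_i$ (Assumption~\ref{assume iden}) and $\ell$ is strictly decreasing, this contributes a strictly positive loss gap on that sample, contradicting the upper bound. Thus $\bar\W_R/R\to\Wm/\tf{\Wm}$.

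\emph{Finite component and main obstacle.} Writing $\bar\W_R = a_R\cdot \Wm/\tf{\Wm}+\tilde W_R$ with $\tilde W_R\perp \Wm$ and $a_R\to\infty$, the upper bound $\Lc(\bar\W_R)\to \min_{\Scf}\tilde\Lc$, together with continuity of $\tilde\Lc$ in its $\Scf$-projection, implies that any subsequential limit of $\prj_{\Scf}(\bar\W_R)$ attains this minimum and hence lies in $\Wcf$; boundedness of $\Wcf$ prevents escape to infinity, so the claimed limit set is $\Wcf$. The main technical obstacle is the quantitative step in the lower bound: converting a violated \eqref{graph svm} margin into a uniform positive loss gap uniformly in $R_j$. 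Bounded $|\ell'|$ will be used to upper-bound the loss improvement attainable by extra non-$\Wm$ directions (so that any ``wasted'' budget in $\hat W$ cannot be compensated), while strict monotonicity of $\ell$ keeps the violation gap strictly positive whenever lower-priority softmax mass survives. A secondary subtlety is handling samples whose top SCC is a singleton containing $y_i$: such samples contribute vanishing loss in the limit and impose no constraint on $\prj_{\Scf}(V)$, but must still be bookkept carefully when identifying $\Wcf$.
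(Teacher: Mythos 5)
Your high-level plan is essentially the same as the paper's: show the loss converges to a finite limit along the $\Wm$ direction, prove directional convergence by contradiction against a margin violation, and then locate the finite $\Scf$-component via the limiting-loss function. However, the lower-bound step in the sandwich argument contains a genuine gap.

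The problem is in the dichotomy. Suppose $\bar\W_{R_j}/R_j\to\hat W$ with $\hat W\in\Scf^\perp$, $\tf{\hat W}\leq 1$, and $\hat W$ satisfies every $(i\Rightarrow j)\in\Gck$ constraint with a \emph{strictly positive} but \emph{suboptimal} margin $\mu$ with $0<\mu<1/\tf{\Wm}$. Then no non-top-SCC token ``retains non-vanishing softmax mass''---those probabilities still vanish, merely at the slower rate $e^{-\mu R_j}$---and the limiting loss $\lim_j\Lc(\bar\W_{R_j})$ equals $\min_{\Scf}\tilde\Lc$ just as for the $\Wm$ direction. So your claimed ``strictly positive loss gap on that sample, contradicting the upper bound'' does not materialize: both sequences have the same limit, and comparing limiting losses cannot tell $\hat W$ apart from $\Wm/\tf{\Wm}$. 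You flag this as ``the main technical obstacle'' but then appeal to a ``uniform positive loss gap,'' which is precisely what fails here.

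The paper's resolution (in the proof of Lemma~\ref{lemma reg orth dir}, Step~3) is a finite-$R$ rate comparison rather than a limit comparison: using the bounded derivative bounds $\cl\leq -\ell'\leq\ch$, it shows
\[
\Lc(R\Theta\W'+\W^\pl)-\Lc_\st^{\W^\pl}\;\gtrsim\; e^{-(1-\epsilon)R\Theta},\qquad
\Lc(R\Theta\Wm+\W^\pl)-\Lc_\st^{\W^\pl}\;\lesssim\; e^{-R\Theta},
\]
so for $R$ past an explicit threshold $R_\epsilon$ the candidate $R\Theta\Wm+\W^\pl$ is \emph{strictly} better than the supposed minimizer, contradicting the definition of $\bar\W_R$ as the argmin over the $R$-ball. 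The contradiction is obtained at each large fixed $R$, not in the limit. To repair your proof you would need to replace ``bounded away from the limiting loss'' with exactly this finite-$R$ argument, tracking the exponential rates of the excess loss in the two directions. (A secondary issue: your dichotomy should also explicitly rule out a surviving $\Scf$-component of $\hat W$; the paper does this by fixing $\W^\pl$ inside Lemma~\ref{lemma reg orth dir} and then appealing to Lemma~\ref{lemma Lcb eq} and the boundedness of $\Wcf$.)
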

Here, we allow more general loss function $\ell$ which is different from the log-loss employed in Section~\ref{sec global gd}, and the ERM problem (cf. \eqref{erm}) is not guaranteed to be convex.  

\subsection{Acyclic Dataset}
\label{sec:acyclic-rp}
Below, {in contrast}, we introduce the concept of acyclic dataset which implies that the next-token prediction task always encounters a strict priority order among tokens {within each TPG}. This corresponds to the setting where all SCCs $((\Cck_i)_{i=1}^{N_k})_{k=1}^K$ are all singletons; or equivalently, 
$\bdata=\emptyset$. 
\begin{definition}[Acyclic dataset]\label{def acyc} We call $\data$ acyclic if all of its {TPGs} are directed acyclic graphs.
\end{definition}
For an acyclic dataset, there are not $i$, $j\in[K]$ satisfying $(i\asymp j)\in\Gck$, for all $k\in[K]$. Thus, SVM formulation \eqref{graph svm} reduces to the following 
simpler form:
\begin{align}\label{acyc svm}\tag{Acyc-SVM}
&\Wm=\arg\min_{\W}\tf{\W}\quad\\
&\text{s.t.}\quad (\eb_i-\eb_j)^\top\W\eb_k\geq 1\quad\forall(i\Rightarrow j)\in\Gck, ~k\in[K].\nn
\end{align}

We next make the following assumption on the linear head. Notably, Assumption~\ref{assume iden} is a special case of Assumption~\ref{assume relax}.
\begin{assumption}\label{assume relax} For $\forall y\in[K]$, $\arg\max_{k\in[K]}\cb_y^\top\eb_k=y$.
\end{assumption}

\begin{lemma}\label{lemma opt risk}
    Consider acyclic dataset $\data$ per Def.~\ref{def acyc} and suppose Assumptions~\ref{assume realizable} and \ref{assume relax} hold. Additionally, assume loss function $\ell$ is strictly decreasing and $|\ell'|$ is bounded. Then for any finite $\W\in\R^{d\times d}$, training risk obeys $\Lc(\W)>\Lc_\st:=\frac{1}{n}\sum_{i=1}^n\ell(\cb_{y_i}^\top\eb_{y_i})$. 
    Additionally, if \eqref{acyc svm} is feasible, then {for any $\W$ that satisfies the constraints in \eqref{acyc svm}}, $\lim_{R\to\infty}\Lc(R\cdot\W)=\Lc_\st$. 
\end{lemma}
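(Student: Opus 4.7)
My plan is to analyze, for each training sample $(\X_i,y_i)$, the self-attention output as a convex combination of token embeddings, and exploit the strict dominance of $\eb_{y_i}$ guaranteed by Assumption~\ref{assume relax}. Fix a sample $(\X,y)$ with last token $\xl=\eb_k$, and write
\[
\cb_y^\top f_{\W}(\X)=\sum_{t=1}^{T}s_t\,\cb_y^\top\eb_{x_t},\qquad s_t=\sft{\X\W\xl}_t>0,\quad\sum_{t}s_t=1.
\]
Assumption~\ref{assume relax} gives $\cb_y^\top\eb_y>\cb_y^\top\eb_j$ for every $j\neq y$, so the convex combination above is at most $\cb_y^\top\eb_y$, with equality only when $s_t=0$ for every $t$ with $x_t\neq y$. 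Since $\ell$ is strictly decreasing, the per-sample loss is at least $\ell(\cb_y^\top\eb_y)$.

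For the first claim, I would invoke the fact that any finite $\W$ yields strictly positive softmax entries. By Assumption~\ref{assume realizable}, $\eb_y$ appears in $\X$; in the nondegenerate case there is also at least one non-$y$ token in some training sequence (otherwise $\Lc\equiv\Lc_\st$ trivially), and for that sample the inequality $\cb_y^\top f_{\W}(\X)<\cb_y^\top\eb_y$ is strict. Averaging over $\data$ then yields $\Lc(\W)>\Lc_\st$.

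For the second claim, suppose $\W$ satisfies the constraints of \eqref{acyc svm}. For any sample $(\X,y)$ with last token $\eb_k$ and any $x_t\neq y$ in $\X$, the edge $y\to x_t$ is added to $\Gck$ by construction, while the reverse path $x_t\to\cdots\to y$ is excluded because $\Gck$ is a DAG; hence $(y\Rightarrow x_t)\in\Gck$, and the SVM constraint gives $\eb_y^\top\W\eb_k\geq\eb_{x_t}^\top\W\eb_k+1$. Replacing $\W$ by $R\W$ turns this gap into $R$, so for any non-$y$ token $s_t(R)\leq T\exp(-R)$, whereas the $y$-tokens (which exist by Assumption~\ref{assume realizable}) share the remaining mass. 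Consequently $\cb_y^\top f_{R\W}(\X)\to\cb_y^\top\eb_y$ as $R\to\infty$. Continuity of $\ell$, which follows from the uniform bound on $|\ell'|$, then gives pointwise convergence of every per-sample loss, and averaging yields $\lim_{R\to\infty}\Lc(R\W)=\Lc_\st$.

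The only potentially delicate point is the uniform-in-$R$ decay of non-$y$ softmax entries across all training samples, but this is immediate: every non-$y$ token in every sequence contributes an SVM constraint with margin at least $1$, so no finer estimate is required. The remainder consists of elementary softmax bookkeeping together with the definition of $\Lc_\st$.
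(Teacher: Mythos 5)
Your proposal is correct and matches the paper's proof in essence: the first part uses that any finite $\W$ produces strictly positive softmax weights so the convex-combination score strictly undershoots $\cb_y^\top\eb_y$, and the second part uses acyclicity to force $(y\Rightarrow x_t)\in\Gck$ for every non-label token, so the scaled \eqref{acyc svm} margin drives those softmax masses to zero and the bounded $|\ell'|$ yields the limit of the loss. You make the DAG argument and the degenerate all-label corner case a bit more explicit than the paper does, but the underlying route is the same.
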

Assumption~\ref{assume relax} and Lemma~\ref{lemma opt risk} ensure that 
the best way for attention to make a correct prediction on class $k$ is to output the vector $\eb_k$, i.e., $f_{\W}(\X) = \eb_k$.
The next theorem states the directional bias of self-attention on the acyclic dataset towards the solution of \eqref{acyc svm}.


\begin{theorem}\label{thm acyc bias} 
Suppose $\data$ is acyclic per Definition~\ref{def acyc} and Assumptions \ref{assume realizable} and \ref{assume relax} hold. Additionally, assume loss $\ell:\R\rightarrow\R$ is strictly decreasing and $|\ell'|$ is bounded. Suppose \eqref{acyc svm} is feasible with $\Wm$ denoting its solution. Then, Algorithm~\ref{algo rp} satisfies
\[
\lim_{R\to\infty}\frac{\bar\W_R}{R}=\frac{\Wm}{\tf{\Wm}}.
\]
\end{theorem}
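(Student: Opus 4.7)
The plan is to follow the classical implicit-bias template for regularization-path minimizers, adapted to our graph-SVM geometry. The argument hinges on comparing the exponential decay rates of $\Lc(\bar\W_R)-\Lc_\st$ and $\Lc(R\Wm/\tf{\Wm})-\Lc_\st$, both of which converge to zero by Lemma~\ref{lemma opt risk}.

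\textbf{Step 1 (loss convergence and norm divergence).} The matrix $R\Wm/\tf{\Wm}$ is feasible for \eqref{algo rp}, and since $\Wm$ satisfies the \eqref{acyc svm} constraints with margin at least $1$, Lemma~\ref{lemma opt risk} yields $\Lc(R\Wm/\tf{\Wm})\to\Lc_\st$. Optimality of $\bar\W_R$ combined with the strict lower bound $\Lc(\W)>\Lc_\st$ from the same lemma sandwiches $\Lc(\bar\W_R)\to\Lc_\st$. Non-attainability of $\Lc_\st$ at any finite $\W$ then forces $\tf{\bar\W_R}\to\infty$: otherwise some subsequence of $\bar\W_R$ would be bounded and, by compactness, converge to a finite $\W^*$ with $\Lc(\W^*)=\Lc_\st$ by continuity of $\Lc$, contradicting Lemma~\ref{lemma opt risk}.

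\textbf{Step 2 (rate comparison -- the main obstacle).} Using $|\ell'|\le L$ and the fact that $\Wm$ has worst SVM margin exactly $1/\tf{\Wm}$, the softmax mass on every non-label token of the competitor $R\Wm/\tf{\Wm}$ decays like $\exp(-R/\tf{\Wm})$, giving the upper bound $\Lc(R\Wm/\tf{\Wm})-\Lc_\st\lesssim\exp(-R/\tf{\Wm})$. For the reverse direction, strict monotonicity of $\ell$ at the per-sample target prediction $\cb_{y_i}^\top\eb_{y_i}$ provides a positive lower bound of the form $\ell(\cb_{y_i}^\top\eb_{y_i}-\varepsilon)-\ell(\cb_{y_i}^\top\eb_{y_i})\ge c\varepsilon$ for small $\varepsilon>0$; combined with Assumptions~\ref{assume realizable} and \ref{assume relax} and the acyclicity of $\data$ (which identifies every non-label token in a sample with an \eqref{acyc svm} constraint), this yields $\Lc(\bar\W_R)-\Lc_\st\gtrsim\exp(-\tf{\bar\W_R}\cdot\gamma_{\min}(\bar\W_R/\tf{\bar\W_R}))$, where for unit-norm $\W'$, $\gamma_{\min}(\W')$ denotes the smallest margin $(\eb_i-\eb_j)^\top\W'\eb_k$ over all constraints $(i\Rightarrow j)\in\Gck$. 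Combining these with the optimality inequality $\Lc(\bar\W_R)\le\Lc(R\Wm/\tf{\Wm})$ yields $\tf{\bar\W_R}\cdot\gamma_{\min}(\bar\W_R/\tf{\bar\W_R})\gtrsim R/\tf{\Wm}$. The technical delicacy is ensuring the lower bound on $\ell$-drop near the target is genuinely linear (this uses strict monotonicity together with boundedness of $|\ell'|$) and verifying that the softmax asymptotics isolate exactly the worst-margin term.

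\textbf{Step 3 (identify the direction).} For every unit-norm matrix $\W'$, $\gamma_{\min}(\W')\le 1/\tf{\Wm}$ by the definition of $\Wm$ as the minimum-norm \eqref{acyc svm} solution. Combined with Step~2 and the trivial bound $\tf{\bar\W_R}\le R$, this forces both $\tf{\bar\W_R}/R\to 1$ and $\gamma_{\min}(\bar\W_R/\tf{\bar\W_R})\to 1/\tf{\Wm}$. The max-margin unit-norm direction is unique -- namely $\Wm/\tf{\Wm}$ -- so any subsequential limit of $\bar\W_R/\tf{\bar\W_R}$ must equal $\Wm/\tf{\Wm}$; hence $\bar\W_R/\tf{\bar\W_R}\to\Wm/\tf{\Wm}$. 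Putting the two limits together, $\bar\W_R/R=(\tf{\bar\W_R}/R)\cdot(\bar\W_R/\tf{\bar\W_R})\to\Wm/\tf{\Wm}$, which is the claim.
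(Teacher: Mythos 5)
Your argument is correct and matches the paper's strategy: both prove norm divergence from non-attainability of $\Lc_\st$ (Lemma~\ref{lemma opt risk}), then compare exponential decay rates of $\Lc(\bar\W_R)-\Lc_\st$ and $\Lc(R\Wm/\tf{\Wm})-\Lc_\st$ via softmax asymptotics and a two-sided bound on $-\ell'$, and conclude by uniqueness of the max-margin direction. The paper packages the final step as a direct contradiction (a $\delta$-deviation in direction forces an $\epsilon$-margin deficit, hence a strictly larger loss), whereas you extract $\gamma_{\min}$ explicitly and derive both $\tf{\bar\W_R}/R\to 1$ and $\gamma_{\min}(\bar\W_R/\tf{\bar\W_R})\to 1/\tf{\Wm}$ before invoking uniqueness; these are equivalent packagings of the same estimates. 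One small caveat you correctly flag but do not resolve: the linear lower bound $\ell(u-\varepsilon)-\ell(u)\geq c\varepsilon$ does not follow from strict monotonicity plus $|\ell'|$ bounded above; it needs $-\ell'$ bounded away from zero on the relevant compact range of prediction values, which is what the paper implicitly assumes when it writes $\cl\leq -\ell' \leq \ch$ with $\cl>0$.
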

{Recall that a dataset being acyclic implies that there is strict priority order among all tokens in each {TPG}.}
Theorem~\ref{thm acyc bias} establishes the implicit bias of self-attention model for next-token prediction problem 
{in the presence of such strict priority order.}
It demonstrates that {once the SVM problem \eqref{acyc svm} is feasible, the regularized path of optimizing \eqref{erm} converges directionally toward it solution $\Wm$.} 

Following the same implementation setting as in Section~\ref{sec global gd}, in Fig.~\ref{fig acyc}, we again conduct $100$ trials but with randomly generated acyclic dataset $\data$ under the setting of $K=d=8,n=4$ and $T=6$. The results averaged over $100$ random instances are presented in Fig.~\ref{fig acyc} with correlation coefficient exceeds $0.99$ after training with $4000$ iterations. 
{Note that in these experiments, log-loss is employed as loss function and Assumption~\ref{assume iden} is satisfied which guarantee the convexity of $\Lc(\W)$ following Lemma~\ref{lemma cvx} and hence, the connection between \ref{algo gd} and \ref{algo rp} is built by \cite{ji2020gradient}.}

%


\begin{figure*}[!tb]
\vspace{-7pt}
\begin{minipage}{.49\linewidth}
\centering
\captionsetup{type=figure}
\begin{subfigure}{0.5\linewidth}
  \begin{tikzpicture}
  \node at (0,0){\includegraphics[width=0.9\linewidth, trim={1.3cm 1.3cm 0 0}, clip]{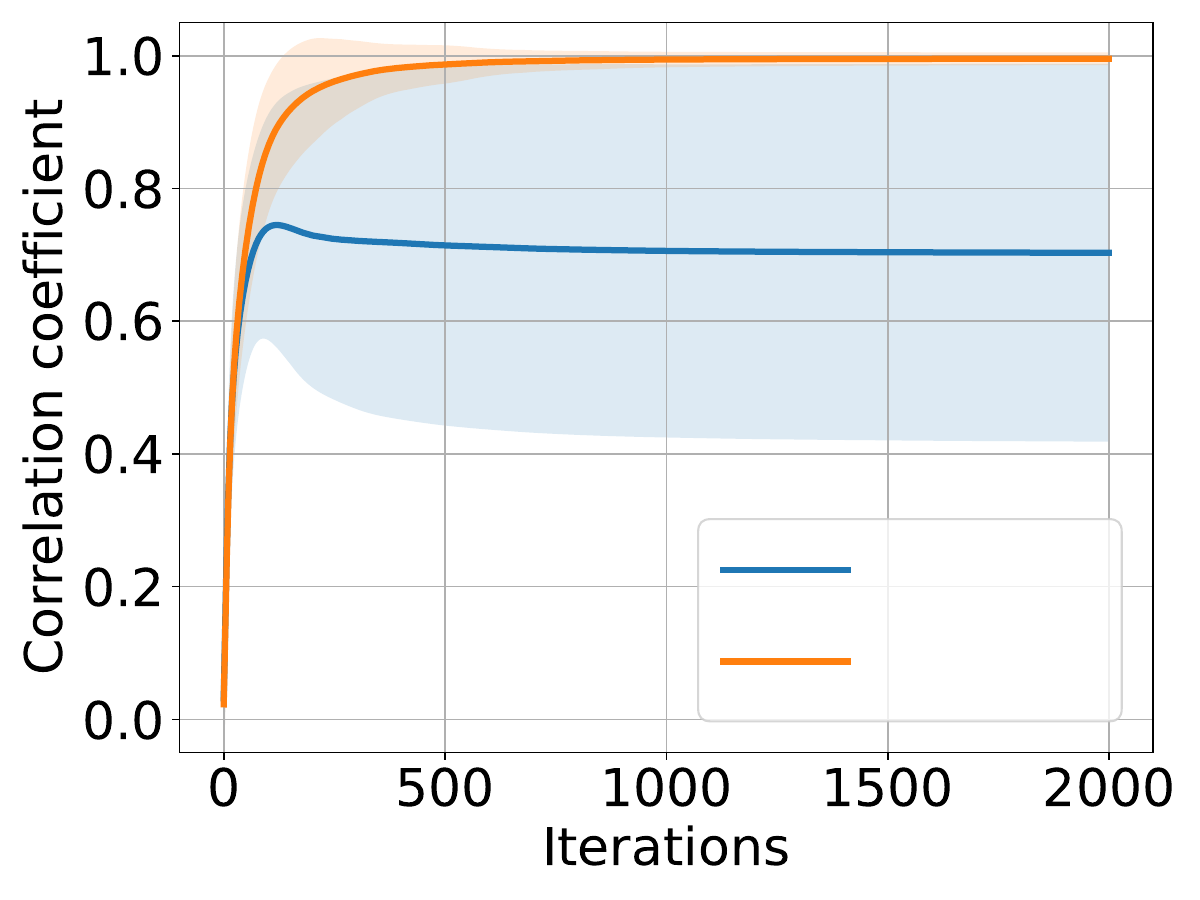}};
  \node at (0.2,-1.5) {\tiny{Iterations ($\tau$)}};
  \node[rotate=90] at (-2.,0) {\tiny{Correlation coefficient}};
  \node[right] at (.65,-0.52) {\tiny{Global}};
  \node[right] at (.7,-0.82) {\tiny{Local}};
  \end{tikzpicture}
  \subcaption{\scriptsize{$\frac{\W(\tau)}{\tf{\W(\tau)}}\to\frac{\tWm}{\tf{\tWm}}$}}\label{fig local ls Wm}
\end{subfigure}
\begin{subfigure}{0.5\linewidth}
  \begin{tikzpicture}
  \node at (0,0){\includegraphics[width=0.9\linewidth, trim={1.3cm 1.3cm 0 0}, clip]{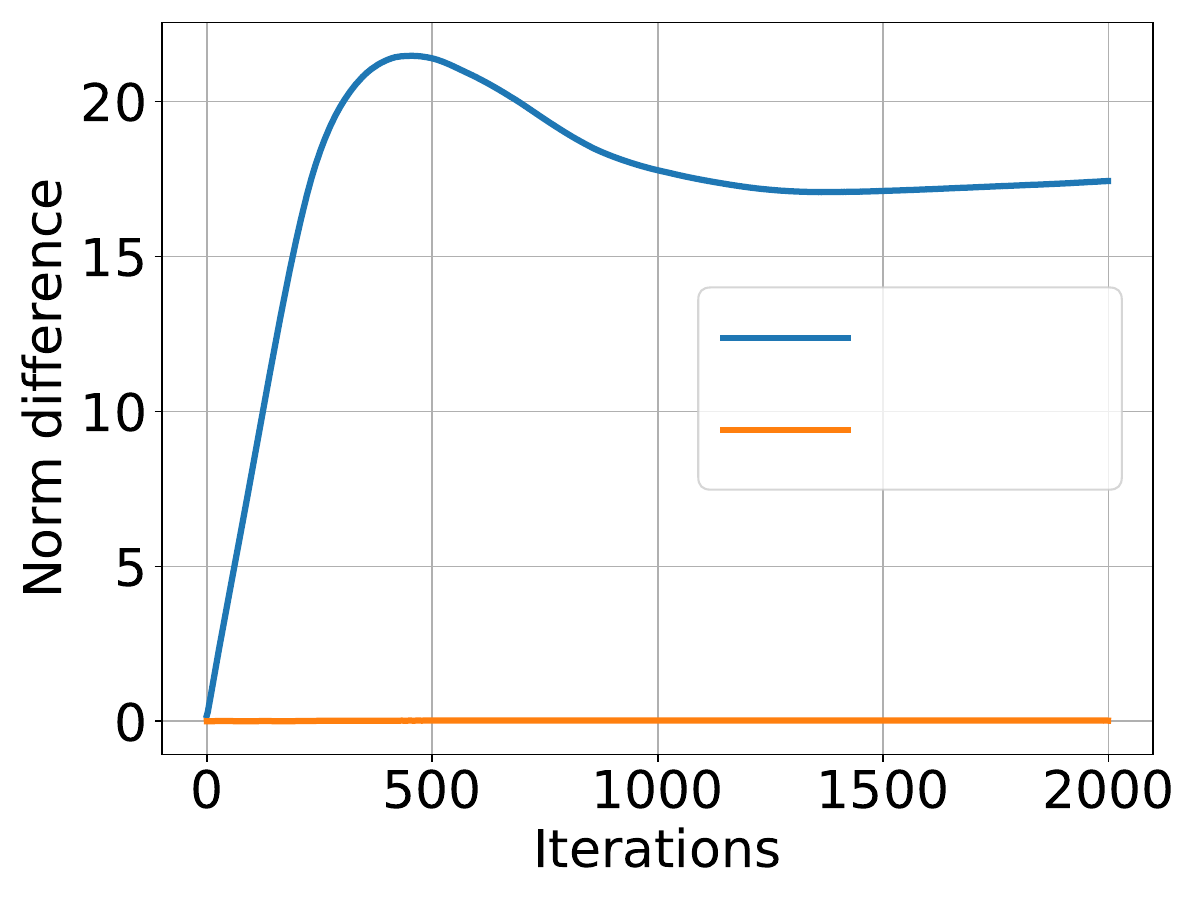}};
  \node at (0.2,-1.5) {\tiny{Iterations ($\tau$)}};
  \node[rotate=90] at (-2.1,0) {\tiny{$\tf{\hWf-\tWf}$}};
  \node[right] at (.65,0.25) {\tiny{Global}};
  \node[right] at (.7,-.05) {\tiny{Local}};
  \end{tikzpicture}
  \subcaption{\scriptsize{$\prj_{\tScf}(\W(\tau))\to\tWf$}}\label{fig local ls Wf}
\end{subfigure}
  \caption{\small{Squared loss with general classifier}}\label{fig local ls}
\end{minipage}\hfill
\begin{minipage}{.49\linewidth}
\centering
\captionsetup{type=figure}
\begin{subfigure}{0.5\linewidth}
  \begin{tikzpicture}
  \node at (0,0){\includegraphics[width=0.9\linewidth, trim={1.3cm 1.3cm 0 0}, clip]{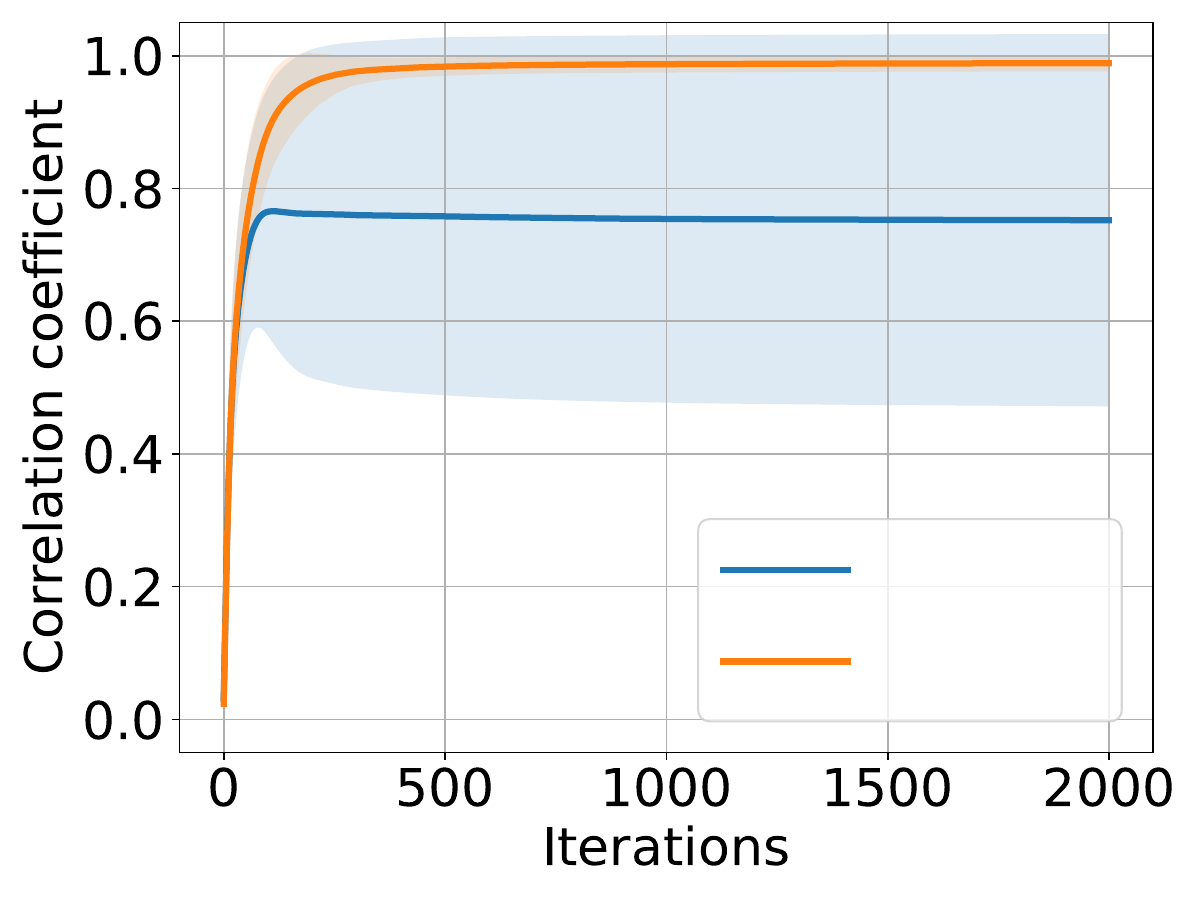}};
  \node at (0.2,-1.5) {\tiny{Iterations ($\tau$)}};
  \node[rotate=90] at (-2.,0) {\tiny{Correlation coefficient}};
  \node[right] at (.65,-0.52) {\tiny{Global}};
  \node[right] at (.7,-0.82) {\tiny{Local}};
  \end{tikzpicture}
  \subcaption{\scriptsize{$\frac{\W(\tau)}{\tf{\W(\tau)}}\to\frac{\tWm}{\tf{\tWm}}$}}\label{fig local ce Wm}
\end{subfigure}
\begin{subfigure}{0.5\linewidth}
  \begin{tikzpicture}
  \node at (0,0){\includegraphics[width=0.9\linewidth, trim={1.3cm 1.3cm 0 0}, clip]{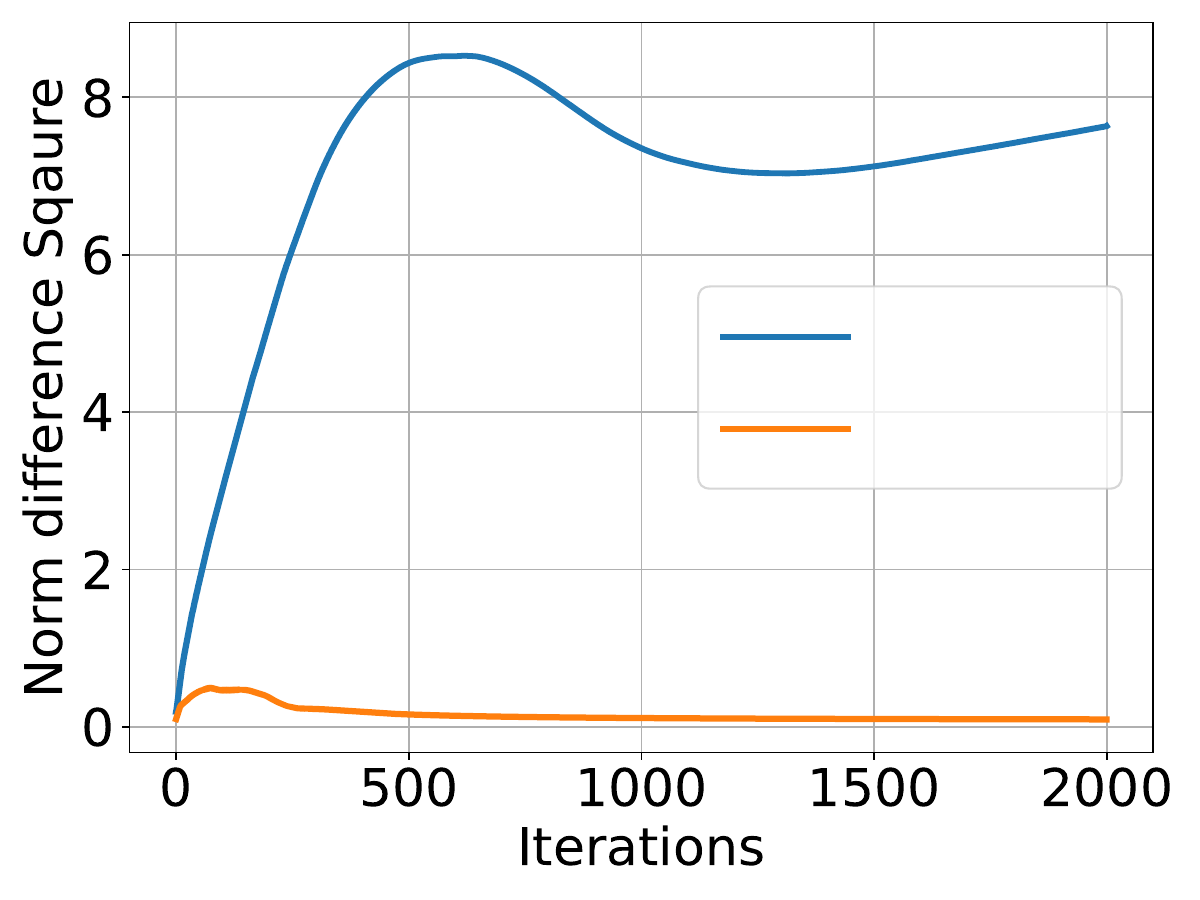}};
  \node at (0.2,-1.5) {\tiny{Iterations ($\tau$)}};
  \node[rotate=90] at (-2.1,0) {\tiny{$\tf{\hWf-\tWf}$}};
  \node[right] at (.65,0.25) {\tiny{Global}};
  \node[right] at (.7,-.05) {\tiny{Local}};
  \end{tikzpicture}
  \subcaption{\scriptsize{$\prj_{\tScf}(\W(\tau))\to\tWf$}}\label{fig local ce Wf}
\end{subfigure}
  \caption{\small{Cross-entropy loss with general classifier}}\label{fig local ce}
\end{minipage}\hfill
\end{figure*}

\ifarxiv
\section{Further Investigation on Local Convergence}
\else
\section{FURTHER INVESTIGATION ON LOCAL CONVERGENCE}
\fi
\label{sec:local-gd}

So far, we have proved the global GD convergence of attention weight when one employs the log-loss (Section~\ref{sec global gd}) and studied the implicit bias of self-attention over next-token prediction problem using RP analysis (Section~\ref{sec rp}). In this section, we investigate further on the convergence performance of GD and ask: 

\begin{center}
\emph{When does the GD exhibit local convergence rather than global?\\ Can we characterize its implicit bias? } 
\end{center}

Convergence performance of learning 1-layer attention has been analyzed in the previous work \cite{tarzanagh2023margin,tarzanagh2023transformers}, and they have observed the local convergence phenomenon, and also provided the theoretical explanation and empirical evidence. Inspired by their work, we define the \emph{pseudo} TPGs for obtaining \emph{locally-optimal} SVM equivalence $\tWm$ and cyclic component $\tWf$ as follows:
\begin{enumerate}
    \item Given any dataset $\data$, consider GD solution $\Wgd$. For each training example $(\X,y)\in\data$, let  
    $\s=\sft{\X\Wgd\xl}$.
    \item Construct TPGs based on $\s$ by adding directed edge $(x_{t_1}\to x_{t_2})$ to $\Gck$ if $\s_{t_1}>0$, where $k,x_t$ are the token IDs of last token and $\x_t$, respectively.
\end{enumerate}
Different from the TPGs defined in Section~\ref{sec ntg} which is uniquely determined by the dataset and the ground truth labels, pseudo-TPGs build edges based on the tokens selected by GD solution $\Wgd$. 
To further investigate under which scenarios local convergence phenomenon exists, we consider the following cases and provide experimental evidence.

\smallskip
\noindent\textbf{General loss function $\ell$.} In Section~\ref{sec global gd}, we analyze the convergence performance of gradient descent when employing log-loss. As we have discussed, such loss guarantees the convexity of the problem and therefore, GD of attention weight (directional) converges to its global minima. Here, we investigate the performance of more general loss function, i.e., squared loss, and find empirical evidence of local convergence (Figures~\ref{fig local ls} and \ref{fig local ce}). 


\smallskip
\noindent\textbf{Extended linear head.} In this work, GD experiments are conducted under Assumptions~\ref{assume iden} and \ref{assume realizable}, which implies the convex training loss $\Lc(\W)$ and global convergence performance. 
Now consider a more general linear head (i.e.,~Assumption~\ref{assume relax}). As have also been observed and discussed in \cite{tarzanagh2023margin,tarzanagh2023transformers}, \ref{algo gd} can converge to a locally-optimal solution. 
 


Figures~\ref{fig local ls} and \ref{fig local ce} display our local convergence results where Fig.~\ref{fig local ls} employs squared loss, i.e. $\ell(u)=(1-u)^2$ 
and Fig.~\ref{fig local ce} utilizes cross-entropy loss. Both apply general head following Assumption~\ref{assume relax}. Similar to Fig.~\ref{fig cyc}, we present (directional) convergence performance of $\W$ towards $\Wm$ and $\Wf$. Results indicate that instead of converging to the global solution (blue curves), attention weights trained via GD align more closely with the locally-optimal SVM solution defined via the pseudo TPGs constructed by $\Wgd$ (orange curves). In Fig.~\ref{fig local ls Wf}, the norm difference to $\tWf$ remains zero, indicating that all SCCs in the pseudo TPGs are singleton and GD optimizes attention weights towards selecting one token per sequence. While in Fig.~\ref{fig local ce}, multiple tokens can be selected by $\Wgd$. 
{Note that in Fig.~\ref{fig local ce Wf}, the norm of difference does not end with zero value on average. The potential explanations can be: 
Due to the non-convexity of training loss, training $\W$ with GD may not fully capture its RP solution $\tWf$ over the cyclic subspace, and general classification head induces correlation among tokens, leading the attention mechanism to generate more intricate composed tokens. Nevertheless, our empirical results indicate that $\W$ more closely aligns with the local $\tWf$ within its cyclic subspace. We defer a rigorous definition of local $\tWf$ and guarantees related to gradient descent for future exploration.} Experimental details are deferred to the appendix.
\ifarxiv
\section{Related Work}
\else
\section{RELATED WORK}
\fi


Inspired by the increasing popularity of Transformer-based models, a large number of research efforts have focused on developing theoretical understanding of various aspects of such models. 
\cite{yun2020_universal,fu2023_randomfeatures,bombari2024towards} studied the expressive power of Transformers and showed that they are universal approximators for sequence-to-sequence functions. A similar result for efficient variants of Transformers based on sparse attention was presented in \cite{yun2020_sparse}. 
\cite{edelman2022_inductive} studied bias of single attention layer towards representing sparse functions of input sequence with favourable generalization behaviour. 
Interestingly, 
\cite{baldi2023_quarks} explored key building blocks of attention mechanism beyond modern neural networks and studied the functional capacity of the resulting attention-based models. Other lines of theoretical efforts have focused on explaining various properties of Transformer-based models, including rank collapse~\citep{dong2021_rank} and realization of in-context learning~\citep{xie2022_incontext,garg2022_incontext,akyurek2023_incontext,oswald2023_incontext,li2023_incontext,huang2023context,li2023transformers,collins2024context,jeon2024information,chen2024training,li2024training}.

Unlike these prior work, we focus on optimization-theoretic analysis of attention-based models for the next-token prediction objective. Our work sheds light on the implicit bias of underlying optimization problem towards SVM formulations, which builds on the recent research efforts~\citep{tarzanagh2023margin, tarzanagh2023transformers}. However, different from these prior efforts that deal with traditional (supervised) classification tasks, we focus on next-token prediction task -- the main workhorse of Transformer-based language modeling. {The recent work \cite{thrampoulidis2024implicit} also explores the next-token prediction problem under a classification-like setting, employing a related SVM formulation. Since we study transformers, the main messages are fairly different, e.g., our theory relies on graph-theoretic concepts such as SCCs and token-priority graphs to capture the generative process learned by SGD.}  Notably, several recent efforts~\citep{jelassi2022_transformer,li2023_transformer,li2023transform,oymak23a_prompt,deora2023optimization,chen2024provably} have also analyzed optimization and generalization dynamics of attention-based models. However, these works again only focus on traditional classification tasks and consider simplifications of the attention mechanism~\citep{jelassi2022_transformer} or work with strict statistical data assumptions~\citep{jelassi2022_transformer,li2023_transformer,oymak23a_prompt}. In contrast, we provide a detailed optimization-theoretic treatment of the original (non-linear input dependent) attention mechanism without any statistical assumption on the underlying data. 
\yl{Related work by \cite{tian2023_scan} studies the training dynamics of next-token prediction. Compared to us, their analysis is restricted to a specific statistical data model, including the requirement of long input sequences ($T \to \infty$). \cite{ildiz2024self,makkuva2024attention} build connections between self-attention and Markov chains.} In contrast, we characterize the implicit bias of self-attention learning to novel SVM formulations without any such assumptions on the data model or sequence lengths.

We would also like to note the rich literature on studying implicit bias of gradient-based optimization methods (see, e.g.,~\cite{soudry2018_implicit,gunasekar2018_implicit,ji2020gradient,ji2021_characterizing,kini2021_label,li2019_towards,blanc2020_implicit,qian2019_implicit,wang2021_implicit} and references therein). However, this prior work does not focus on the optimization landscape of learning Transformer-based models and thus, does not provide specific insights into their inner-workings, which is the main objective of our work.






\ifarxiv
\section{Discussion}
\else
\section{DISCUSSION}
\fi

{In this work we set out to demystify Transformer-based language modeling via next-token prediction task. We established that single-layer self-attention learning has implicit bias towards the solution of a support vector machine (SVM) formulation based on token-priority graphs which encode the priority order among the tokens as per the training data. Our analysis shows that a self-attention model learned via next-token prediction objective implements a selection mechanism to suppress the lower priority tokens in order to predict the higher priority tokens as the next-token for an input sequence. At the same time, such an attention model would distribute its softmax probabilities among all equal priority tokens as modeled by the strongly-connected components of the next-token graph. Ultimately, our results comprehensively capture the automaton implemented by a 1-layer self-attention under realistic assumptions. }

\yl{
A natural future direction is relaxing our assumptions in SGD analysis and providing a comprehensive characterization of the training dynamics, accounting for non-convexities. It would also be interesting to extend our analysis to multi-layer multi-head self-attention models or explore how feed-forward layers (a.k.a.~MLP layers) in Transformers affect the optimization dynamics and aid in the aforementioned token selection and composition mechanisms during next-token prediction.
}

\section*{Acknowledgements}
This work was supported in part by the NSF grants CCF-2046816, CCF-2212426, CNS-1932254, UMich's MIDAS PODS program, a Google Research Scholar award, and an Adobe Data Science Research award.%



\bibliographystyle{iclr2024_conference}
\bibliography{references}

\onecolumn
\newpage 
\appendix
\newpage
\ifarxiv
\else
\aistatstitle{Mechanics of Next Token Prediction with Transformers \\
Supplementary Materials}
\section*{Checklist}



 \begin{enumerate}

 \item For all models and algorithms presented, check if you include:
 \begin{enumerate}
   \item A clear description of the mathematical setting, assumptions, algorithm, and/or model. [Yes]
   \item An analysis of the properties and complexity (time, space, sample size) of any algorithm. [Not Applicable]
   \item (Optional) Anonymized source code, with specification of all dependencies, including external libraries. [Yes]
 \end{enumerate}

 \item For any theoretical claim, check if you include:
 \begin{enumerate}
   \item Statements of the full set of assumptions of all theoretical results. [Yes]
   \item Complete proofs of all theoretical results. [Yes]
   \item Clear explanations of any assumptions. [Yes]     
 \end{enumerate}

 \item For all figures and tables that present empirical results, check if you include:
 \begin{enumerate}
   \item The code, data, and instructions needed to reproduce the main experimental results (either in the supplemental material or as a URL). [Yes]
   \item All the training details (e.g., data splits, hyperparameters, how they were chosen). [Yes]
         \item A clear definition of the specific measure or statistics and error bars (e.g., with respect to the random seed after running experiments multiple times). [Not Applicable]
         \item A description of the computing infrastructure used. (e.g., type of GPUs, internal cluster, or cloud provider). [Not Applicable]
 \end{enumerate}

 \item If you are using existing assets (e.g., code, data, models) or curating/releasing new assets, check if you include:
 \begin{enumerate}
   \item Citations of the creator If your work uses existing assets. [Not Applicable]
   \item The license information of the assets, if applicable. [Not Applicable]
   \item New assets either in the supplemental material or as a URL, if applicable. [Not Applicable]
   \item Information about consent from data providers/curators. [Not Applicable]
   \item Discussion of sensible content if applicable, e.g., personally identifiable information or offensive content. [Not Applicable]
 \end{enumerate}

 \item If you used crowdsourcing or conducted research with human subjects, check if you include:
 \begin{enumerate}
   \item The full text of instructions given to participants and screenshots. [Not Applicable]
   \item Descriptions of potential participant risks, with links to Institutional Review Board (IRB) approvals if applicable. [Not Applicable]
   \item The estimated hourly wage paid to participants and the total amount spent on participant compensation. [Not Applicable]
 \end{enumerate}

 \end{enumerate}

\newpage
\fi
\DoToC

\begin{figure*}[tb] 
    \centering    \includegraphics[width=0.89\textwidth]{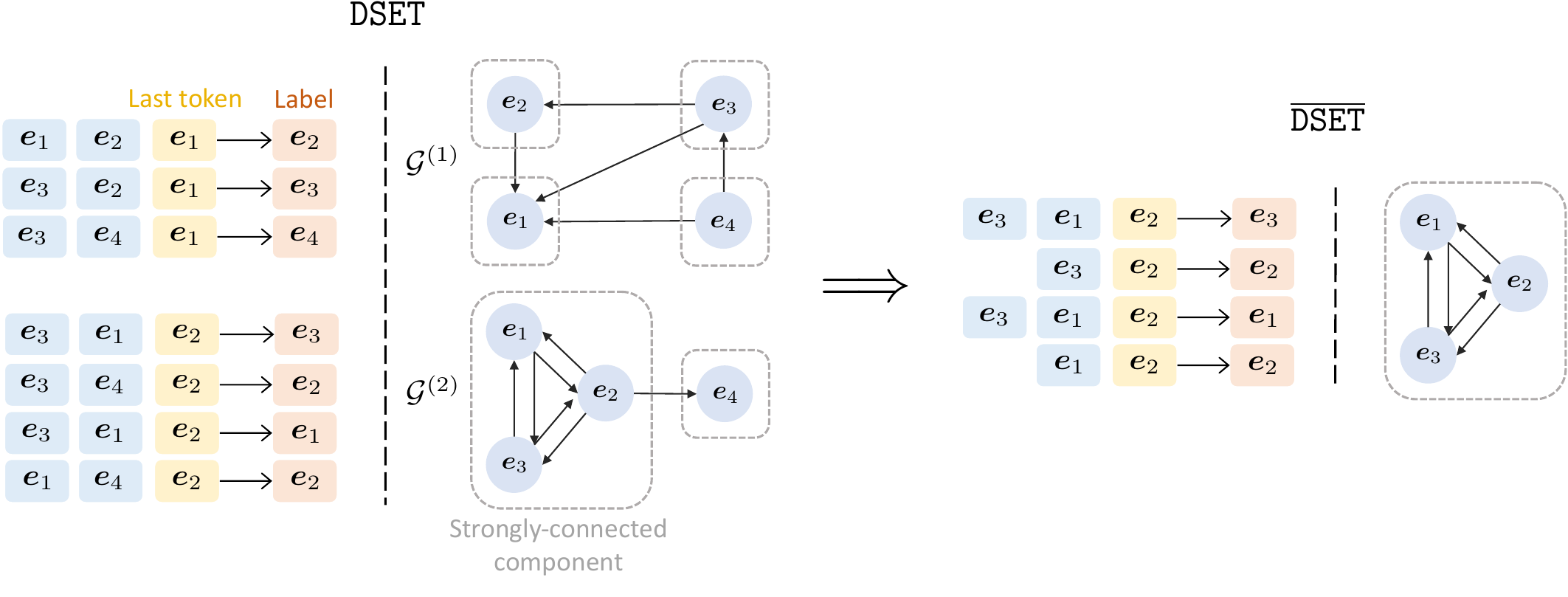}
    \caption{\small{{Illustration of token-priority graph (TPG).  Given the input sequences and labels (next tokens), we construct the TPGs $\{\Gc^{(k)}\}_{k=1}^K$ according to the last token. \textbf{Left hand side:} 
    Two TPGs $\Gc^{(1)}$ and $\Gc^{(2)}$ are constructed using the samples with $\eb_1$ and $\eb_2$ as the last tokens, respectively. In each graph, directed edges (label$\to$input token) are added, and based on the token relations, each graph can be partitioned into different strongly-connected components (SCCs, highlighted as dashed grey rectangles). \textbf{Right hand side:} We consider a cyclic subdataset $\bdata$ by removing all the singleton SCCs, as well as their corresponding edges (see Definition~\ref{cyc sub}). Then, $\bdata$ contains non-singleton SCCs only as shown on the right.}
    }}
    \label{fig:intro app}
\end{figure*}
\ifarxiv
\section{Auxiliary Results}
\else
\section{AUXILIARY RESULTS}
\fi

\subsection{Soft-composition Component}\label{app soft}
In Section~\ref{sec global gd}, we have theoretically shown that when training a single-layer self-attention model with gradient descent and log-loss function, once $\Wm\neq0$, the composed attention weight $\W(\tau)$ will diverge in Frobenius norm and $\W(\tau)$ converges towards direction $\Wm/\tf{\Wm}$; while in the subspace of $\Scf$, $\prj_{\Scf}(\W(\tau))$ converges to a finite $\Wf$ which is the unique solution minimizing the training loss over subspace $\Scf$ as described in Theorem~\ref{thm cyclic gd}. Here, $\Wm$ follows the solution of \eqref{graph svm} and plays a role in separating tokens from different SCCs within the same TPG. Specifically, nodes satisfy $(i\Rightarrow j)\in\Gck$.  

As for the nodes contained within the same SCC (e.g., $(i\asymp j)$), to ensure that $i$ and $j$ will not suppress each other, \eqref{graph svm} solves the SVM problem with the constraint $(\eb_i-\eb_j)^\top\W\eb_k=0$.
This essentially disregards the influence of distinct tokens within the same SCC.
Consequently, $\Wm$ does not truly capture the essence of the ERM solution. In the following, we introduce cyclic subdataset and the so-called \emph{cyclic-component}, and an equivalence between the cyclic term and $\Wf$ can be established under mild assumptions. 



\begin{definition}[Cyclic subdataset]\label{cyc sub} 
Given any training sample $(\X,y)\in\data$, we obtain the corresponding sample $(\X', y) \in \bdata$ by 
removing all tokens in $\X$ that satisfy $(y\Rightarrow x)$ in the corresponding {TPG}.
\end{definition}

In short, cyclic subdataset focuses on the input tokens that are part of the same SCC as the label token. {Fig.~\ref{fig:intro app}(Right) presents the cyclic subdataset $\bdata$ of $\data$ given in Fig.~\ref{fig:intro app}(Left), which is the same as Fig.~\ref{fig:intro}. In $\Gc^{(1)}$, all nodes are separated into different SCCs, and therefore, none of them is present in $\bdata$; while in $\Gc^{(2)}$, token $\eb_1,\eb_2$ and $\eb_3$ are reachable from each other, and then are utilized to construct $\bdata$ while $\eb_4$ is removed from the dataset.} 
Note that $\bdata$ provides a {self-contained sub-problem} that solely focuses on intra-SCC edges.


\begin{definition} [Cyclic component] \label{def finite correct} $\Wcf$ is obtained as the solution set of the ERM problem over the cyclic subdataset $\bdata$ per Definition~\ref{cyc sub}. Concretely, 
\begin{align*}
&\Wcf=\arg\min_{\W\in\Scf}\bar\Lc(\W)\quad\\
&\text{where}\quad\bar\Lc(\W)=\frac{1}{n}\sum_{(\X,y)\in\bdata} \ell(\cb_{y}^\top\X^\top \sft{\X\W\xl}).
\end{align*}
\end{definition}
\begin{lemma}\label{lemma ortho} 
Consider a dataset $\data$ and let $\Wm$ be the corresponding SVM solution of \eqref{graph svm} with $\Wm\neq0$. Then we have $\Wm\perp\Scf$, and for any $\bWf\neq0\in\Wcf$, $\bWf$ and $\Wm$ are orthogonal.
\end{lemma}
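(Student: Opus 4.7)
The plan is to unpack both orthogonality claims directly from the definitions of $\Scf$ and $\Wcf$, without invoking any first-order optimality machinery. The lemma is essentially bookkeeping: the first half has already been flagged in the line immediately following Definition~\ref{def cyc subspace}, and the second half is a one-line consequence of the first.

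First I would establish $\Wm \perp \Scf$. Recall that $\Scf$ is, by Definition~\ref{def cyc subspace}, the linear span of the rank-one matrices $(\eb_i - \eb_j)\eb_k^\top$ over all $(i \asymp j) \in \Gck$ and $k \in [K]$. For any such generator, computing the Frobenius inner product gives
\[
\langle \Wm, (\eb_i - \eb_j)\eb_k^\top \rangle_F \;=\; \tr\!\bigl(\Wm^\top (\eb_i - \eb_j)\eb_k^\top\bigr) \;=\; (\eb_i - \eb_j)^\top \Wm \eb_k,
\]
and this scalar is exactly zero by the ``$=0$'' equality constraints enforced in \eqref{graph svm} for all intra-SCC pairs $(i \asymp j) \in \Gck$. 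Extending by linearity over the spanning set yields $\Wm \perp \Scf$.

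For the second claim, I take any $\bWf \in \Wcf$ and observe that Definition~\ref{def finite correct} specifies $\Wcf = \arg\min_{\W \in \Scf} \bar\Lc(\W)$; thus $\Wcf \subseteq \Scf$ by construction, so $\bWf \in \Scf$. Combining with the first half gives $\langle \bWf, \Wm \rangle_F = 0$, which is the desired orthogonality. The hypothesis $\bWf \neq 0$ is not actually used in this argument but is simply there to rule out the trivial case.

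The only place where a little care is required is matching the indexing in Definition~\ref{def cyc subspace} with the constraint set of \eqref{graph svm}: both range over exactly the same pairs $(i \asymp j) \in \Gck$ and last-token indices $k \in [K]$, so the generators of $\Scf$ are precisely the vectors on which the equality constraints act. There is no genuine obstacle here; the lemma is a definitional consequence and serves mainly as a structural scaffold for the orthogonal decomposition $\W(\tau) \approx C(\tau)\cdot\Wm + \Wf$ that underlies Theorem~\ref{thm cyclic gd}.
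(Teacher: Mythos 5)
Your proof is correct and takes essentially the same route as the paper's: orthogonality of $\Wm$ to each generator $(\eb_i-\eb_j)\eb_k^\top$ of $\Scf$ follows directly from the ``$=0$'' equality constraints of \eqref{graph svm} via the Frobenius-inner-product identity $\langle \Wm, (\eb_i-\eb_j)\eb_k^\top\rangle = (\eb_i-\eb_j)^\top\Wm\eb_k$, and linearity extends this to all of $\Scf$. You additionally spell out the second claim via $\Wcf\subseteq\Scf$ (immediate from Definition~\ref{def finite correct}), which the paper's proof leaves implicit.
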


\begin{lemma}\label{lemma Lcb eq}
    Let $\W\in\R^{d\times d}$ be an arbitrary matrix, then we have $\Lcb(\W)=\Lcb(\prj_{\Scf}(\W))$.
\end{lemma}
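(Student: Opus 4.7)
The plan is to show that, for every sample in $\bdata$, the attention output $\X^\top\sft{\X\W\xl}$ depends on $\W$ only through its projection onto $\Scf$, which immediately implies the loss identity since $\cb_y^\top\X^\top$ does not depend on $\W$. The starting observation is that $\sft{\cdot}$ is invariant under adding a constant to all its logits, so for a fixed sample the softmax probabilities depend on $\W$ only through the pairwise differences $(\x_t-\x_s)^\top\W\xl$ of the logits. Rewriting these differences as Frobenius inner products gives
\[
(\x_t-\x_s)^\top\W\xl \;=\; \bigl\langle \W,\,(\eb_{x_t}-\eb_{x_s})\eb_k^\top\bigr\rangle,
\]
where $k$ denotes the ID of the last token $\xl$ of the sample.

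The heart of the argument is then the claim that for any $(\X,y)\in\bdata$, every rank-one matrix $(\eb_{x_t}-\eb_{x_s})\eb_k^\top$ arising from two tokens of $\X$ actually lies in $\Scf$. By the construction of $\bdata$ in Definition~\ref{cyc sub}, a token $x$ is retained exactly when it is \emph{not} strictly dominated by $y$, i.e. when $(y\Rightarrow x)\notin\Gck$; since TPG edges go from $y$ to every token of $\X$, the only remaining possibility is $x\asymp y$ in $\Gck$ (or $x=y$). Because $\asymp$ is an equivalence relation on the nodes of $\Gck$ (its classes are the SCCs), any two retained tokens $x_t,x_s$ satisfy $x_t\asymp x_s$ in $\Gck$, and the same holds for the last-token ID $k$ (which is retained by construction). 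Consequently $(\eb_{x_t}-\eb_{x_s})\eb_k^\top$ is, up to sign, one of the generators of $\Scf$ in Definition~\ref{def cyc subspace}, hence lies in $\Scf$.

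Combining the two points: since $\W-\prj_\Scf(\W)\in\Scf^\perp$, every logit-difference inner product is unchanged when $\W$ is replaced by $\prj_\Scf(\W)$; therefore $\sft{\X\W\xl}=\sft{\X\prj_\Scf(\W)\xl}$ for every $(\X,y)\in\bdata$, which gives
\[
\cb_y^\top\X^\top\sft{\X\W\xl} \;=\; \cb_y^\top\X^\top\sft{\X\prj_\Scf(\W)\xl}.
\]
Averaging $\ell(\cdot)$ over $\bdata$ then yields $\Lcb(\W)=\Lcb(\prj_\Scf(\W))$.

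The only non-routine step is verifying the SCC containment above, and in particular that the last-token ID $k$ is itself in the same SCC as the surviving tokens, so that the second factor $\eb_k^\top$ in the generators actually matches the $\xl$ used inside the softmax. This is where I would be most careful: once we confirm that the removal rule in Definition~\ref{cyc sub} preserves the last token and places all remaining tokens in a single SCC of $\Gck$, the rest is a one-line softmax-invariance computation.
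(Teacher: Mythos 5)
Your proposal is correct and takes essentially the same route as the paper: decompose $\W$ into $\prj_{\Scf}(\W)+\prj_{\Scf^\perp}(\W)$ and show that for every sample of $\bdata$ the $\Scf^\perp$ part contributes only a common additive shift to the logits, which softmax ignores. The paper makes this explicit by showing $\x^\top\prj_{\Scf^\perp}(\W)\xl$ equals a single constant $\bar a_k$ over the retained tokens; phrasing it via pairwise logit differences as you do is the same computation.

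The worry you flag in your last paragraph, however, is misplaced and the claim it rests on is not quite right. First, the last token is \emph{not} retained by construction: if $(y\Rightarrow k)\in\Gck$, i.e.\ the last-token ID $k$ is strictly dominated by $y$, then that token is removed from $\bar\X$. Second, and more importantly, you do not need $k$ to lie in the SCC of $y$ at all. In Definition~\ref{def cyc subspace} the generators of $\Scf$ are $(\eb_i-\eb_j)\eb_k^\top$ subject only to $(i\asymp j)\in\Gck$; the index $k$ just selects which graph $\Gck$ the relation must hold in and simultaneously supplies the right factor $\eb_k^\top$, and it is never required to belong to that SCC. What you actually need — and correctly establish — is that the IDs of all retained tokens of a sample with last-token ID $k$ lie in a common SCC of $\Gck$, so every pairwise-difference matrix $(\eb_{x_t}-\eb_{x_s})\eb_k^\top$ is a generator of $\Scf$. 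The softmax in $\Lcb$ still uses the original $\xl=\eb_k$ as the query even when $\eb_k\notin\bar\X$, so the factor $\eb_k^\top$ already matches automatically. With that clarification the argument is complete.
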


{\begin{lemma}\label{lemma Wf eq}
    Suppose Assumptions~\ref{assume iden} and \ref{assume realizable} hold, and loss function $\ell(u)=-\log(u)$, 
    then for any finite $\W$, $\tilde\Lc(\W)=\Lcb(\W)$ where $\tilde\Lc(\W)$ and $\Lcb(\W)$ are defined in Theorem~\ref{thm cyclic gd} and Definition~\ref{def finite correct}, respectively. 
\end{lemma}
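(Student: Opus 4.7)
The plan is to analyze $\tilde\Lc(\W) = \lim_{R\to\infty}\Lc(\W + R\cdot\Wm)$ sample-by-sample and show each per-sample contribution coincides with the corresponding term in $\bar\Lc(\W)$. Since the sum in \eqref{erm} is finite, the limit commutes with the sum, so it suffices to compute the pointwise limit of each summand and match it to Definition~\ref{def finite correct}.

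Fix a sample $(\X_i, y_i) \in \data$ with last token $\xli = \eb_{k_i}$. The softmax logits decompose as $\eb_{x_{it}}^\top(\W + R\Wm)\eb_{k_i} = \eb_{x_{it}}^\top \W \eb_{k_i} + R\,\eb_{x_{it}}^\top \Wm \eb_{k_i}$. Setting $\alpha_i := \eb_{y_i}^\top \Wm \eb_{k_i}$, the equality constraints of \eqref{graph svm} give $\eb_{x_{it}}^\top \Wm \eb_{k_i} = \alpha_i$ whenever $(x_{it} \asymp y_i) \in \Gc^{(k_i)}$, while the inequality constraints give $\eb_{x_{it}}^\top \Wm \eb_{k_i} \leq \alpha_i - 1$ whenever $(y_i \Rightarrow x_{it}) \in \Gc^{(k_i)}$. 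Subtracting the common shift $R\alpha_i$ (which leaves the softmax invariant), tokens in the SCC of $y_i$ retain their $\W$-dependent logits, while dominated tokens' logits are bounded above by $\eb_{x_{it}}^\top \W \eb_{k_i} - R$ and therefore drop out of the softmax as $R \to \infty$.

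Hence $\sft{\X_i(\W + R\Wm)\xli}_t$ converges to $\sft{\X_i'\W\xli}_t$ for $t$ in $y_i$'s SCC and to $0$ otherwise, where $\X_i'$ is the cyclic restriction of $\X_i$ from Definition~\ref{cyc sub}. Under Assumption~\ref{assume iden}, $\cb_{y_i}^\top \x_{it}$ equals $1$ when $x_{it} = y_i$ and $0$ otherwise, so the per-sample loss collapses to $-\log\bigl(\sum_{t:\,x_{it}=y_i} s_t^R\bigr)$. Realizability (Assumption~\ref{assume realizable}) places $\eb_{y_i}$ in $\X_i$, and since $y_i$ is trivially in its own SCC the sum restricts to surviving indices and has strictly positive limit. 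Passing to the limit identifies the per-sample contribution with $-\log\bigl(\cb_{y_i}^\top(\X_i')^\top \sft{\X_i' \W \xli}\bigr)$, exactly the per-sample term of $\bar\Lc(\W)$. Averaging over $i$ yields $\tilde\Lc(\W) = \bar\Lc(\W)$.

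The main obstacle is bookkeeping around the query token. The softmax query in the limit defining $\tilde\Lc$ is always the original $\xli = \eb_{k_i}$, but if $(y_i \Rightarrow k_i) \in \Gc^{(k_i)}$ then the last token is itself dominated and is among the positions deleted in forming $\bdata$. The argument therefore requires fixing the convention that $\bar\Lc$ retains $\eb_{k_i}$ as the softmax query even when the corresponding key/value entry is removed, and a brief consistency check of the degenerate case in which $y_i$'s SCC is a singleton (so $\X_i'$ consists only of copies of $\eb_{y_i}$ and both $\tilde\Lc$ and $\bar\Lc$ contribute $-\log 1 = 0$) completes the picture.
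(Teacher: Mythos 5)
Your proof is correct and follows essentially the same route as the paper's, which establishes this result as a special case of Lemma~\ref{lemma loss eq}: both you and the paper partition the key positions into the SCC of $y_i$ versus the dominated tokens, use the \eqref{graph svm} constraints to show the dominated logits drift to $-\infty$ relative to the SCC logits, and identify the limiting softmax with the cyclic restriction. Your version is slightly more streamlined in that you subtract $R\alpha_i$ and work directly with the full $\W$ rather than first decomposing $\W$ into $\Scf$ and $\Scf^\perp$ components (the paper's $a_{it}+b_{it}$ bookkeeping); the two are equivalent because the $\Scf^\perp$ part of $\W$ contributes a constant shift over indices in $\Rc_i$ and so cancels in the restricted softmax. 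Your two flagged subtleties are both legitimate and handled consistently with the paper: the query in $\bar\Lc$ remains the original $\xl_i=\eb_{k_i}$ even if that position is dropped from the key/value list, and samples with singleton label-SCC contribute $\ell(1)=-\log 1=0$ on both sides (the paper simply excludes them from $\bdata$ and carries the $\frac{|\Icb|}{n}\ell(1)$ term separately, which vanishes for log-loss).
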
}

\subsection{Useful Notations}
In this section, we introduce additional notations used in the subsequent proofs.

\smallskip
\noindent$\bullet$ \textbf{Token index sets $\Oc_i$, $\Ocb_i$, $\Rc_i$, $\Rcb_i$, $i\in[n]$.} Consider dataset $\data$. Throughout, for any sample $(\X_i,y_i)\in\data$, $i\in[n]$, we define
\begin{subequations}\label{def Oc Rc}
\begin{align}
&\Oc_i:=\left\{t~\Big|~x_{it}=y_i, \forall t\in[T_i]\right\}\quad\text{and}\quad\Ocb_i=[T_i]-\Oc_i,\\
&\Rc_i:=\Oc_i\bigcup\left\{t~\Big|~(x_{it}\asymp y_i)\in\Gc^{(\bar x_i)}, \forall t\in[T_i]\right\}\quad\text{and}\quad\Rcb_i=[T_i]-\Rc_i
\end{align}
\end{subequations}
where $x_{it}$ is the token ID of $\x_{it}$, $T_i$ is the number of tokens in the input sequence $\X_i$ and $\Gc^{(\bar x_i)}$ is the corresponding token-priority graph (TPG) associated with the last/query token of $\X_i$. Concretely, $\Oc_i$ returns the token indices of $i$-th input that have the same token ID as label $y_i$, while $\Rc_i$ returns the token indices of $i$-th input that are included in the same strongly-connected component (SCC) as label $y_i$ in the corresponding TPG. Then for any $t\in\Rcb_i$, we have $(y_i\Rightarrow x_{it})\in\Gc^{(\bar x_i)}$. Take the last input sequence in Figure~\ref{fig:intro app}(left) as an example, where $\Oc=\{3\}$ and $\Rc=\{1,3\}$.

\smallskip
\noindent$\bullet$ \textbf{Datasets $\data$, $\bdata$ and sample index set $\Ic$, $\Icb$.} Recap the training dataset $\data=(\X_i,y_i)_{i=1}^n$. Based on the relationships between input tokens and label token, following instructions in Section~\ref{sec ntg} we can construct the TPGs of dataset $\data$. Then, let $\Ic\subseteq[n]$ be the sample index set such that for any $i\in\Ic$, $\X_i$ contains distinct tokens from the same SCC as label $y_i$ in their corresponding TPG. Or equivalently, 
\begin{align}\label{def Ic}
    \Ic=\left\{ i~\Big|~\Rc_i-\Oc_i\neq\emptyset,i\in[n]\right\}\quad\text{and}\quad\Icb=[n]-\Ic.
\end{align}
Then the cyclic subset defined in Definition~\ref{cyc sub} can be written by 
\begin{align}
\bdata=(\bar\X_i,y_i)_{i\in\Ic},\label{def bdata}
\end{align}
where $\bar\X_i$ is obtained by removing all input tokens of $\X_i$ that are in the different SCCs from the label token $y_i$, or equivalently, removing $\x_{it}$, $t\in\Rcb_i$. 
Hence, for all $i\in\Icb$, $\X_i$ only contains input tokens (ignoring the ones with the same token ID as label) that have strictly lower priority than its label token, i.e., $(y_i\Rightarrow x_{it})\in\Gc^{(\bar x_i)}$ for $t\in\Ocb_i$. In Figure~\ref{fig:intro app}(left), we have $\Ic=\{4,5,6,7\}$ and $\Icb=\{1,2,3\}$.

\smallskip
\noindent $\bullet$ \textbf{Token scores $\bgam_i,i\in[n]$ and loss $\Lc(\W)$ under Assumption~\ref{assume iden}.}
Let $\bgam_i=\X_i\cb_{y_i}$ be the token score vectors. Then under Assumption~\ref{assume iden}, we have
\begin{align}
\gamma_{it}=\begin{cases}
    1,\quad t\in\Oc_i\\
    0,\quad t\in\Ocb_i
\end{cases}\quad\text{for all }i\in[n].\label{def score}
\end{align}
Additionally, letting $\s_i^{\W}=\sft{\X_i\W\xli}$, we can rewrite the training risk as follows:
\begin{align}
\Lc(\W)=\frac{1}{n}\sum_{i=1}^n\ell\left(\sum_{t\in\Oc_i}s_{it}^{\W}\right).\label{def erm under assum iden}
\end{align}

\subsection{Proof of Lemma~\ref{lemma feasible}}

\begin{proof}
    Recap the constraints in \eqref{graph svm} problem: 
\begin{align}
(\eb_i-\eb_j)^{\top}\W\eb_k
\begin{cases}=0\quad \forall(i\asymp j) \in\Gck\\
\geq 1\quad \forall(i\Rightarrow j)\in\Gck
\end{cases}
\text{for all}\quad k\in[K].\label{con 1}
\end{align}
Since $\Eb=[\eb_1~\eb_2~\cdots~\eb_K]^\top\in\R^{K\times d}$ is full row rank, then $K\leq d$ and $\eb_k,k\in[K]$ are linearly independent. 
Let $\bar\Eb\in\R^{K\times d}$ satisfying $\bar\Eb\Eb^\top=\Iden$. 
Then for any $\bar\W\in\R^{K\times K}$, we get
\begin{align}
(\eb_i-\eb_j)^{\top}\bar\Eb^\top\bar\W\bar\Eb\eb_k
\begin{cases}=0\quad \forall(i\asymp j) \in\Gck\\
\geq 1\quad \forall(i\Rightarrow j)\in\Gck
\end{cases}
\text{for all}\quad k\in[K]\label{con 2}
\end{align}
and feasibility of \eqref{con 2} implies $\W\in\R^{d\times d}$ in \eqref{con 1} is feasible. Since we can set $\W=\bar\Eb^\top\bar\W\bar\Eb$. Next let $\ub_i=\bar\Eb\eb_i$ $i\in[K]$ be $K$-dimensional one-hot vectors. Then it remains to show that there exists $\bar\W\in\R^{K\times K}$ such that 
\begin{align}
(\ub_i-\ub_j)^{\top}\bar\W\ub_k
\begin{cases}=0\quad \forall(i\asymp j) \in\Gck\\
\geq 1\quad \forall(i\Rightarrow j)\in\Gck
\end{cases}
\text{for all}\quad k\in[K]\label{con 3}
\end{align}
is feasible. Additionally, it is equivalent with showing that for any $k\in[K]$, there exists $\w\in\R^K$, such that
\begin{align}\label{res w}
(\ub_i-\ub_j)^{\top}\w
\begin{cases}=0\quad \forall(i\asymp j) \in\Gck\\
\geq 1\quad \forall(i\Rightarrow j)\in\Gck.
\end{cases}
\end{align}
To start with, we first derive the priority order of each graph (referring to the topological sorting of directed graph). Let $M_i$ be the order of $\ub_i$ where $M_i$'s $i\in[K]$ are positive integers. Then if $(i\asymp j)$, $M_i=M_j$; if $(i\Rightarrow j)$, $M_i>M_j$. Then let $\w=\sum_{i\in[K]}M_i\ub_i$. We obtain that for any $k\in[K]$,
\begin{align*}
    &\forall(i\asymp j)\in\Gck,~(\ub_i-\ub_j)^\top\w=(\ub_i-\ub_j)^\top(M_i\ub_i+M_j\ub_j)=M_i-M_j=0\\
    &\forall(i\Rightarrow j)\in\Gck,~(\ub_i-\ub_j)^\top\w=(\ub_i-\ub_j)^\top(M_i\ub_i+M_j\ub_j)=M_i-M_j\geq1
\end{align*}
which indicates that \eqref{res w} is feasible for any $k\in[K]$ and it completes the proof. 
\end{proof}

\subsection{Proof of Lemma~\ref{lemma ortho}}
\begin{proof}
Recall from Definition~\ref{def cyc subspace} that $\Scf$ is the span of all matrices $(\eb_i-\eb_j)\eb_k^\top$ for all $(i\asymp j)\in\Gck$ and $k\in[K]$. Then for any matrix $\W\in\Scf$, there exist $a_{ijk}$'s satisfying
\[
\W=\sum_{i,j,k}a_{ijk}(\eb_i-\eb_j)\eb_k^\top
\]
where $(i\asymp j)\in\Gck$ and $k\in[K]$. 
Since $\Wm$ is the solution of \eqref{graph svm} that satisfies the all ``$=0$'' constraints, for any matrix $\W\in\Scf$, we have 
\[
\li\Wm,\W\ri=\sum_{i,j,k}a_{ijk}\li\Wm,(\eb_i-\eb_j)\eb_k^\top\ri=0.
\]
Therefore, $\Wm\perp\Scf$. 
%
%
\end{proof}

\subsection{Proof of Lemma~\ref{lemma Lcb eq}}
\begin{proof} Recap the definition of $\Lcb(\W)$ from Def.~\ref{def finite correct} and $\Ic$, $\bar\X_i$ from \eqref{def Ic}, \eqref{def bdata}. Then  
    \[
\Lcb(\W)=\frac{1}{n}\sum_{i\in\Ic} \ell(\cb_{y_i}^\top\bar\X_i^\top \sft{\bar\X_i\W\xli}).
\]
Let $\W^\perp=\prj_{\Scf^\perp}(\W)$ and $\W^\pl=\prj_{\Scf}(\W)$. Then it remains to show that for any $(\bar\X,y)\in\bdata$, $\sft{\bar\X\W\xl}=\sft{\bar\X\W^\pl\xl}$. 

For simplification, let $\xl=\eb_k$, and following the definition of TPG, SCC and $\bdata$, we have that all tokens $\x\in\bar\X$ are in the same SCC and denote the token set as $\Cck$. Then $\Scf$ spans the matrices $(\eb_i-\eb_j)\eb_k^\top$ for $i,j\in\Cck$. 
For any $i\in\Cck$, we get 
\[
\eb_{i}^\top\W\eb_k=\eb_{i}^\top\W^\pl\eb_k+\eb_i^\top\W^\perp\eb_k.
\]
{Next, let $a_{ik}=\eb_i^\top\W^\perp\eb_k$, $i\in\Cck$. Since $\W^\perp\perp\Scf$, and $(\eb_i-\eb_j)\eb_k^\top\in\Scf$, we obtain}
\begin{align}
&(\eb_i-\eb_j)^\top\W^\perp\eb_k=0\label{ortho W}\\
\Longrightarrow~&\eb_i^\top\W^\perp\eb_k-\eb_{j}^\top\W^\perp\eb_k=0\nn\\
\Longrightarrow~&a_{ik}-a_{jk}=0\nn\\
\Longrightarrow~&a_{ik}=a_{jk}=:\bar a_k.\nn
\end{align}
Then we have that for any $\x\in\bar\X$, $\x^\top\W^\perp\xl=\bar a_k$ where $\bar a_k$ is associated with the last/query token $\xl$ and hence 
\begin{align*}
&\Xb\W\xl=\Xb\W^\pl\xl+\Xb\W^\perp\xl=\Xb\W^\pl\xl+\bar a_k\mathbf{1}\\
&\sft{\Xb\W\xl}=\sft{\Xb\W^\pl\xl+\bar a_k\mathbf{1}}=\sft{\Xb\W^\pl\xl},
\end{align*}
which completes the proof.
\end{proof}

\subsection{Proof of Lemma~\ref{lemma Wf eq}}
In the following, we present an additional lemma that incorporates Lemma~\ref{lemma Wf eq}.
\begin{lemma}\label{lemma loss eq}
    Suppose Assumptions~\ref{assume iden} and \ref{assume realizable} hold and loss function $\ell:\R\to\R$ is strictly decreasing. For any finite $\W$, once $\bdata\neq\data$, we have that
    \begin{align}
    \Lc(\W)>\frac{|\Icb|}{n}\ell(1)+\Lcb(\W).\label{loss vs cyc loss}
    \end{align}
    Additionally, we have
    \begin{subequations}
    \begin{align}
        &\min_{\W'\in\Scf^\perp}\Lc(\W'+\W)=\tilde\Lc(\W)=\frac{|\Icb|}{n}\ell(1)+\Lcb(\W),\label{eq cyc 1}\\
        &\min_{\W}\Lc(\W)=\min_{\W}\tilde\Lc(\W)=\frac{|\Icb|}{n}\ell(1)+\min_{\W}\Lcb(\W).\label{eq cyc 2}
    \end{align}
    \end{subequations}  
\end{lemma}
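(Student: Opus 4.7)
The plan is to reduce everything to per-sample softmax arithmetic using the simplified loss \eqref{def erm under assum iden} granted by Assumption~\ref{assume iden}, and then combine two ingredients: a monotonicity of softmax under token removal for \eqref{loss vs cyc loss}, and the limiting behavior of $\W+R\Wm$ dictated by the SVM constraints of \eqref{graph svm} for \eqref{eq cyc 1} and \eqref{eq cyc 2}.

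For \eqref{loss vs cyc loss}, I would split $[n]=\Ic\sqcup\Icb$ and write $\Lc(\W)=\tfrac{1}{n}\sum_i\ell\bigl(\sum_{t\in\Oc_i}s_{it}^{\W}\bigr)$ where $s_{it}^{\W}=\sft{\X_i\W\xli}_t$. The key observation is that restricting the softmax input from $[T_i]$ to $\Rc_i$ only shrinks the denominator, so $\sum_{t\in\Oc_i}s_{it}^{\W}\leq \sum_{t\in\Oc_i}\bar s_{it}^{\W}$, with strict inequality whenever $\Rcb_i\neq\emptyset$. For $i\in\Icb$ we have $\Rc_i=\Oc_i$, hence $\sum_{t\in\Oc_i}\bar s_{it}^{\W}=1$. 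Applying strictly decreasing $\ell$ and summing over $i$ yields $\Lc(\W)\geq\tfrac{|\Icb|}{n}\ell(1)+\Lcb(\W)$; and $\bdata\neq\data$ forces $\Rcb_i\neq\emptyset$ for at least one index $i$, upgrading this to strict inequality.

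For \eqref{eq cyc 1}, I would evaluate $\Lc(\W+R\Wm)$ as $R\to\infty$. Setting $\beta_t:=\eb_{x_{it}}^{\top}\Wm\eb_{\bar x_i}$, the constraints of \eqref{graph svm} give $\beta_t=\beta_{y_i}$ for all $t\in\Rc_i$ (from the $\asymp$-equalities) and $\beta_t\leq\beta_{y_i}-1$ for all $t\in\Rcb_i$ (from the $\Rightarrow$-margins). Absorbing the common offset $R\beta_{y_i}$ into the softmax normalization then shows that tokens in $\Rcb_i$ are exponentially suppressed while tokens in $\Rc_i$ exactly retain the $\W$-induced ratios, i.e.~$s_{it}^{\W+R\Wm}\to\bar s_{it}^{\W}$ for $t\in\Rc_i$ and $\to 0$ for $t\in\Rcb_i$. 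Hence $\lim_{R\to\infty}\Lc(\W+R\Wm)=\tfrac{|\Icb|}{n}\ell(1)+\Lcb(\W)$, which identifies $\tilde\Lc(\W)$ with the claimed value and, since $\Wm\in\Scf^\perp$ by Lemma~\ref{lemma ortho}, provides a sequence in $\W+\Scf^\perp$ achieving that value in the limit. A matching lower bound for any finite $\W'\in\Scf^\perp$ follows by applying \eqref{loss vs cyc loss} to $\W+\W'$ and invoking Lemma~\ref{lemma Lcb eq} to simplify $\Lcb(\W+\W')=\Lcb(\prj_{\Scf}(\W))$.

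Finally, \eqref{eq cyc 2} is immediate: decomposing $\W=\W_{\parallel}+\W_{\perp}$ with $\W_{\parallel}\in\Scf$ and $\W_{\perp}\in\Scf^\perp$ and first minimizing over $\W_{\perp}$ via \eqref{eq cyc 1} gives $\min_{\W}\Lc(\W)=\min_{\W_{\parallel}\in\Scf}\tilde\Lc(\W_{\parallel})=\tfrac{|\Icb|}{n}\ell(1)+\min_{\W}\Lcb(\W)$, where the last equality uses Lemma~\ref{lemma Lcb eq}. The main obstacle is the $R\to\infty$ analysis in \eqref{eq cyc 1}: it is precisely the $\asymp$-equality constraints of \eqref{graph svm} that prevent the softmax within each SCC from collapsing to a single token, which is what allows the finite cyclic component $\Lcb(\W)$ to survive the limit; once this softmax-decoupling is verified, all three statements drop out of a clean infimum decomposition.
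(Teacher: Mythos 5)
Your proposal is correct and follows essentially the same logical structure as the paper's proof. The one place you genuinely differ is the derivation of \eqref{loss vs cyc loss}: the paper first decomposes $\W = \W^\perp + \W^\parallel$ with $\W^\perp\in\Scf^\perp$, invokes the fact (from the proof of Lemma~\ref{lemma Lcb eq}) that $\x_{it}^\top\W^\perp\xli$ is constant over $t\in\Rc_i$, and then cancels a common factor $e^{\bar a_i}$ to reduce everything to the $\W^\parallel$-scores $b_{it}$. You instead observe directly that restricting the softmax from $[T_i]$ to $\Rc_i$ keeps the numerator $\sum_{t\in\Oc_i}e^{h_{it}}$ fixed while only shrinking the denominator, so $\sum_{t\in\Oc_i}s_{it}^{\W}\leq\sum_{t\in\Oc_i}\bar s_{it}^{\W}$, with strictness whenever $\Rcb_i\neq\emptyset$. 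This is a cleaner and more elementary route to the same inequality, bypassing the projection decomposition entirely; the paper's decomposition is logically unnecessary for this step (it reappears naturally as what makes $\Lcb$ depend only on $\prj_{\Scf}(\W)$, which you correctly delegate to Lemma~\ref{lemma Lcb eq}). For \eqref{eq cyc 1} your $R\to\infty$ argument using $\beta_t=\beta_{y_i}$ on $\Rc_i$ and $\beta_t\leq\beta_{y_i}-1$ on $\Rcb_i$ is exactly the paper's computation (matching its $c_{it}=\bar c_i$ versus $c_{it}\leq\bar c_i - 1$), including the implicit handling of the degenerate $\Wm=0$ case (where $\Rcb_i=\emptyset$ and the limit is trivial); and \eqref{eq cyc 2} falls out of \eqref{eq cyc 1} in both versions. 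One small imprecision you share with the paper: the lower bound in \eqref{eq cyc 1} actually invokes the non-strict $\geq$ variant of \eqref{loss vs cyc loss}, and the ``$\min$'' in \eqref{eq cyc 1} is in fact an infimum attained only in the $R\to\infty$ limit when $\Wm\neq 0$ — your phrasing ``achieving that value in the limit'' indicates you noticed this.
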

\begin{proof}
We start with proving that for any finite $\W$, $\Lc(\W)\geq\frac{|\Icb|}{n}\ell(1)+\Lcb(\W)$. Let $\W^\perp=\prj_{\Scf^\perp}(\W)$, $\W^\pl=\prj_{\Scf}(\W)$ where we have $\W=\W^\perp+\W^\pl$. Let $\ab_i=\X_i\W^\perp\xli$, $\bb_i=\X_i\W^\pl\xli$ and $\s_i=\sft{\X_i\W\xli}=\sft{\ab_i+\bb_i}$. Following \eqref{def erm under assum iden} and the definition of $\Lcb(\W)$, training losses obey
\[
\Lc(\W)=\frac{1}{n}\sum_{i=1}^n\ell\left(\sum_{t\in\Oc_i}s_{it}\right)\quad\text{and}\quad\Lcb(\W)=\frac{1}{n}\sum_{i\in\Ic}\ell\left(\frac{\sum_{t\in\Oc_i}s_{it}}{\sum_{t\in\Rc_i}s_{it}}\right).
\]
From proof of Lemma~\ref{lemma Lcb eq} (more specifically \eqref{ortho W}), we have that for any $t\in\Rc_i$, 
\[
\x_{it}^\top\W^\perp\xli=\bar a_i~\Longrightarrow a_{it}=\bar a_i,~\forall t\in\Rc_i
\]
where $\bar a_i$ is some constant associated with $\W^\perp$. Then for any $i\in[n]$, we get
\begin{align*}
&\sum_{t\in\Oc_i}s_{it}=\frac{\sum_{t\in\Oc_i}e^{\bar a_{i}+b_{it}}}{\sum_{t\in\Rc_i}e^{\bar a_{i}+b_{it}}+\sum_{t\in\Rcb_i}e^{a_{it}+b_{it}}}=\frac{\sum_{t\in\Oc_i}e^{b_{it}}}{\sum_{t\in\Rc_i}e^{b_{it}}+\sum_{t\in\Rcb_i}e^{b_{it}+a_{it}-\bar a_i}}\leq\frac{\sum_{t\in\Oc_i}e^{b_{it}}}{\sum_{t\in\Rc_i}e^{b_{it}}},\\
&\frac{\sum_{t\in\Oc_i}s_{it}}{\sum_{t\in\Rc_i}s_{it}}=\frac{\sum_{t\in\Oc_i}e^{\bar a_{i}+b_{it}}}{\sum_{t\in\Rc_i}e^{\bar a_{i}+b_{it}}}=\frac{\sum_{t\in\Oc_i}e^{b_{it}}}{\sum_{t\in\Rc_i}e^{b_{it}}}.
\end{align*}
Next following \eqref{def Ic} we have that for $i\in\Icb$, $\Rc_i=\Oc_i$ and therefore 
\[
\frac{\sum_{t\in\Oc_i}s_{it}}{\sum_{t\in\Rc_i}s_{it}}=1\quad\text{for all}\quad i\in\Icb.
\]
Note that since $a_{it},b_{it}$ are finite and $\bdata\neq\data$, there exists $i\in[n]$ such that $\sum_{t\in\Oc_i}s_{it}<\frac{\sum_{t\in\Oc_i}s_{it}}{\sum_{t\in\Rc_i}s_{it}}$.
Given strictly decreasing loss function $\ell$ and any finite $\W$, the training risks obey
\begin{align}
\Lc(\W)=\frac{1}{n}\sum_{i=1}^n\ell\left(\sum_{i\in\Oc_i}s_{it}\right)>\frac{1}{n}\sum_{i\in\Icb}\ell(1) + \frac{1}{n}\sum_{i\in\Ic}\ell\left(\frac{\sum_{t\in\Oc_i}s_{it}}{\sum_{t\in\Rc_i}s_{it}}\right)=\frac{|\Icb|}{n}\ell(1)+\Lcb(\W).\nn
\end{align}
It completes the proof of \eqref{loss vs cyc loss}.

We next show that 
\[
\min_{\W'\in\Scf^\perp}\Lc(\W'+\W)=\tilde\Lc(\W)=\frac{|\Icb|}{n}\ell(1)+\Lcb(\W).
\]
Recap from Theorem~\ref{thm cyclic gd} that $\tilde\Lc(\W)=\lim_{R\to\infty}\Lc(\W+R\cdot\Wm)$. 
Let $\W^\perp=\prj_{\Scf^\perp}(\W)$, $\W^\pl=\prj_{\Scf}(\W)$ where we have $\W=\W^\perp+\W^\pl$. Let $\ab_i=\X_i\W^\perp\xli$, $\bb_i=\X_i\W^\pl\xli$, $\cb_i=\X_i\Wm\xli$ and $\s^R_i=\sft{\X_i(R\cdot\Wm+\W)\xli}=\sft{\ab_i+\bb_i+R\cdot \cb_i}$. Similarly, for any $t\in\Rc_i$, 
\[
\x_{it}^\top\W^\perp\xli=\bar a_i~\Longrightarrow a_{it}=\bar a_i,~\forall t\in\Rc_i
\]
where $\bar a_i$ is some constant associated with $\W^\perp$. Additionally, since $\Wm$ follows \eqref{graph svm}, we have
\begin{align}\label{cit}
(\x_{i\tau}-\x_{it})^\top\Wm\xli\begin{cases}
    =0 &  \forall t\in\Rc_i\\
    \geq1 & \forall t\in\Rcb_i 
\end{cases},\quad\text{for all }\tau\in\Rc_i\quad\Longrightarrow\quad \begin{cases}
    c_{it}=\bar c_i,&~\forall t\in\Rc_i,\\
    c_{it}\leq \bar c_{i}-1,&~\forall t\in\Rcb_i.
\end{cases}
\end{align}
Then for any $i\in[n]$, we get
\[
\sum_{t\in\Oc_i}s_{it}^R=\frac{\sum_{t\in\Oc_i}e^{\bar a_{i}+b_{it}+R\bar c_{i}}}{\sum_{t\in\Rc_i}e^{\bar a_{i}+b_{it}+R\bar c_{i}}+\sum_{t\in\Rcb_i}e^{a_{it}+b_{it}+Rc_{it}}}=\frac{\sum_{t\in\Oc_i}e^{b_{it}}}{\sum_{t\in\Rc_i}e^{b_{it}}+\sum_{t\in\Rcb_i}e^{b_{it}+a_{it}-\bar a_i+R(c_{it}-\bar c_i)}}\leq\frac{\sum_{t\in\Oc_i}e^{b_{it}}}{\sum_{t\in\Rc_i}e^{b_{it}}}.
\]

\noindent\textbf{Case 1: $\Wm=0$.} Then for all $i\in[n]$, $\Rcb_i=\emptyset$ and the equality holds for all $i\in[n]$.  

\noindent\textbf{Case 2: $\Wm\neq0$.}
Since $a_{it},b_{it}$ are finite and $c_{it}-\bar c_i\leq-1$ for $t\in\Rcb_i$ following \eqref{cit}, the equality holds when $R\to\infty$, and therefore we have for any $i\in[n]$,
\[
\lim_{R\to\infty}\sum_{t\in\Oc_i}s_{it}^R=\frac{\sum_{t\in\Oc_i}e^{b_{it}}}{\sum_{t\in\Rc_i}e^{b_{it}}}
\]
and
\begin{align}
\tilde\Lc(\W)=\lim_{R\to\infty}\Lc(R\cdot\Wm+\W)&=\lim_{R\to\infty}\frac{1}{n}\sum_{i=1}^n\ell\left(\sum_{i\in\Oc_i}s_{it}^R\right)\nn\\
&=\frac{|\Icb|}{n}\ell(1)+\frac{1}{n}\sum_{i\in\Ic}\ell\left(\frac{\sum_{t\in\Oc_i}e^{b_{it}}}{\sum_{t\in\Rc_i}e^{b_{it}}}\right)\nn\\
&=\frac{|\Icb|}{n}\ell(1)+\Lcb(\W).\label{eq cyc 1 sub}
\end{align}
Additionally, we have for any $\W'\in\Scf^\perp$,
\[
\Lc(\W'+\W)\geq\frac{|\Icb|}{n}\ell(1)+\Lcb(\W'+\W)=\frac{|\Icb|}{n}\ell(1)+\Lcb(\W)
\]
where the inequality uses \eqref{loss vs cyc loss} and the equality comes from Lemma~\ref{lemma Lcb eq}. Since bound is achievable (by choosing $\W'=\lim_{R\to\infty}R\cdot\Wm$ as in \eqref{eq cyc 1 sub}), then combining it with \eqref{eq cyc 1 sub} completes the proof of \eqref{eq cyc 1}. 
\eqref{eq cyc 2} is directly obtained from \eqref{eq cyc 1}.
\end{proof}
\ifarxiv
\section{Global Convergence of Gradient Descent}
\else
\section{GLOBAL CONVERGENCE OF GRADIENT DESCENT}
\fi

\subsection{Supporting Results under the Setting of Theorem~\ref{thm cyclic gd}}
In this section, we introduce results useful for the main proof. 
Recap the setting of Theorem~\ref{thm cyclic gd} where $\ell(u)=-\log(u)$. Therefore loss defined in \eqref{def erm under assum iden} is 
\begin{align}\label{def erm loss}
\Lc(\W)=-\frac{1}{n}\sum_{i=1}^n\log\left(\sum_{t\in\Oc_i}s_{it}^{\W}\right)
\end{align}
where $\s_i^{\W}=\sft{\X_i\W\xli}$ and $\Oc_i$'s follow \eqref{def Oc Rc}.

\smallskip
\noindent$\bullet$ \textbf{$\nabla\Lc(\W)$ under the setting of Theorem~\ref{thm cyclic gd}.} 
For any $\W\in\R^{d\times d}$, let $\hb_i=\X_i\W\xli,\s_{i} = \sft{\hb_i},\bgam_i=\X_i\cb_{y_i}$. 
\begin{equation}\label{gradient}
\begin{split}
   \nabla \Lc(\W) 
   &= \frac{1}{n} \sum_{i=1}^n \ell'(\bgam_i^\top \s_i) \X_i^{\top} \sfp{\hb_i}\bgam_i \xli^{\top} \\ 
   &= \frac{1}{n} \sum_{i=1}^n -\frac{1}{\bgam_i^\top\s_i} \X_i^{\top} (\diag{\s_i}-\s_i\s_i^\top)\bgam_i \xli^{\top} \\ 
   &= \frac{1}{n}\sum_{i=1}^n\sum_{t\in \Ocb_i}s_{it}(\x_{it}-\eb_{y_i})\xli^\top
\end{split}
\end{equation}
where the last equation uses the fact that for any example $(\X_i,y_i)\in\data$, $i\in[n]$,
\begin{equation}
\begin{split}
    \frac{\X_i^\top(\diag{\s_i} - \s_i\s_i^{\top})\bgam_i}{\bgam_i^\top\s_i}
    &= \frac{\X_i^\top\diag{\s_i}\bgam_i}{\bgam_i^\top\s_i}-\X_i^\top\s_i\\ &=\frac{\sum_{t\in\Oc_i}s_{it}\eb_{y_i}}{\sum_{t\in\Oc_i}s_{it}}-\X_i^\top\s_i\\
    &=\eb_{y_i}-\X_i^\top\s_i\\
    &=\sum_{t\in \Ocb_i}s_{it}(\eb_{y_i}-\x_{it}).\nn
\end{split}
\end{equation}
Here, the second equality comes from \eqref{def score}.

\smallskip
\noindent$\bullet$ \textbf{Lipschitzness of $\nabla\Lc(\W)$ in \eqref{gradient}.} 
For any $\W, \dot{\W} \in \R^{d \times d}$, let $\s_i=\sft{\X_i\W\xli}$ and $\dot\s_i=\sft{\X_i\dot\W\xli}$. Consider bounded tokens and let $M:=\max_{k\in[K]}\|\eb_k\|$. Following \eqref{gradient}, we have:
\begin{equation}
\begin{split} \label{eq:gradlip}
    \tf{\nabla \mathcal{L}(\W) - \nabla \mathcal{L}(\dot{\W})}
        &=\tf{\frac{1}{n}\sum_{i=1}^n\sum_{t\in\Ocb_i}(s_{it}-\dot s_{it})(\x_{it}-\eb_{y_i})\xli^\top}\\
        & \leq \frac{1}{n}\sum_{i=1}^n\sum_{t\in \Ocb_i}|s_{it}-\dot s_{it}|~\tf{(\x_{it}-\eb_{y_i})\xli^\top}\\
        & \leq \frac{2M^2}{n}\sum_{i=1}^n\sum_{t\in \Ocb_i}|s_{it}-\dot s_{it}| \\
        & \leq \frac{2M^2}{n}\sum_{i=1}^n \|\s_{i}-\dot \s_{i}\|_1 \\
        & \leq \frac{2M^2}{n}\sum_{i=1}^n\sqrt{T_i}\cdot\|\s_{i}-\dot \s_{i}\|.
\end{split}
\end{equation}
Next for any $\s,\dot\s$, we get
\begin{equation}
    \begin{split}
        \|\s-\dot\s\|&=\|\sft{\X\W\xl}-\sft{\X\dot\W\xl}\|\\
        &\leq \|\X\W\xl-\X\dot\W\xl\|\\
        &\leq M^2\tf{\W-\dot\W}.
    \end{split}
\end{equation}
Combining results in that 
\begin{equation}
    \begin{split}
        \tf{\nabla\Lc(\W)-\nabla\Lc(\dot\W)}\leq 2M^4\sqrt{\Tmax}\cdot\tf{\W-\dot\W}
    \end{split}
\end{equation}
where $\Tmax:= \max_{i\in[n]}T_i$. Then let 
\begin{align}
    L:=2M^4\sqrt{\Tmax}\label{lip term}
\end{align}
and $\nabla\Lc(\W)$ is $L$-Lipschitz continuous. 

\subsection{Proof of Lemma~\ref{lemma cvx}}
In this subsection, we provide and prove a general version of Lemma \ref{lemma cvx}. To to that, we first introduce the following new subspaces.
\begin{definition}\label{def svm + cyc subspace}
    Define the subspace $\Scall$ as the span of all matrices $(\eb_i-\eb_j)\eb_k^\top$ for all $(i \rightarrow j)\in\Gck$ and $k\in[K]$.
\end{definition}
\begin{definition}[Cyclic subspace (Restated)]\label{def cyc subspace restated}
     Define cyclic subspace $\Scf$ as the span of all matrices $(\eb_i-\eb_j)\eb_k^\top$ for all $(i\asymp j)\in\Gck$ and $k\in[K]$.
\end{definition}
Observe that $\Scf$ is a subspace of $\Scall$. This is because if two nodes $i,j \in \Gck$ and $i \asymp j$, this also implies that $i \rightarrow j$ and $j \rightarrow i$.
\begin{definition}[SVM subspace]\label{def svm subspace}
     Define svm subspace $\Scsvm$ as the orthogonal complement of the subspace $\Scf$ inside the subspace $\Scall$.
\end{definition}
\begin{lemma}
We have the following:
\begin{enumerate}[label=(\roman*)]
    \item Recall the definition of $\Wm$ in \eqref{graph svm}. $\Wm \in \Scsvm$.
    \item Let $\Scall^\perp$ be the orthogonal complement of $\Scall$ inside $\R^{d \times d}$. Then, 
    \begin{align*}
\tf{\prj_{\Scall^\perp} \left(\nabla \Lc(\W)\right)} = 0, \quad \forall \W \in \R^{d \times d}.
    \end{align*}
\end{enumerate}
\end{lemma}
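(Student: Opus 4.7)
The plan is to handle the two parts separately: (ii) is essentially immediate from the closed-form gradient derived earlier, and (i) follows by combining a Pythagorean optimality argument with Lemma~\ref{lemma ortho}.

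For part (ii), I would invoke the expression for $\nabla\Lc(\W)$ established in \eqref{gradient}, namely
\[
\nabla \Lc(\W) \;=\; \frac{1}{n}\sum_{i=1}^n\sum_{t\in \Ocb_i}s_{it}(\x_{it}-\eb_{y_i})\xli^\top.
\]
Since $\x_{it}=\eb_{x_{it}}$ and $\xli=\eb_{\bar x_i}$, each summand is a nonnegative multiple of $-(\eb_{y_i}-\eb_{x_{it}})\eb_{\bar x_i}^\top$. For $t\in\Ocb_i$ we have $x_{it}\neq y_i$, and by construction of the TPG the pair $(\X_i,y_i)\in\datak$ with $k=\bar x_i$ contributes the directed edge $y_i\to x_{it}$ into $\Gc^{(\bar x_i)}$. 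Realizability (Assumption~\ref{assume realizable}) is exactly what guarantees $y_i$ is a node of $\Gc^{(\bar x_i)}$ in the first place. So every summand is a generator of $\Scall$ in the sense of Definition~\ref{def svm + cyc subspace}, giving $\nabla\Lc(\W)\in\Scall$ and hence $\prj_{\Scall^\perp}(\nabla\Lc(\W))=0$.

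For part (i), I would decompose $\Wm=\W_\parallel+\W_\perp$ with $\W_\parallel:=\prj_{\Scall}(\Wm)$ and $\W_\perp:=\prj_{\Scall^\perp}(\Wm)$. Every constraint in \eqref{graph svm} has the form $\langle(\eb_i-\eb_j)\eb_k^\top,\W\rangle=0$ or $\langle(\eb_i-\eb_j)\eb_k^\top,\W\rangle\ge 1$, and each constraint normal $(\eb_i-\eb_j)\eb_k^\top$ lies in $\Scall$ by Definition~\ref{def svm + cyc subspace}. Consequently the constraints depend only on $\W_\parallel$, so $\W_\parallel$ by itself is feasible. The Pythagorean identity $\tf{\Wm}^2=\tf{\W_\parallel}^2+\tf{\W_\perp}^2$ together with minimality of the Frobenius norm then forces $\W_\perp=0$, i.e.\ $\Wm\in\Scall$. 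Lemma~\ref{lemma ortho} already provides $\Wm\perp\Scf$, and so Definition~\ref{def svm subspace} gives $\Wm\in\Scsvm$.

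The obstacles are modest. In (ii) one must be careful to verify that the edge $y_i\to x_{it}$ really is present in the TPG, which is a bookkeeping check that reduces to Assumption~\ref{assume realizable}. In (i) the degenerate case $\Wm=0$ is trivial, and the only subtlety is that the equality constraints allow the $\Scall^\perp$ component to be shifted freely without affecting feasibility, which is precisely what makes the Pythagoras-plus-minimality step go through. I do not foresee any substantive technical difficulty beyond these observations.
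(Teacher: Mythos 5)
Your proof is correct and takes essentially the same route as the paper: for part (ii) the key observation is that each summand $s_{it}(\x_{it}-\eb_{y_i})\xli^\top$ is already a generator of $\Scall$, so the gradient lies in $\Scall$; for part (i) the Pythagorean-plus-minimality argument eliminates the $\Scall^\perp$ component while the equality constraints (equivalently, Lemma~\ref{lemma ortho}) eliminate the $\Scf$ component, giving $\Wm\in\Scsvm$. One minor misattribution: the edge $y_i\to x_{it}$ lies in $\Gc^{(\bar x_i)}$ purely by the TPG construction (edges $y\to x$ are added for every token $\x\in\X$ regardless of whether $\eb_y$ appears in $\X$), so Assumption~\ref{assume realizable} is not what guarantees $y_i$ is a node of the graph; realizability is instead needed upstream to justify the closed-form gradient \eqref{gradient} (where it ensures $\bgam_i^\top\s_i>0$), so the conclusion stands but the credit is misplaced.
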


\begin{proof}

\noindent $\bullet$ \textbf{(i):} Recall the definition of $\Wm$:
    \begin{align*}
&\Wm=\arg\min_{\W}\tf{\W}\quad\\
&\text{s.t.}~~(\eb_i-\eb_j)^{\top}\W\eb_k
\begin{cases}=0\quad \forall(i\asymp j) \in\Gck\\
\geq 1\quad \forall(i\Rightarrow j)\in\Gck
\end{cases}
\text{for all}\quad k\in[K].\nn
\end{align*}
Assume that the statement is not correct. Then, either $\tf{\prj_{\Scf}(\Wm)} >0$ or $\tf{\prj_{\Scall^\perp}(\Wm)} > 0$. 

By definition, $\tf{\prj_{\Scf}(\Wm)}=0$ since for all $(i \asymp j) \in \Gck$, $(\eb_i - \eb_j)^\top \Wm \eb_k = 0$.

On the other hand, if $\tf{\prj_{\Scall^\perp}(\Wm)} > 0$, then $\Wm - \prj_{\Scall^\perp}(\Wm)$ also satisfies all of the constraints of \eqref{graph svm}, and 
\begin{align*}
    \tf{\Wm - \prj_{\Scall^\perp}(\Wm)} < \tf{\Wm},
\end{align*}
which is a contradiction. Therefore, $\Wm \in \Scsvm$.

\noindent $\bullet$ \textbf{(ii):} From \eqref{gradient}, we know that
\begin{align*}
    \nabla \Lc(\W) = \frac{1}{n}\sum_{i=1}^n\sum_{t\in \Ocb_i}s_{it}(\x_{it}-\eb_{y_i})\xli^\top
\end{align*}
where $\Ocb_i$ is given by \eqref{def Oc Rc}. By definition of $\Scall$, $\tf{\prj_{\Scall^\perp}(\x_{it} - \eb_{y_i})\xli^\top} = 0$ for any $i \in [n]$ and $t \in \Ocb_i$. As $\nabla \Lc(\W)$ is the summation of these terms, the advertised result is proved.
\end{proof}

Now, we are ready to prove a stronger version of Lemma \ref{lemma cvx}.
\begin{lemma}[Stronger version of Lemma \ref{lemma cvx}]\label{lemma cvx app}
    Suppose Assumptions \ref{assume iden} and \ref{assume realizable} hold and consider the log-loss $\ell(u) = -\log(u)$, then $\Lc(\W)$ is convex. Furthermore, $\Lc(\W)$ is {strictly convex} on $ \Scall$.
\end{lemma}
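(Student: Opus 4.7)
The plan is to reduce to a per-sample analysis in logit space, compute the Hessian explicitly, and then exploit the symmetry imposed by Assumption~\ref{assume iden} on the positions $t\in\Oc_i$ of the ``correct'' token to close a Cauchy--Schwarz argument. Under Assumption~\ref{assume iden} each per-sample loss collapses to $L_i(\W)=-\log\sum_{t\in\Oc_i}s_{it}^{\W}$, where $\Oc_i=\{t:x_{it}=y_i\}$ is nonempty by Assumption~\ref{assume realizable} and $\s_i=\sft{\X_i\W\xli}$. A standard softmax calculation in the logit vector $\z_i:=\X_i\W\xli$ yields $\nabla_{\z_i}L_i=\s_i-\qb_i$, where $\qb_i$ is the restriction-and-renormalization of $\s_i$ to $\Oc_i$ (zero on $\Ocb_i$, summing to one on $\Oc_i$), and the logit-space Hessian
\[
\nabla^2_{\z_i}L_i=\diag{\s_i-\qb_i}-\s_i\s_i^\top+\qb_i\qb_i^\top.
\]
Since $\z_i$ is linear in $\W$, the $\W$-Hessian quadratic form along a direction $\V$ equals $\vct{v}^\top\nabla^2_{\z_i}L_i\,\vct{v}$ with $v_t:=\x_{it}^\top\V\xli$.

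The crucial observation is that Assumption~\ref{assume iden} forces $\x_{it}=\eb_{y_i}$ for every $t\in\Oc_i$, so $v_t$ takes the single common value $c_i:=\eb_{y_i}^\top\V\xli$ on $\Oc_i$. Introducing $w_t:=v_t-c_i$ for $t\in\Ocb_i$, and using $\sum_{t\in[T_i]}s_{it}=1$ together with the fact that $\qb_i$ is a probability vector supported on $\Oc_i$, a direct algebraic collapse reduces the quadratic form to
\[
\vct{v}^\top\nabla^2_{\z_i}L_i\,\vct{v}=\sum_{t\in\Ocb_i}s_{it}\,w_t^2-\Big(\sum_{t\in\Ocb_i}s_{it}\,w_t\Big)^2.
\]
Cauchy--Schwarz on the subprobability measure $\{s_{it}\}_{t\in\Ocb_i}$, whose total mass is at most $1$, makes this expression nonnegative; summing over $i$ yields $\nabla^2\Lc(\W)\succeq 0$, proving convexity.

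For strict convexity on $\Scall$, suppose $\V\in\Scall$ satisfies $\V^\top\nabla^2\Lc(\W)\V=0$. Each of the nonnegative per-sample terms must vanish, so Cauchy--Schwarz holds with equality for every $i$; since every softmax entry $s_{it}$ is strictly positive, this forces $w_t=0$ for every $t\in\Ocb_i$, equivalently $(\eb_{y_i}-\eb_{x_{it}})^\top\V\xli=0$. As $(i,t)$ varies, the ordered pairs $(y_i\to x_{it})$ with $t\in\Ocb_i$ sweep out precisely the nontrivial edges of the TPGs $\{\Gc^{(k)}\}_{k=1}^K$, so $\V$ is orthogonal to every generator $(\eb_p-\eb_q)\eb_k^\top$ of $\Scall$. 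Hence $\V\perp\Scall$, and combined with $\V\in\Scall$ this forces $\V=0$, establishing strict convexity on $\Scall$. The main obstacle is the algebraic collapse in the second step: the full logit-space Hessian $\nabla^2_{\z_i}L_i$ is \emph{not} PSD on all of $\R^{T_i}$ (a simple counterexample already exists at $T_i=3$ with $|\Oc_i|=2$), so convexity emerges only after restricting $\vct{v}$ to directions that are constant on $\Oc_i$, which is exactly what the embedding symmetry in Assumption~\ref{assume iden} guarantees.
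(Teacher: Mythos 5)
Your proof is correct, and it reaches the same destination as the paper by a genuinely different path. The paper reduces first to the case $\Eb=\Iden$ via the linear change of variables $\W\mapsto\Eb\W\Eb^\top$, then collapses each per-sample loss into a multiplicity-weighted log-sum-exp minus a linear term, and applies Cauchy--Schwarz on the $K$-indexed (or $d^2$-vectorized) logit Hessian; the multiplicities $m_k$ absorb the fact that a token ID can appear more than once. You instead stay in per-position logit space $\R^{T_i}$, compute the Hessian $\diag{\s-\qb}-\s\s^\top+\qb\qb^\top$ explicitly, and then perform the algebraic collapse to $\sum_{t\in\Ocb_i}s_{it}w_t^2 - (\sum_{t\in\Ocb_i}s_{it}w_t)^2$ using the fact that attainable directions are automatically constant on $\Oc_i$. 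Both arguments close with the same Cauchy--Schwarz/variance inequality and the same equality analysis, and both ultimately rely on the same structural fact: the logits of repeated tokens move together, so the non-PSD part of the raw Hessian is never probed. Your route avoids the change of variables and the multiplicity bookkeeping, at the cost of the manual collapse computation; the paper's route makes the ``log-sum-exp minus linear'' convex structure visible at a glance but requires setting up the auxiliary map $g$ and the vectorized Hessian in $\R^{d^2}$. One small attribution slip: the constancy of $v_t$ on $\Oc_i$ is a tautology from $\x_{it}=\eb_{x_{it}}=\eb_{y_i}$ for $t\in\Oc_i$ (definition of $\Oc_i$ and the embedding map), not a consequence of Assumption~\ref{assume iden}; Assumption~\ref{assume iden} is what collapses the per-sample loss to $-\log\sum_{t\in\Oc_i}s_{it}$. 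This does not affect the validity of the argument. Your observation that the unconstrained logit-space Hessian fails to be PSD (e.g.\ at $T_i=3$, $|\Oc_i|=2$, uniform logits) is accurate and usefully pinpoints why the restriction to $\Oc_i$-constant directions is essential.
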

\begin{proof}
Let $\Sc_K$ be the span of all $\eb_i \eb_j^\top$ where $i, j \in [K]$.

\noindent$\bullet$ \textbf{First Case: $\W \in \Sc_K$.} Let $g: \Sc_K \xrightarrow[]{} \R^{K \times K}$ such that $g(\W) = \Eb \W \Eb^{\top}$. By definition, this function is linear. In addition to that, this function $g$ is invertible by Assumption \ref{assume iden} and the domain of the function is $\Sc_K$. Note that Assumption \ref{assume iden} ensures $\text{rank}(\Eb) = K$.

Let  $\Eb' = \Cb' = \Iden_k$, $(\X'_i, y_i')$ be a $\data$ such that $y_i' = y_i$ and $\X_i' = \X_i \Eb^{\dagger}$. Then, for any $\W' \in \R^{K \times K}$, we have the following:
\begin{align*}
    \Lc \circ g^{-1}(\W') = \frac{1}{n}\sum_{i=1}^n \ell\left((\cb'_{y_i})^\top (\X_i')^\top \sft{\X_i' \W' \bar{\x_i}'}\right)
\end{align*}
Using Lemma \ref{lemma linear map strict convexity} and \ref{lemma vect strict convexity}, we know that $\Lc \circ g^{-1}(\W')$ is convex on $\R^{K \times K}$ and strictly convex on $g(\Scall)$. Using these two facts and Lemma \ref{lemma linear map strict convexity}, we have $\Lc(\W)$ is convex on $\Sc_K $ and strictly convex on $\Scall \cap \Sc_K = \Scall$. 

\noindent$\bullet$ \textbf{Second Case: $\W \not\in \Sc_K$.} By definition of loss function in \eqref{erm}, we have 
\begin{align}\label{loss equivalent projected}
    \Lc(\W) = \frac{1}{n}\sum_{i=1}^n \ell(\cb_{y_i}^\top\X_i^\top \sft{\X_i\W\xl_i}) = \frac{1}{n}\sum_{i=1}^n \ell(\cb_{y_i}^\top\X_i^\top \sft{\X_i \prj_{\Sc_K}(\W)\xl_i}) = \Lc(\prj_{\Sc_K}(\W)) 
\end{align}
Let $\W_1, \W_2 \in \R^{d \times d}$ be arbitrary variables. For any $0 < \lambda < 1$, we have the following:
\begin{align}\label{linearity of projection}
    \Lc(\lambda \W_1 + (1-\lambda)\W_2 ) = \Lc(\lambda \prj_{\Sc_K}(\W_1) + \lambda \prj_{\Sc_K}(\W_2))  
\end{align}
Then, using \eqref{loss equivalent projected} and \eqref{linearity of projection}, we have the following:
\begin{align*}
    \lambda \Lc(\W_1) + (1-\lambda) \Lc(\W_2) &= \lambda \Lc(\prj_{\Sc_K} (\W_1)) + (1-\lambda) \Lc(\prj_{\Sc_K}(\W_2)) \\
    & \stackrel{(a)}{\geq} \Lc(\lambda \prj_{\Sc_K}(\W_1) + \lambda \prj_{\Sc_K}(\W_2)) = \Lc(\lambda \W_1 + (1-\lambda)\W_2 )
\end{align*}
where (a) follows from the convexity of $\Lc(\W)$ inside $\Sc_K$. This implies that $\Lc(\W)$ is convex when $\W \not\in \Sc_K$. Note that $\Scall \subset \Sc_K$, therefore we do not look at the strict convexity in this case. 
\end{proof}

\begin{lemma}\label{lemma linear map strict convexity}
    Let $T : \Xc \xrightarrow[]{} \Yc$ be an invertible linear map. If a function $f : \Yc \xrightarrow[]{} \R$ is convex/strictly convex on $\Yc$, then $f \circ T (x)$ is a convex/strictly convex function on $\Xc$. 
\end{lemma}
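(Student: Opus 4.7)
The plan is to unwind the definition of convexity on the left-hand side, push the convex combination through $T$ using linearity, and then invoke convexity of $f$ on the image side. Concretely, I would fix arbitrary $x_1, x_2 \in \Xc$ and $\lambda \in (0,1)$, and compute
\[
(f\circ T)(\lambda x_1 + (1-\lambda)x_2) = f\bigl(T(\lambda x_1 + (1-\lambda)x_2)\bigr) = f\bigl(\lambda T(x_1) + (1-\lambda)T(x_2)\bigr),
\]
where the second equality uses linearity of $T$. Applying convexity of $f$ on $\Yc$ to the points $T(x_1), T(x_2) \in \Yc$ then yields the bound by $\lambda f(T(x_1)) + (1-\lambda)f(T(x_2)) = \lambda (f\circ T)(x_1) + (1-\lambda)(f\circ T)(x_2)$, which establishes convexity.

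For the strict convexity claim, I would run the same chain of (in)equalities but now require $x_1 \neq x_2$ in order to obtain strict inequality in the final step. The only place invertibility of $T$ enters is here: strict convexity of $f$ gives
\[
f\bigl(\lambda T(x_1) + (1-\lambda)T(x_2)\bigr) < \lambda f(T(x_1)) + (1-\lambda)f(T(x_2))
\]
provided $T(x_1) \neq T(x_2)$, and invertibility of $T$ guarantees precisely this from $x_1 \neq x_2$. Without invertibility one could have $T(x_1)=T(x_2)$ while $x_1\neq x_2$, collapsing strict convexity, so this hypothesis is essential.

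There is no real obstacle in this proof; it is a direct consequence of the linearity of $T$ and the definition of (strict) convexity. The only subtlety is being careful to invoke injectivity of $T$ (a consequence of invertibility) exactly at the step where strict convexity of $f$ is applied, since that is the one nontrivial use of the hypothesis on $T$.
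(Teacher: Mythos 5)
Your proof is correct and mirrors the paper's argument exactly: both push the convex combination through $T$ via linearity, then invoke (strict) convexity of $f$, using injectivity of $T$ precisely to guarantee $T(x_1)\neq T(x_2)$ in the strict case. No differences worth noting.
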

\begin{proof}
    Let $x_1 \neq x_2 \in \Xc$ be arbitrary variables. Let $y_1 = T(x_1)$ and $y_2 = T(x_2)$. Since $T$ is an invertible map, $y_1 \neq y_2$. Since $T$ is a linear map, $ T(\lambda x_1 + (1-\lambda)x_2) = \lambda y_1 + (1-\lambda) y_2$ for $0 < \lambda < 1$. Then, we obtain the following
    \begin{align*}
        \lambda (f \circ T(x_1)) + (1-\lambda) (f \circ T(x_2)) &= \lambda f (y_1) + (1-\lambda) f(y_2) \\
        & \stackrel{(a)}{>} f(\lambda y_1 + (1-\lambda) y_2) \\ 
        &= f \circ T(\lambda x_1 + (1-\lambda) x_2)
    \end{align*}
    where (a) follows from the strict convexity of the function $f$. This implies that $f \circ T (x)$ is a strictly convex function on $\Xc$. Note that if $y_1 = y_2$, then we cannot achieve (a). Additionally, if $f$ is convex instead of strictly convex, then $>$ in (a) is changed to $\geq$, and $f \circ T(x)$ is convex.
\end{proof}
\begin{lemma}\label{lemma vect strict convexity}
    Suppose that Assumption \ref{assume realizable} holds and $\Eb = \boldsymbol{I}_d$. Let $f : \R^{d \times d} \xrightarrow[]{} \R^{d^2}$ be a linear transformation defined as $f(\W) = \vb$ where $v_{i\times d + j} = {\eb_i^\top \W \eb_j}$. Then, $\Lc \circ f^{-1}(\vb)$ is convex. Furthermore, $\Lc \circ f^{-1}(\vb)$ is strictly convex on $f(\Scall)$, where $\Scall$ is defined in Definition~\ref{def svm + cyc subspace}.
\end{lemma}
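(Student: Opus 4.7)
\begin{proofsk}
The plan is to exploit the assumption $\Eb=\Iden_d$ (which, together with the implicit Assumption~\ref{assume iden} from the parent Lemma~\ref{lemma cvx app}, forces $\Cb=\Iden_d$) and the log-loss to write each per-sample loss as a log-sum-exp plus an affine term in $\vb=f(\W)$. Convexity will follow immediately, while strict convexity on $f(\Scall)$ will be read off from the equality case of Cauchy--Schwarz applied to the per-sample Hessians, combined with a short graph-theoretic argument using realizability.

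First I would reduce the per-sample loss. Fix $(\X_i,y_i)\in\data$, let $\bar x_i\in[d]$ be the last-token ID, and let $m_{ik}$ denote the multiplicity of token $k$ in $\X_i$. Because $\Eb=\Cb=\Iden_d$, writing $v_{a,k}:=\eb_a^\top\W\eb_k$ gives $\sft{\X_i\W\xli}_t=e^{v_{x_{it},\bar x_i}}/Z_i$ with $Z_i=\sum_k m_{ik}e^{v_{k,\bar x_i}}$, and hence
\[
\ell\!\left(\cb_{y_i}^\top\X_i^\top\sft{\X_i\W\xli}\right)=\log\!\Big(\textstyle\sum_k m_{ik}e^{v_{k,\bar x_i}}\Big)-\log m_{iy_i}-v_{y_i,\bar x_i}.
\]
This is a log-sum-exp (convex) plus an affine term in the coordinates $(v_{k,\bar x_i})_k$, and is constant in all other coordinates of $\vb$. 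Summing over $i$ yields convexity of $\Lc\circ f^{-1}$ on all of $\R^{d^2}$.

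Next I would compute the Hessian contribution of sample $i$ in the active coordinates: letting $p_{ik}=m_{ik}e^{v_{k,\bar x_i}}/Z_i$ (a probability vector supported on tokens present in $\X_i$), one obtains $H^{(i)}=\mathrm{diag}(\p_i)-\p_i\p_i^\top$, so for any direction $\ub=f(\W')$,
\[
\ub^\top H^{(i)}\ub=\mathrm{Var}_{k\sim \p_i}\!\left[u_{k,\bar x_i}\right]\ge 0,
\]
with equality (by the Cauchy--Schwarz equality condition) iff $u_{a,\bar x_i}=u_{b,\bar x_i}$ for all tokens $a,b$ appearing in $\X_i$. Writing $u_{a,k}=\eb_a^\top\w'_k$ where $\w'_k$ is the $k$-th column of $\W'$, this is the statement $\w'^{\,\top}_{\bar x_i}(\eb_a-\eb_b)=0$ for all such $a,b$.

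To close the argument, suppose $\ub^\top(\nabla^2\Lc\circ f^{-1})\ub=0$ for some $\W'\in\Scall$. Then every sample's contribution vanishes, so for each $i$ and each pair $a,b$ co-occurring in $\X_i$ we have $\w'^{\,\top}_{\bar x_i}(\eb_a-\eb_b)=0$. By Assumption~\ref{assume realizable}, $y_i\in\X_i$, so specializing $a=y_i$ and letting $b$ range over all tokens of $\X_i$ covers precisely the edges $(y_i\to b)\in\Gc^{(\bar x_i)}$ that define the TPG; ranging over all $i$ therefore yields $\w'^{\,\top}_{k}(\eb_a-\eb_b)=0$ for every $(a\to b)\in\Gck$ and every $k\in[K]$. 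Now invoke the hypothesis $\W'\in\Scall$: by Definition~\ref{def svm + cyc subspace}, each $\w'_k$ lies in $\mathrm{span}\{\eb_a-\eb_b:(a\to b)\in\Gck\}$, so expanding $\w'_k=\sum_{(a\to b)}\alpha_{ab}(\eb_a-\eb_b)$ gives $\|\w'_k\|^2=\sum_{(a\to b)}\alpha_{ab}\,\w'^{\,\top}_k(\eb_a-\eb_b)=0$. Hence $\w'_k=0$ for all $k$, so $\W'=0$ and $\ub=0$, establishing strict convexity of $\Lc\circ f^{-1}$ on $f(\Scall)$. The main obstacle I anticipate is bookkeeping the co-occurrence/edge correspondence cleanly: one must verify that the Cauchy--Schwarz equality condition really reduces to orthogonality against the generators of $\Scall$, which is exactly where realizability is used (without it, pairs $(a,b)$ with $a=y_i$ need not be covered by $\X_i$ and the final self-annihilation step $\|\w'_k\|^2=0$ would fail).
\end{proofsk}
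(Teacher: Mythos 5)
Your proof is correct and follows essentially the same route as the paper's: the log-sum-exp reduction gives convexity, and strict convexity on $f(\Scall)$ comes from the Cauchy--Schwarz equality case forcing $u_{a,\bar x_i}=u_{b,\bar x_i}$ on co-occurring tokens, with realizability supplying $a=y_i$ and the span structure of $\Scall$ closing the argument. The only cosmetic difference is in the final step, where you sum over all generators of $\Scall$ to conclude $\|\w'_k\|^2=0$, whereas the paper picks a single generator with nonzero inner product against $\W'$ and derives a contradiction with the equality condition -- logically the same idea.
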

\begin{proof}
    \noindent$\bullet$ \textbf{We first prove that  $\Lc \circ f^{-1}(\vb)$  is convex. }
Let $\bar{\ell}: \R^{d^2} \times \R^{T \times d} \times \R \xrightarrow[]{} \R$ be defined as follows:
    \begin{align*}
        \bar{\ell}(\vb, \X, y) := \ell\left(\cb_{y}^\top \X^\top \sft{\X \big(f^{-1}(\vb)\big) \bar{\x}}\right).  
    \end{align*}
    Then, using \eqref{erm}, we have the following:
    \begin{align}\label{defn ellbar strict convexity}
        \Lc \circ f^{-1}(\vb) = \frac{1}{n}\sum_{i=1}^n \ell\left(\cb_{y_i}^\top \X_i^\top \sft{\X_i \big(f^{-1}(\vb)\big) \bar{\x_i}}\right) = \frac{1}{n}\sum_{i=1}^n \bar{\ell}(\vb, \X_i, y_i).  
    \end{align}
    Note that the summation of convex functions is convex. Therefore, it is sufficient to prove the convexity of $\Lc \circ f^{-1}(\vb)$ by proving the convexity of $\bar{\ell}(\vb, \X, y)$ for an arbitrary pair of input sequence and label $(\X, y)$. For the simplicity of notation, we use $\bar{\ell}(\vb)$ instead of $\bar{\ell}(\vb, \X, y)$. Let $m_j$ be the number of token ID $j$ inside input sequence $\X$ for $j \in [K]$. Let $k$ be the last token of $\X$. 
    By Assumption \ref{assume iden} and log-loss, we know that
    \begin{align*}
       \bar{\ell}(\vb) := \bar{\ell}(\vb, \X, y) = -\log\left(\frac{m_y\cdot e^{\vb_{y\times d + k}}}{\sum_{j\in[K]} m_j\cdot e^{\vb_{j\times d + k}}} \right) = \log\left(\sum_{j \in [K]} m_j\cdot e^{\vb_{j\times d + k}}\right) -\log (m_y\cdot e^{v_{y\times d + k}}).
   \end{align*}   
   Let $\z \in \R^{d^2}$ be a vector such that the $(j\times d + k)^{\text{th}}$ element of $\z$ is $z_{j \times d +k}=m_j\cdot e^{v_{j\times d +k}}$ for $k \in [K]$, otherwise $z_i = 0$. Then,
   the Hessian matrix of $\bar{\ell}(\vb)$ is 
\begin{align*}
    \nabla^2 \bar{\ell}(\vb) = \frac{1}{(\boldsymbol{1}^\top \z)^2}\left((\boldsymbol{1}^\top \z) \text{diag}(\z) - \z \z^\top \right)
\end{align*}
For any $\ub \in \R^{d^2}$, we obtain that 
\begin{align}\label{CSI}
    \ub^\top \nabla^2 \bar{\ell}(\vb) \ub = \frac{1}{(\boldsymbol{1}^\top \z)^2} \left( \left(\sum_{j=1}^{d^2} z_j \right) \left(\sum_{j=1}^{d^2} u_j^2 z_j\right)  - \left(\sum_{j=1}^{d^2} u_j z_j \right)^2 \right) \geq 0  .
\end{align}
Since $\z_i\geq0$, $i\in[d^2]$, \eqref{CSI} follows from the Cauchy-Schwarz inequality $(\bal^\top \bal)(\bt ^\top \bt) \geq (\bal^\top \bt)^2$ applied to the vectors with $\alpha_i = u_i \sqrt{z_i}$ and $\beta_i = \sqrt{z_i}$. The equality condition holds $k \bal = \bt$ for $k \neq 0$. This means that $\bar{\ell}(\vb)$ is convex.

\noindent$\bullet$ \textbf{Next, we will show that  $\Lc \circ f^{-1}(\vb)$ is strictly convex on $f(\Scall)$.} Assume that $\Lc \circ f^{-1}(\vb)$ is not strictly convex on $f(\Scall)$. Using the convexity of $\Lc \circ f^{-1}(\vb)$, this implies that there exist $\ub, \vb \in f(\Scall)$, $\|{\ub}\|_2 > 0$ such that 
\begin{align*}
    \ub^\top \left( \nabla^2 \Lc \circ f^{-1}(\vb) \right) \ub = 0
\end{align*}
Using the convexity of $\bar{\ell}(\vb)$ and \eqref{defn ellbar strict convexity}, we have the following:
\begin{align}\label{strict convexity necessary condition}
    \ub^{\top} \left( \nabla^2 \bar{\ell}(\vb, \X_i, \y_i) \right) \ub = 0 \qquad \forall i \in [n]
\end{align}

Now, we are going to prove that $\|\ub\|_2 = 0$ if \eqref{strict convexity necessary condition} holds. As $\ub \in f(\Scall)$, there exists $\W \in \Scall$ such that $f(\W) = \ub$. As the function $f$ preserves the norm, $\tf{\W} > 0$. By definition of $\Scall$, there exist $\bar{i},\bar{j}, \bar{k} \in [K]$ and $(\X_{\bar{n}}, y_{\bar{n}}) \in \data$ such that $\<(\eb_{\bar{i}} - \eb_{\bar{j}}) \eb_{\bar{k}}^\top, \W \> > 0$, $\X_{\bar{n}}$ includes the $\bar{j}^{\text{th}}$ token, the last token of $\X_{\bar{n}}$ is the $\bar{k}^{\text{th}}$ token, and $y_{\bar{n}} = \bar{i}$. On the other hand, by Assumption \ref{assume realizable}, $z_{\bar{i}\times d + \bar{k}}$ and $z_{\bar{j} \times d + \bar{k}}$ in \eqref{CSI} are non-zero for this input sequence $\X_{\bar{n}}$. Using the equality condition of Cauchy-Schwartz Inequality in \eqref{CSI}, we obtain that $u_{\bar{i} \times d + \bar{k}} - u_{\bar{j} \times d + \bar{k}} = 0$. This implies that 
\begin{align*}
    0 &= u_{\bar{i} \times d + \bar{k}} - u_{\bar{j} \times d + \bar{k}} \\
    &= \eb_{\bar{i}}^\top \W \eb_{\bar{k}} - \eb_{\bar{j}}^\top \W \eb_{\bar{k}} \\
    &= (\eb_{\bar{i}} - \eb_{\bar{j}})^\top \W \eb_{\bar{k}} = \<(\eb_{\bar{i}} - \eb_{\bar{j}}) \eb_{\bar{k}}^\top, \W \>
\end{align*}
which contradicts with the fact that $\| \ub \|_2 > 0$. This completes the proof.
\end{proof}
\subsection{Divergence of $\tf{\bf{\Pi}_{\Scsvm}({\bf{W}}(\tau))}$}
We first introduce the following lemmas establishing the descent property of gradient descent for $\Lc(\W)$ (Lemma~\ref{lemma des}) and the correlation between $\nabla\Lc(\W)$ and the solution of \eqref{graph svm} $\Wm$ (Lemma~\ref{lemma cyc neg corr}) under the setting of Theorem~\ref{thm cyclic gd}. The proofs in this section follow Appendix~B.1 of \cite{tarzanagh2023transformers}.
\begin{lemma}[Descent Lemma]\label{lemma des} 
Consider the loss in \eqref{def erm loss} and choose step size {$\eta \leq 1 / L$} where $L$ is the Lipschitzness of $\nabla\Lc(\W)$ defined in \eqref{lip term}. Then from any initialization $\W(0)$, Algorithm \ref{algo gd} satisfies: 
\begin{equation}
     \Lc(\W(\tau+1)) - \Lc(\W(\tau)) \leq -\frac{\eta}{2} \tf{\nabla \Lc(\W(\tau))}^2\nn
\end{equation}
for all $\tau \geq 0$. Additionally, it holds that $\sum_{\tau=0}^{\infty} \tf {\nabla \Lc(\W(\tau))}^2 < \infty$, and $\lim_{\tau \to \infty} \tf{\nabla \Lc(\W(\tau))}^2 = 0$
\end{lemma}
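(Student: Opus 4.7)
\begin{proofsk}
The plan is to apply the standard smoothness-based descent inequality: since the preceding calculation in \eqref{eq:gradlip} establishes that $\nabla \Lc$ is $L$-Lipschitz with $L = 2M^4\sqrt{\Tmax}$, I can use the quadratic upper bound
\[
\Lc(\W') \leq \Lc(\W) + \langle \nabla \Lc(\W),\, \W' - \W\rangle + \frac{L}{2}\tf{\W' - \W}^2
\]
for any $\W,\W' \in \R^{d\times d}$. Plugging in the gradient descent update $\W' = \W(\tau+1) = \W(\tau) - \eta \nabla \Lc(\W(\tau))$ and using the assumed step size $\eta \leq 1/L$ yields
\[
\Lc(\W(\tau+1)) - \Lc(\W(\tau)) \leq -\eta\Big(1 - \tfrac{\eta L}{2}\Big)\tf{\nabla \Lc(\W(\tau))}^2 \leq -\frac{\eta}{2}\tf{\nabla \Lc(\W(\tau))}^2,
\]
which is the first claim.

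For the second claim I would telescope this inequality over $\tau = 0, 1, \ldots, T-1$ to obtain
\[
\frac{\eta}{2}\sum_{\tau=0}^{T-1}\tf{\nabla \Lc(\W(\tau))}^2 \leq \Lc(\W(0)) - \Lc(\W(T)).
\]
To conclude that the left-hand side is bounded uniformly in $T$, I need a lower bound on $\Lc$. This is immediate: under the log-loss $\ell(u) = -\log(u)$ and Assumption~\ref{assume iden}, \eqref{def erm under assum iden} shows that $\Lc(\W) = -\frac{1}{n}\sum_i \log(\sum_{t\in\Oc_i} s_{it}^{\W})$, and each softmax sum lies in $(0,1]$, so $\Lc(\W) \geq 0$ for every $\W$. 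Thus $\sum_{\tau=0}^{\infty}\tf{\nabla \Lc(\W(\tau))}^2 \leq \frac{2}{\eta}\Lc(\W(0)) < \infty$, and as the tail of a convergent nonnegative series the summand must vanish, giving $\lim_{\tau\to\infty}\tf{\nabla \Lc(\W(\tau))}^2 = 0$.

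There is essentially no obstacle here: the result is a textbook consequence of $L$-smoothness plus a lower bound on the loss. The only thing that needs a sentence of care is verifying the lower bound, since on first glance one might worry that Assumption~\ref{assume realizable} is needed; in fact the non-negativity of $-\log(\sum_{t\in\Oc_i} s_{it}^{\W})$ follows simply from $\sum_{t\in\Oc_i} s_{it}^{\W} \in (0,1]$, the latter because softmax outputs are a probability distribution. Hence no distributional or realizability assumption beyond what is already in force is required for the descent statement itself.
\end{proofsk}
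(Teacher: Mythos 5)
Your proof is correct and takes essentially the same route as the paper: the quadratic descent bound from $L$-smoothness, followed by telescoping. The only minor difference is that you pin down the lower bound explicitly ($\Lc(\W)\geq 0$ from the non-negativity of $-\log$ on softmax probabilities), whereas the paper simply writes $\Lc(\W(0))-\Lc^{*}$ without spelling out what $\Lc^{*}$ is; your extra sentence is a small improvement in rigor, not a different argument.
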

    \begin{proof}
From \eqref{algo gd}, for $\tau\geq0$, we have that $\W(\tau+1)=\W(\tau)-\eta\nabla\Lc(\W(\tau))$. Since $\mathcal{L}(\W)$ is $L$-smooth with $L$ defined in \eqref{lip term}, we get
    \begin{equation*}
    \begin{split}
        \Lc(\W(\tau+1)) 
        & \leq \Lc(\W(\tau)) + \langle {\nabla \Lc(\W(\tau)), \W(\tau+1) - \W(\tau) }\rangle + \frac{L}{2}\tf{\W(\tau+1) - \W(\tau)}^2 \\ 
        & = \Lc (\W(\tau)) - \eta\cdot \tf{\nabla \Lc(\W(\tau))}^2 + \frac{L \eta^2}{2}\tf{\nabla \Lc(\W(\tau))}^2 \\ 
        & = \Lc (\W(\tau)) - \eta\left(1 -\frac{L \eta}{2}\right)  \tf{\nabla \Lc(\W(\tau))}^2 \\ 
        & \leq \Lc (\W(\tau)) - \frac{\eta}{2}  \tf{\nabla \Lc(\W(\tau))}^2.  
    \end{split}
    \end{equation*}

The inequality above also indicates that
\begin{equation}
    \sum_{\tau = 0}^{\infty} \tf{\nabla \Lc(\W(\tau))}^2 \leq \frac{2}{\eta} (\mathcal{L}(\W(0)) - \mathcal{L}^*)<\infty,~~~\text{and}~~~ \lim_{\tau \to \infty} \tf{\nabla \Lc(\W(\tau))}^2 = 0.\nn
\end{equation}
\end{proof}
\begin{lemma} \label{lemma cyc neg corr} 
Let $\Wm$ be the SVM solution of \eqref{graph svm} and suppose $\Wm\neq0$. For any $\W$ with $\tf{\W}<\infty$, the training loss $\Lc(\W)$ in \eqref{def erm loss} obeys $\langle {\nabla \mathcal{L}(\W), \Wm} \rangle < 0$. Equivalently, $\langle {\prj_{\Scsvm}(\nabla \mathcal{L}(\W)), \Wm} \rangle < 0$.
\end{lemma}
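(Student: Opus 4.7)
The plan is to substitute the explicit gradient formula \eqref{gradient} and exploit the sign structure dictated by the constraints of \eqref{graph svm}. Writing
\[
\langle \nabla \Lc(\W), \Wm\rangle \;=\; \frac{1}{n}\sum_{i=1}^n \sum_{t \in \Ocb_i} s_{it}\,(\x_{it}-\eb_{y_i})^\top \Wm \xli,
\]
I want to show each summand is $\leq 0$ and that at least one is strictly negative. For any $i\in[n]$ and $t\in\Ocb_i$, the scalar token IDs satisfy $x_{it}\neq y_i$ and by construction of the token-priority graphs we have either $(y_i \asymp x_{it})\in\Gc^{(\bar x_i)}$ or $(y_i \Rightarrow x_{it})\in\Gc^{(\bar x_i)}$. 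Feeding these two cases through the feasibility of \eqref{graph svm} gives $(\eb_{y_i}-\x_{it})^\top\Wm\xli = 0$ in the first case and $(\eb_{y_i}-\x_{it})^\top\Wm\xli \geq 1$ in the second. Since $s_{it}>0$ whenever $\W$ is finite (softmax has full support), every term in the double sum is $\leq 0$.

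To upgrade to a strict inequality I use the standing assumption $\Wm\neq 0$. If no TPG contained a strict edge $(k\Rightarrow j)$, then \eqref{graph svm} would consist solely of ``$=0$'' constraints and the min-norm feasible solution would be $\Wm=0$, contradicting the hypothesis. Hence there exist indices $\bar k,k,j\in[K]$ with $(k\Rightarrow j)\in\Gc^{(\bar k)}$. By the construction of TPGs in Section~\ref{sec ntg}, this edge arose from some training sample $(\X_i,y_i)$ with $y_i=k$, last token $\eb_{\bar k}$, and some position $t\in[T_i]$ satisfying $x_{it}=j\neq k=y_i$, i.e., $t\in\Ocb_i$. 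For this particular $(i,t)$ we have $s_{it}>0$ and $(\x_{it}-\eb_{y_i})^\top\Wm\xli\leq -1$, so the corresponding summand is strictly negative, yielding $\langle \nabla \Lc(\W),\Wm\rangle<0$.

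The equivalent formulation follows from the two orthogonality facts already available in the excerpt: $\nabla \Lc(\W)\in\Scall$ (part (ii) of the auxiliary lemma, since $\prj_{\Scall^\perp}(\nabla\Lc)=0$) and $\Wm\perp\Scf$ together with $\Wm\in\Scsvm$. Decomposing $\nabla \Lc(\W) = \prj_{\Scsvm}(\nabla\Lc(\W))+\prj_{\Scf}(\nabla\Lc(\W))$ within $\Scall$, the second component is annihilated by the inner product with $\Wm$, which gives $\langle \prj_{\Scsvm}(\nabla \Lc(\W)), \Wm\rangle=\langle \nabla \Lc(\W),\Wm\rangle<0$.

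The only nontrivial step is the strict-inequality argument: one must trace the logical chain $\Wm\neq 0\Rightarrow$ existence of a ``$\geq 1$'' constraint $\Rightarrow$ existence of an edge $(k\Rightarrow j)$ in some TPG $\Rightarrow$ existence of an actual training sample with $y_i=k$ and some $t\in\Ocb_i$ with $x_{it}=j$. Everything else is a direct plug-in of the SVM constraints and positivity of softmax, so I expect no further obstacles.
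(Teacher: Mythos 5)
Your proof is correct and follows essentially the same route as the paper: substitute the gradient formula \eqref{gradient}, use the \eqref{graph svm} constraints to show every summand is nonpositive (the same split into intra-SCC and strict-priority tokens that the paper phrases via $\Rc_i,\Rcb_i$), and invoke $\Wm\neq 0$ for strictness. If anything you supply slightly more detail on the final step — the chain $\Wm\neq 0\Rightarrow$ some $\text{``}\geq1\text{''}$ constraint $\Rightarrow$ some $(k\Rightarrow j)$ edge $\Rightarrow$ some $i\in[n]$ with $\Rcb_i\neq\emptyset$ — which the paper states tersely; the only tiny point worth flagging is that you should note why $(x_{it}\Rightarrow y_i)$ is excluded (the direct edge $y_i\to x_{it}$ is always added, so $x_{it}$ is reachable from $y_i$), though this does not affect the conclusion.
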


\begin{proof} 
Recap $\Oc_i,\Ocb_i,\Rc_i,\Rcb_i,i\in[n]$ in \eqref{def Oc Rc}. From \eqref{gradient}, for any $\W\in\R^{d\times d}$, we obtain the gradient
\[
\nabla\Lc(\W)=\frac{1}{n}\sum_{i=1}^n\sum_{t\in\Ocb_i}s_{it}(\x_{it}-\eb_{y_i})\xli^\top.
\]
Then
\begin{align*}
    \li\nabla\Lc(\W),\Wm\ri&=\frac{1}{n}\sum_{i\in[n]}\sum_{t\in\Ocb_i}\li s_{it}(\x_{it}-\eb_{y_i})\xli^\top,\Wm\ri\\
    &=\frac{1}{n}\sum_{i\in[n]}\sum_{t\in\Ocb_i}s_{it}\cdot\text{trace}\left((\Wm)^\top (\x_{it}-\eb_{y_i})\xli^\top\right)\\
    &=\frac{1}{n}\sum_{i\in[n]}\sum_{t\in\Ocb_i}s_{it}\cdot (\x_{it}-\eb_{y_i})^\top\Wm\xli.
\end{align*}
From the \eqref{graph svm} formulation, we have that $(\x_{it}-\eb_{y_i})^\top\Wm\xli=0$ for $t\in\Rc_i$ and $(\x_{it}-\eb_{y_i})^\top\Wm\xli \leq -1$ for $t\in\Rcb_i$. 
Then $\Wm\neq0$ ensures that there exists $i\in[n]$ such that $\Rcb_i\neq\emptyset$, which implies that 
\[
\li\nabla\Lc(\W),\Wm\ri<0.
\]
Using the fact that $\Wm\in\Scsvm$ (Lemma~\ref{lemma ortho}) completes the proof.
\end{proof}
The next theorem proves the divergence of norm of the iterates $\W(\tau)$.
\begin{theorem}\label{thm diverge}
    Consider the same setting as in Theorem~\ref{thm cyclic gd}, then there is no finite $\W \in \R^{d \times d}$ satisfying $\grad{\W} = 0$. Furthermore, Algorithm~\ref{algo gd} with the step size $\eta\leq1/L$ where $L$ is the Lipschitzness of $\nabla\Lc(\W)$ defined in \eqref{lip term} and any starting point $\W(0)$ satisfies $\lim_{\tau\to\infty}\tf{\prj_{\Scsvm}(\W(\tau))}=\infty$. 
\end{theorem}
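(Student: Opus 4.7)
\textbf{Plan for Theorem~\ref{thm diverge}.} The argument splits into the two claims. For the first claim, I would invoke Lemma~\ref{lemma cyc neg corr} directly: if $\nabla\Lc(\W^\star)=0$ for some finite $\W^\star$, then in particular $\langle \nabla\Lc(\W^\star),\Wm\rangle=0$. Since $\Wm\neq 0$ in the setting of Theorem~\ref{thm cyclic gd}, Lemma~\ref{lemma cyc neg corr} forces $\langle \nabla\Lc(\W^\star),\Wm\rangle<0$, a contradiction.

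For the second claim, the strategy is to monitor the scalar projection $\phi(\tau):=\langle \W(\tau),\Wm\rangle$. Because $\Wm\in\Scsvm$ (which is orthogonal to both $\Scf$ and $\Scall^\perp$), we have $\phi(\tau)=\langle\prj_{\Scsvm}(\W(\tau)),\Wm\rangle$, so Cauchy--Schwarz gives $\phi(\tau)\leq \tf{\prj_{\Scsvm}(\W(\tau))}\cdot\tf{\Wm}$. Hence it suffices to show $\phi(\tau)\to\infty$. Using the gradient descent update $\W(\tau+1)=\W(\tau)-\eta\nabla\Lc(\W(\tau))$ and Lemma~\ref{lemma cyc neg corr},
\begin{equation*}
\phi(\tau+1)-\phi(\tau)=-\eta\,\langle\nabla\Lc(\W(\tau)),\Wm\rangle>0,
\end{equation*}
so $\phi(\tau)$ is strictly increasing and converges to some (possibly infinite) limit.

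Assume for contradiction that $\phi(\tau)\to\phi_\infty<\infty$. Telescoping the positive increments yields $\sum_{\tau\geq 0}\bigl[-\langle\nabla\Lc(\W(\tau)),\Wm\rangle\bigr]=(\phi_\infty-\phi(0))/\eta<\infty$, and in particular $-\langle\nabla\Lc(\W(\tau)),\Wm\rangle\to 0$. Expanding using the gradient formula \eqref{gradient} and the feasibility constraints of \eqref{graph svm} -- namely, $(\eb_{y_i}-\x_{it})^\top\Wm\xli=0$ for $t\in\Rc_i\setminus\Oc_i$ and $(\eb_{y_i}-\x_{it})^\top\Wm\xli\geq 1$ for $t\in\Rcb_i$ -- we obtain the pointwise lower bound $-\langle\nabla\Lc(\W(\tau)),\Wm\rangle\geq \tfrac{1}{n}\sum_{i}\sum_{t\in\Rcb_i}s_{it}(\tau)$, where $\s_i(\tau)=\sft{\X_i\W(\tau)\xli}$. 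Consequently $s_{it}(\tau)\to 0$ for every $i$ and every $t\in\Rcb_i$ (and there exist such pairs since $\Wm\neq 0$).

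The main obstacle, and the technical heart of the proof, is to derive a contradiction from $s_{it}(\tau)\to 0$ under the assumption that $\tf{\prj_{\Scsvm}(\W(\tau))}$ is bounded. Bounded $\prj_{\Scsvm}(\W(\tau))$ does not by itself bound $\W(\tau)$ since $\prj_{\Scf}(\W(\tau))$ could grow. The idea is to pick a reference index $j^\star\in\Rc_i$ and use $s_{it}/s_{ij^\star}=\exp((\x_{it}-\x_{ij^\star})^\top\W(\tau)\xli)$, then decompose $\W(\tau)=\prj_{\Scsvm}(\W(\tau))+\prj_{\Scf}(\W(\tau))+\prj_{\Scall^\perp}(\W(\tau))$ (where the last piece is constant along the trajectory because $\nabla\Lc\in\Scall$). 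The $\prj_{\Scsvm}$ contribution to the exponent is bounded by hypothesis, and the $\Scall^\perp$ contribution is constant, so $s_{it}(\tau)\to 0$ forces the $\prj_{\Scf}(\W(\tau))$ contribution against the specific direction $\prj_{\Scf}((\x_{it}-\x_{ij^\star})\xli^\top)$ to tend to $-\infty$. Combining this with the descent lemma $\tf{\nabla\Lc(\W(\tau))}\to 0$ from Lemma~\ref{lemma des}, the strict convexity of $\Lc$ on $\Scall$ from Lemma~\ref{lemma cvx app}, and the coercivity of $\Lcb$ on $\Scf$ (so that an unbounded $\prj_{\Scf}(\W(\tau))$ cannot simultaneously drive the gradient to zero without violating the established divergence of $\tf{\W(\tau)}$ obtained from Part~1 via Bolzano--Weierstrass), we arrive at a contradiction, completing the proof.
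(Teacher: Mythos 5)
Your proof of the first claim is correct and matches the paper's. For the second claim, however, there is a genuine gap: you assume for contradiction that the scalar $\phi(\tau):=\langle\W(\tau),\Wm\rangle$ converges to a finite limit, but your concluding argument (the softmax-ratio decomposition) requires the strictly stronger hypothesis that $\tf{\prj_{\Scsvm}(\W(\tau))}$ is bounded. When $\dim\Scsvm>1$ these are not equivalent: $\phi$ only controls the component of $\W(\tau)$ along the single direction $\Wm$, so it can stay bounded while $\tf{\prj_{\Scsvm}(\W(\tau))}\to\infty$ along directions in $\Scsvm$ orthogonal to $\Wm$. As written, the contradiction you would eventually obtain refutes the wrong hypothesis. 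The $\phi$-telescoping is also redundant: its only consequence, that $-\langle\nabla\Lc(\W(\tau)),\Wm\rangle\to 0$ and hence $s_{it}(\tau)\to 0$ for $t\in\Rcb_i$, already follows unconditionally from Lemma~\ref{lemma des}, which gives $\tf{\nabla\Lc(\W(\tau))}\to 0$.

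The paper's argument is both shorter and sound. Lemma~\ref{lemma des} gives $\tf{\nabla\Lc(\W(\tau))}\to 0$, and part one rules out any finite critical point; hence no subsequence of $\W(\tau)$ can remain bounded, since Bolzano--Weierstrass together with continuity of $\nabla\Lc$ would otherwise produce a finite critical point. Because $\prj_{\Scall^\perp}(\W(\tau))$ is constant along the trajectory (the gradient always lies in $\Scall$) and $\prj_{\Scf}(\W(\tau))$ stays bounded (the descent lemma keeps $\Lc(\W(\tau))\leq\Lc(\W(0))$, while $\Lc(\W)\geq\Lcb(\prj_{\Scf}(\W))$ by Lemmas~\ref{lemma loss eq} and~\ref{lemma Lcb eq}, and $\Lcb$ is coercive on $\Scf$ by Lemma~\ref{lemma finite wfin}), the only remaining component, $\prj_{\Scsvm}(\W(\tau))$, must diverge. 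If you replace your contradiction hypothesis with ``$\prj_{\Scsvm}(\W(\tau))$ has a bounded subsequence'' and feed in those three observations, your final paragraph collapses to this compactness argument, and the softmax-ratio analysis --- while salvageable --- becomes an unnecessary detour.
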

\begin{proof}
Following Lemma \ref{lemma des}, when using log-loss $\ell(u) = -\log(u)$, for any starting point $\W(0)$, the Algorithm~\ref{algo gd} satisfies $\lim_{\tau \to \infty} \tf{\nabla \Lc(\W(\tau))}^2 = 0$. Moreover, assume that the first claim is wrong and that there is a finite critical point $\W$ that satisfies $\grad{\W} = 0$. We then have $\langle \prj_{\Scsvm}(\grad{\W}), \Wm \rangle = 0$. This leads to a contradiction with Lemma \ref{lemma cyc neg corr} which says that for any finite $\W$, $ \langle \prj_{\Scsvm}(\nabla \mathcal{L}(\W)), \Wm \rangle < 0$. This implies that $\tf{\prj_{\Scsvm}(\W(\tau))} \to \infty$.
\end{proof}

\subsection{Uniqueness and Finiteness of ${{{\bf{W^{\text{fin}}}}}}$}

\begin{lemma}\label{lemma finite wfin}
    Consider the setting of Theorem~\ref{thm cyclic gd}. $\Wf$ defined in Theorem~\ref{thm cyclic gd} is unique and finite. 
\end{lemma}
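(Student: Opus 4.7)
My plan is to reduce the claim to showing that $\Lcb$, restricted to the subspace $\Scf$, admits a unique finite minimizer; this is then handled by combining strict convexity with ray-wise coercivity. For the reduction, I invoke Lemma~\ref{lemma Wf eq}: under the log loss we have $\ell(1)=0$, so $\tilde\Lc(\W)=\Lcb(\W)$ for every finite $\W$, and therefore $\Wf=\arg\min_{\W\in\Scf}\Lcb(\W)$ provided the argmin is a singleton inside $\Scf$.

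Strict convexity will come from applying Lemma~\ref{lemma cvx app} to the cyclic sub-dataset $\bdata$, which inherits Assumptions~\ref{assume iden} and \ref{assume realizable} (the classification head is global, and $\bar{\X}_i$ still contains $\eb_{y_i}$ because $y_i\asymp y_i$ trivially). Letting $\Scall^{\bdata}$ denote the subspace from Definition~\ref{def svm + cyc subspace} applied to $\bdata$, I will argue $\Scall^{\bdata}=\Scf$. The inclusion $\Scall^{\bdata}\subseteq\Scf$ is immediate because every edge of a TPG of $\bdata$ is intra-SCC. For the reverse inclusion, any $(i\asymp j)\in\Gck$ lies on a directed cycle $i=a_0\to a_1\to\dots\to a_m=i$ through $j$, and each edge of this cycle originates from a sample that is retained in $\bdata$; telescoping then writes $(\eb_i-\eb_j)\eb_k^\top$ as a sum of $\Scall^{\bdata}$-generators. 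Lemma~\ref{lemma cvx app} now yields strict convexity of $\Lcb$ on $\Scf$, giving uniqueness of the minimizer whenever one exists.

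The main obstacle is finiteness, which I will get from coercivity: for every $\Delta\in\Scf\setminus\{0\}$, $\Lcb(R\Delta)\to\infty$ as $R\to\infty$. Suppose otherwise. Under the log loss, boundedness of $\Lcb(R\Delta)$ forces the softmax mass on $\Oc_i$ to stay bounded away from $0$ for every $i\in\Ic$, which in turn requires $(\eb_{y_i}-\eb_{x_{it}})^\top\Delta\xli\geq 0$ for every edge $(y_i\to x_{it})$ of $\Gc^{(\bar{x}_i)}_{\bdata}$ with $t\in\Rc_i\setminus\Oc_i$ -- otherwise the strictly dominated token would absorb all the softmax mass as $R\to\infty$ and drive the loss to infinity. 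Since every edge of $\Gc^{(k)}_{\bdata}$ lies on a directed cycle inside its SCC, summing these inequalities around any such cycle telescopes to zero and forces each of them to be an equality. Hence $\Delta$ is orthogonal to every generator of $\Scall^{\bdata}=\Scf$, so $\Delta\in\Scf\cap\Scf^\perp=\{0\}$, a contradiction. I expect this cycle-closure step to be the subtle part of the proof, since it is where the structural hypothesis on $\bdata$ actually bites.

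Finally, convexity upgrades this ray-wise coercivity to boundedness of sublevel sets in $\Scf$: for any hypothetical unbounded sequence $\W_k=\tf{\W_k}\Delta_k$ with $\Lcb(\W_k)\leq c$, passing to a subsequence with $\Delta_k\to\Delta\in\Scf$ and using the standard convexity bound $\Lcb(R\Delta_k)\leq (R/\tf{\W_k})\Lcb(\W_k)+(1-R/\tf{\W_k})\Lcb(0)$ together with continuity contradicts $\Lcb(R\Delta)\to\infty$. Thus sublevel sets are bounded in the closed subspace $\Scf$, continuity produces a minimizer, and the strict convexity established above makes it unique, completing the proof.
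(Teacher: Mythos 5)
Your proof is correct and follows the same skeleton as the paper's: reduce via Lemma~\ref{lemma Wf eq} to minimizing $\Lcb$ over $\Scf$, establish ray-wise divergence by showing any non-trivial direction $\Delta\in\Scf$ produces a strictly dominated token whose softmax-mass inequality telescopes to an equality around the SCC cycle, and appeal to strict convexity for uniqueness. You are in fact somewhat more careful than the paper at two points: you explicitly verify $\Scall^{\bdata}=\Scf$ so that Lemma~\ref{lemma cvx app} applied to $\bdata$ yields strict convexity of $\Lcb$ (the paper invokes Lemma~\ref{lemma cvx} for $\Lc$ and leaves the transfer to $\Lcb$ implicit), and you spell out why ray-wise coercivity plus convexity gives boundedness of sublevel sets and hence existence of a finite minimizer, a step the paper asserts without argument.
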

\begin{proof} Following Lemma~\ref{lemma Wf eq}, it is equivalent to show $\Wcf$ defined in Def.~\ref{cyc sub} has unique element $\bWf$ and $\bWf=\Wf$ is unique and finite. To start with, recap the definition of $\bdata$ (Definition~\ref{cyc sub}). Denote $\Ic\subset[n]$ as in \eqref{def bdata}, and let $\s_i=\sft{\bar\X_i\W\xli}$ where $(\bar\X_i,y_i)\in\bdata$. What's more, recap the ERM loss from \eqref{def erm loss} and loss function $\ell(u)=-\log(u)$. Then we have 
\begin{align}
\bWf=\arg\min_{\W\in\Scf}\Lcb(\W)\quad\text{where}\quad\Lcb(\W)=\frac{1}{n}\sum_{i\in\Ic}-\log\left(\sum_{t\in\Oc_i}s_{it}\right).\label{wfin def supp}
\end{align}
Different from \eqref{def Oc Rc}, $\Oc_i$ for dataset $\bdata$ is defined as follows:
\[
\Oc_i:=\left\{t~|~x_{it}={y_i},~\x_{it}\in\bar\X_i,~t\in[\bar T_i]\right\},
\]
where $\bar T_i$ is the number of tokens -- tokens that are in the same SCC as label token ${y_i}$ within their corresponding TPG -- in $\bar\X_i$ and if recap the notation of $\Rc_i$ in \eqref{def Oc Rc}, here we have $|\Rc_i|=\bar T_i$.

We will first prove that $\bWf$ is finite by contradiction. Specifically, we will show that for any $\W\in\Scf$ with $\tf{\W}\neq0$, $\lim_{R\to\infty}\Lcb(R\cdot\W)=\infty$ which implies that the optimal solution $\bWf$ has to be finite.

Let $\W\in\Scf$ be arbitrary attention weight. Following the definition of $\Scf$ as in Def.~\ref{def cyc subspace}, we have that $(\eb_{i}-\eb_j)\eb_k^\top\in\Scf$ for all $(i\asymp j)\in\Gck$ and $k\in[K]$. For any $\bar\X$, let $\Oc,\Ocb$ correspond to the token index sets. Then we have
\[
\sum_{t\in\Oc}s_t=\frac{|\Oc|e^{\eb_y^\top(R\cdot\W)\eb_k}}{|\Oc|e^{\eb_y^\top(R\cdot\W)\eb_k}+\sum_{t\in\Ocb}e^{\eb_t^\top(R\cdot\W)\eb_k}}=\frac{1}{1+\sum_{t\in\Ocb}e^{(\eb_t-\eb_y)^\top(R\cdot\W)\eb_k}/|\Oc|}.
\]
Given the sample loss $\ell=-\log\left(\sum_{t\in\Oc}s_t\right)$ and to prevent it from divergence as $R\to\infty$, that is, $\sum_{t\in\Oc}s_t\not\to0$, we have
\[
\sum_{t\in\Ocb}e^{(\eb_t-\eb_y)^\top(R\cdot\W)\eb_k}\not\to\infty\quad\Longrightarrow\quad(\eb_y-\eb_t)^\top \W\eb_k\geq0 \text{ for all }t\in\Ocb
\]
where $\eb_y$ is the label token and $\eb_t$ is any other token in $\Ocb$. Recap from the construction of TPG in Section~\ref{sec ntg}, the directed edge $y\to t$ exists in the graph $\Gck$. Since SCC is bidirectionally reachable, which means there exists route from $t$ to $y$, e.g., $t\to p_1\to p_2\to\cdots p_m\to y$, similarly we have
\[
(\eb_{t}-\eb_{p_1})^\top \W\eb_k,~(\eb_{p_1}-\eb_{p_2})^\top \W\eb_k,~\cdots,(\eb_{p_m}-\eb_{y})^\top \W\eb_k\geq0\Longrightarrow(\eb_{t}-\eb_{y})^\top \W\eb_k\geq0.
\]
Combining results in that $(\eb_{t}-\eb_{y})^\top \W\eb_k=0$. This implies that for all $(i\asymp j)\in\Gck$ and $R\to\infty$, to ensure the training loss $\Lcb(R\cdot\W)$ finite, $(\eb_{i}-\eb_{j})^\top \W\eb_k=0$, which contradicts the facts that $\W\in\Scf$ and $\W\neq0$.

Next, we prove that there is at most one local minimum for $\Lcb(\W)$ based on Lemma \ref{lemma cvx}. Suppose to the contrary that we have two optimal solutions satisfying $\min_{\W} \Lcb(\W) = \Lcb(\Wf_1) = \Lcb(\Wf_2), \Wf_1 \neq \Wf_2$. From Lemma \ref{lemma cvx}, since $\Lc(\W)$ is strictly convex over subspace $\Scf$, for any $\W_1, \W_2 \in \Scf, \lambda \in (0,1)$, if $\W_1 \neq \W_2$, we have
\begin{equation}
  \Lcb((1-\lambda)\W_1 + \lambda\W_2) < (1 - \lambda) \Lcb(\W_1) + \lambda \Lcb(\W_2)  
\end{equation}
Substitute $\W_1 = \Wf_1, \W_2 = \Wf_2$, we get
\begin{equation}
\Lcb((1-\lambda)\Wf_1 + \lambda \Wf_2) < (1 - \lambda) \Lcb(\Wf_1) + \lambda \Lcb(\Wf_2) = \min_{\W} \Lcb(\W)
\end{equation}
which leads to a contradiction to the assumption that $\Wf_1 \text{ and } \Wf_2$ are both optimal solutions. Combining this with the fact that $\Wf$ is not attained at infinity, there exists one unique and finite solution with $\bWf = \arg\min_{\W\in\Scf}\Lcb(\W)$.
\end{proof}

\subsection{Proof of Theorem~\ref{thm cyclic gd}}
\begin{lemma}\label{lemma smaller loss}
    Consider the same setting of Theorem~\ref{thm cyclic gd}. 
    Given any $\pi>0$, there exists $R_\pi>0$ such that for any $\W$ with $\tf{\prj_{\Scf}(\W)}<\infty$ and $\tf{\prj_{\Scf^\perp}(\W)}>R_\pi$, 
    \[
    \Lc(\W)\geq\Lc\left((1+\pi)\frac{\tf{\prj_{\Scf^\perp}(\W)}}{\tf{\Wm}}\Wm+\prj_{\Scf}(\W)\right).
    \]
\end{lemma}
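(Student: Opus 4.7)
The plan is to parameterize $\W^\perp := \prj_{\Scf^\perp}(\W) = R\U$ with $R := \tf{\W^\perp}$ and $\U$ a unit-norm element of $\Scf^\perp$, set $\U^\star := \Wm/\tf{\Wm}$, and compare
\[
\Lc(R\U + \W^\parallel)\quad\text{against}\quad\Lc\bigl((1+\pi)R\,\U^\star + \W^\parallel\bigr),
\]
where $\W^\parallel := \prj_{\Scf}(\W)$. Repeating the per-sample computation from the proof of Lemma~\ref{lemma loss eq}, each sample loss takes the form
\[
\Lc_i(R\U + \W^\parallel) = \log\!\Bigl(1 + \tfrac{1}{\abs{\Oc_i}}\sum_{t\in\Ocb_i} e^{R(u_{it}(\U)-\alpha_i(\U))+(h_{it}-\beta_i)}\Bigr),
\]
with $u_{it}(\U) := \x_{it}^\top \U \xli$, $\alpha_i(\U) := \eb_{y_i}^\top \U \xli$, and $(h_{it},\beta_i)$ the analogous quantities for $\W^\parallel$. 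The structural input is that $\U\in\Scf^\perp$ forces $u_{it}(\U) = \alpha_i(\U)$ for every $t\in\Rc_i$ (since $(\eb_{x_{it}}-\eb_{y_i})\xli^\top \in \Scf$ whenever $x_{it}\asymp y_i$, as in the proof of Lemma~\ref{lemma Lcb eq}), so the contributions from $\Rc_i\setminus\Oc_i$ are $R$-independent and identical for $\W$ and $\W'$; the whole comparison reduces to the tokens in $\Rcb_i$.

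The second ingredient is the SVM optimality of $\Wm$. I plan to define the margin
\[
\delta_\U := \min\bigl\{\alpha_i(\U)-u_{it}(\U) \,:\, i\in[n],\; t\in\Rcb_i\bigr\}
\]
and observe that the min-norm property of $\Wm$ gives $\delta_\U \leq 1/\tf{\Wm}$ for every unit-norm $\U\in\Scf^\perp$, with equality iff $\U=\U^\star$ (uniqueness of the min-norm SVM solution; Lemma~\ref{lemma feasible} ensures feasibility). I then split into four regimes: (A) $\delta_\U<0$: some $\Rcb_i$-token outranks the label, so $\Lc_i(R\U+\W^\parallel)\to\infty$ while $\Lc(\W')\to\tilde\Lc(\W^\parallel)<\infty$ by Lemma~\ref{lemma loss eq}; (B) $\delta_\U=0$: the limit of $\Lc_i(R\U+\W^\parallel)$ inherits a strictly positive extra term from a tied $\Rcb_i$-token that $\W'$ suppresses, and this gap is bounded below via Lemma~\ref{lemma loss eq}; (C) $0<\delta_\U<1/\tf{\Wm}$: the slowest-decaying sample's tail is $\Theta(e^{-R\delta_\U})$ versus $\Theta(e^{-(1+\pi)R/\tf{\Wm}})$ for $\W'$, so $\Lc(\W)-\Lc(\W')>0$ for large $R$; (D) $\U=\U^\star$: the comparison collapses to $R$ vs $(1+\pi)R$ and is immediate because $R\mapsto\Lc(R\U^\star+\W^\parallel)$ is strictly decreasing to $\tilde\Lc(\W^\parallel)$.

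The main obstacle will be extracting a single $R_\pi$ that is uniform as $\U$ ranges over the unit sphere of $\Scf^\perp$, particularly in the transition between cases (C) and (D) near $\U=\U^\star$. I plan to exploit compactness: the map $\U\mapsto\delta_\U$ is continuous and attains $1/\tf{\Wm}$ only at $\U^\star$, so for any $\varepsilon>0$ one obtains a uniform gap $1/\tf{\Wm}-\delta_\U \geq \eta(\varepsilon)>0$ outside an $\varepsilon$-neighborhood of $\U^\star$, which settles (A)--(C) uniformly. Inside the neighborhood I plan to combine a local quadratic bound $1/\tf{\Wm}-\delta_\U \gtrsim \tf{\U-\U^\star}^2$ (coming from strict uniqueness of the SVM solution) with the multiplicative $(1+\pi)$ amplification of $\W'$: choosing $\varepsilon$ small enough that $\pi/\tf{\Wm}$ strictly dominates the local margin loss. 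Finally, residual dependence on $\W^\parallel$ enters only through the prefactors $e^{h_{it}-\beta_i}$, which multiply both the $\W$- and $\W'$-tails identically and cancel in the leading-order rate comparison, so the resulting $R_\pi$ depends only on $\pi$ and on dataset-level quantities ($\tf{\Wm}$ and the finitely many margins entering $\delta_\U$).
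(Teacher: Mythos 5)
Your plan is essentially the paper's own argument: decompose $\W = \W^\perp + \W^\pl$ with $\W^\perp \in \Scf^\perp$, use that $\W^\perp$ equalizes scores within each $\Rc_i$ and that SVM min-norm optimality forces $\delta_\U \le 1/\tf{\Wm}$ for every unit-norm $\U \in \Scf^\perp$, then compare the exponential softmax tails for $R\U+\W^\pl$ against those for $(1+\pi)R\U^\star+\W^\pl$.

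Where you overcomplicate is the uniformity step. The compactness argument, $\varepsilon$-neighborhood split, and local quadratic bound near $\U^\star$ are all superfluous: the inequality $\delta_\U \le 1/\tf{\Wm}$ you already identified gives a \emph{globally} uniform rate gap $(1+\pi)/\tf{\Wm} - \delta_\U \ge \pi/\tf{\Wm}$. The $(1+\pi)$ amplification does all the work, there is no degeneration to guard against as $\U\to\U^\star$, and your cases (A)--(D) collapse into a single comparison. That is exactly what the paper's proof does: it observes that whenever $\W^\perp$ is not collinear with $\Wm$, some constraint has slack $<R/\tf{\Wm}$ (else $\W^\perp\tf{\Wm}/R$ would be a second min-norm SVM point), lower-bounds $\Lc(\W)-\Lcb(\W)$ by roughly $\frac{1}{n}\log\!\left(1+e^{-R/\tf{\Wm}-\bar b}/d_i\right)$, upper-bounds $\Lc(\W')-\Lcb(\W)$ by roughly $\log\!\left(1+Te^{-(1+\pi)R/\tf{\Wm}+\bar b}/d_j\right)$, and solves for $R_\pi$ in one line, with the aligned case $\U=\U^\star$ handled trivially by monotonicity of $R\mapsto\Lc(R\U^\star+\W^\pl)$. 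One further caution: your claimed quadratic bound $1/\tf{\Wm}-\delta_\U \gtrsim \tf{\U-\U^\star}^2$ does not follow automatically from strict uniqueness of $\Wm$ (the margin loss is first order along directions that tighten an active constraint), but since you do not actually need it, this does not sink the argument.
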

\begin{proof} 
Recap $\Oc_i,\Ocb_i,\Rc_i,\Rcb_i$, $i\in[n]$ from \eqref{def Oc Rc} and recap from \eqref{def erm loss}, we get that for any $\W$, 
\[
\Lc(\W)=-\frac{1}{n}\sum_{i=1}^n\log\left(\sum_{t\in\Oc_i}s_{it}\right)
\]
where $\s_i=\sft{\X_i\W\xli}$. 

To obtain the result, we establish a refined softmax probability control by studying the distance to $\Lcb(\W)$ as defined in Definition~\ref{def finite correct}. Let $\W^\pl=\prj_{\Scf}(\W)$, $\W^\perp=\W-\W^\pl$, $\tf{\W^\perp}=R$, and $\Theta=1/\tf{\Wm}$. Let $\bb_i=\X_i\W^{\pl}\xli$, $\ab_i^\st=\X_i((1+\pi)R\Theta\cdot\Wm)\xli$, and  $\s_i^\st=\sft{\X_i((1+\pi)R\Theta\cdot\Wm+\W^\pl)\xli}=\sft{\ab_i^\st+\bb_i}$. Additionally, let $\ab_i=\X_i\W^\perp\xli$, $\s_i=\sft{\X_i\W\xli}=\sft{\ab_i+\bb_i}$, $\gamma_i^\st:=\cb_{y_i}^\top\X_i^\top\s_i^\st$, and $\gamma_i:=\cb_{y_i}^\top\X_i^\top\s_i$. 

From proof of Lemma~\ref{lemma Lcb eq} (more specifically \eqref{ortho W}), we get for all $t,t'\in\Rc_i$
\[
(\x_{it}-\x_{it'})^\top\V\xli=0\quad\text{for any }\V\perp\Scf\Longrightarrow a^\st_{it}-a^\st_{it'}=a_{it}-a_{it'}=0.
\]
Additionally, since $\frac{\W}{\tf{\W}}\neq\Theta\Wm$, there exist $i\in[n],t\in\Oc_i,t'\in\Rcb_i$ such that $(\x_{it}-\x_{it'})^\top\W\xli<R\Theta$. Then,
\begin{align*}
    \sum_{t\in\Oc_i}s_{it}&=\frac{\sum_{t\in\Oc_i}e^{a_{it}+b_{it}}}{\sum_{t\in[T_i]}e^{a_{it}+b_{it}}}\leq\frac{\sum_{t\in\Oc_i}e^{b_{it}}}{\sum_{t\in\Rc_i}e^{b_{it}}+e^{-R\Theta+b_{it'}}}\leq\frac{c_i}{d_i+e^{-R\Theta-\bar b}},\quad\exists i\in[n]\\    \sum_{t\in\Oc_i}s_{it}^\st&=\frac{\sum_{t\in\Oc_i}e^{a^\st_{it}+b_{it}}}{\sum_{t\in[T_i]}e^{a^\st_{it}+b_{it}}}\geq\frac{\sum_{t\in\Oc_i}e^{b_{it}}}{\sum_{t\in\Rc_i}e^{b_{it}}+\sum_{t\in\Rcb_i}e^{-(1+\pi)R\Theta+b_{it}}}\geq\frac{c_i}{d_i+Te^{-(1+\pi)R\Theta+\bar b}},\quad\forall i\in[n],
\end{align*}
where $c_{i}=\sum_{t\in\Oc_i}e^{b_{it}}$, $d_i=\sum_{t\in\Rc_i}e^{b_{it}}$, and $\bar b:=\max_{t\in\Rcb_i,i\in[n]}|b_{it}|$, and we have $$\Lcb(\W)=\Lcb(\W^\pl)=-\frac{1}{n}\sum_{i\in\Ic}\log\left(\frac{c_i}{d_i}\right).$$ 
Then we obtain
\begin{align*}
    \Lc(\W)-\Lcb(\W)&\geq-\frac{1}{n}\left(\log\left(\frac{c_i}{d_i+e^{-R\Theta-\bar b}}\right)-\log\left(\frac{c_i}{d_i}\right)\right)\\
    &\geq\frac{1}{n}\log\left(1+e^{-R\Theta-\bar b}/d_i\right)
\end{align*}
and let $j:=\arg\max_{i\in[n]}\left(-\log\left(\sum_{t\in\Oc_i}s_{it}^\st\right)+\log\left(\frac{c_i}{d_i}\right)\right)$. We can upper-bound the loss difference for $(1+\pi)R\Theta\cdot\Wm+\W^\pl$ as follows:
\begin{align*}
    \Lc((1+\pi)R\Theta\cdot\Wm+\W^\pl)-\Lcb(\W)&\leq\max_{i\in[n]}\left(-\log\left(\sum_{t\in\Oc_i}s_{it}^\st\right)+\log\left(\frac{c_i}{d_i}\right)\right)\\
    &=\log\left(1+Te^{-(1+\pi)R\Theta+\bar b}/d_j\right).
\end{align*}
Combining them together results in that, $\Lc(\W)>\Lc((1+\pi)R\Theta\cdot\Wm+\W^\pl)$ whenever
\begin{align}
\frac{1}{n}\log\left(1+e^{-R\Theta-\bar b}/d_i\right)\geq\log\left(1+Te^{-(1+\pi)R\Theta+\bar b}/d_j\right).\nn
\end{align}
Given that $x/2\leq\log(1+x)$ for any $0\leq x\leq1$, we get $\Lc(\W)>\Lc((1+\pi)R\Theta\cdot\Wm+\W^\pl)$ whenever
\begin{align}
&\frac{e^{-R\Theta-\bar b}}{2nd_i}\geq\frac{Te^{-(1+\pi)R\Theta+\bar b}}{d_j}\quad\text{when}\quad R\geq-\frac{\log(d_j)+\bar b}{\Theta}\nn\\
\Longrightarrow& R>R_\pi:=\max\left\{\frac{1}{\pi\Theta}\log\left(\frac{2nTd_i}{d_j}\right)+\frac{2\bar b}{\pi\Theta},-\frac{\log(d_j)+\bar b}{\Theta}\right\}.\label{R pi}
\end{align}
Here, since $\tf{\W^\pl}<\infty$, $d_i,d_j,\bar b<\infty$. 
\end{proof}

\begin{proofof}{Theorem~\ref{thm cyclic gd}}
Now gathering all the results so far, we are ready to prove the gradient descent convergence. The divergence of $\tf{\W(\tau)}$ as $\tau\to\infty$ has been proven by Theorem~\ref{thm diverge}.

\smallskip
\noindent$\bullet$ \textbf{We first show that $\prj_{\Scf}(\W(\tau))\to\Wf$.} Lemma~\ref{lemma Wf eq} has established the equivalence between $\bWf$ and $\Wf$. Since $\Lc(\W)$ is convex following Lemma~\ref{lemma cvx}, we have that $\Lc(\W(\tau))\to\Lc_\st:=\min_{\W}\Lc(\W)$. Additionally, Lemma~\ref{lemma cvx} shows that $\Lc(\W)$ is strictly convex on $\Scf$ and Lemma~\ref{lemma finite wfin} shows that $\Wf$ is the unique finite solution. Suppose $\prj_{\Scf}(\W(\tau))\not\to\Wf$. Let $\W$ be any matrix with $\prj_{\Scf}(\W)\neq\Wf$. Then Lemma \ref{lemma loss eq} and Lemma~\ref{lemma Lcb eq} give that $\Lc(\W)\geq \Lcb(\W)=\Lcb(\prj_{\Scf}(\W))>\Lc_\st$ where $\Lc_\st=\min_{\W}\Lc(\W)=\min_{\W}\Lcb(\W)$. Given that $\Lc_\st$ is achievable, the proof is done by contradiction and we have that $\prj_{\Scf}(\W(\tau))\to\Wf$.

\smallskip
\noindent$\bullet$ \textbf{We next prove that when $\Wm=0$, $\prj_{\Scf^\perp}(\W(\tau))=\prj_{\Scf^\perp}(\W(0))$.} Recap the gradient in \eqref{gradient} where
\[
\nabla\Lc(\W)=\frac{1}{n}\sum_{i=1}^n\sum_{t\in \Ocb_i}s_{it}(\x_{it}-\eb_{y_i})\xli^\top.
\]
Since $\Wm=0$ implies that for all $i\in[n]$, $\Rcb_i=\emptyset$, then $\Ocb_i\subseteq\Rc_i$. Additionally, since following definition of $\Scf$ from Def.~\ref{def cyc subspace}, for any $t\in\Rc_i$, $(\x_{it}-\eb_{y_i})\xli^\top\in\Scf$. Then we obtain
\[
\prj_{\Scf}(\nabla\Lc(\W))=\frac{1}{n}\sum_{i=1}^n\sum_{t\in \Ocb_i}s_{it}\prj_{\Scf}\left((\x_{it}-\eb_{y_i})\xli^\top\right)=\frac{1}{n}\sum_{i=1}^n\sum_{t\in \Ocb_i}s_{it}(\x_{it}-\eb_{y_i})\xli^\top=\nabla\Lc(\W).
\]
Therefore, $\prj_{\Scf^\perp}(\nabla\Lc(\W))=0$ for any $\W$, which completes the proof.

\smallskip
\noindent$\bullet$ \textbf{Last, we show that when $\Wm\neq0$, $\W(\tau)/\tf{\W(\tau)}\to\Wm/\tf{\Wm}$.} 




Consider any $\W\in\R^{d\times d}$, and let $\W^\perp=\prj_{\Scf^\perp}(\W)$, $\W^\pl=\prj_{\Scf}(\W)$, $R=\tf{\W^\perp}$, and $\Theta=1/\tf{\Wm}$. 
Next, from Lemma~\ref{lemma des}, for any $\tau\geq0$, $\Lc(\W(\tau+1))\leq\Lc(\W(\tau))$. 
Let $\W^\perp(\tau)=\prj_{\Scf^\perp}(\W(\tau))$ and $\W^\pl(\tau)=\prj_{\Scf}(\W(\tau))$. Following Lemma~\ref{lemma loss eq}, since loss satisfies $\ell(1)=-\log(1)=0$, we obtain 
\begin{align*}
\Lc(\W(\tau))\geq\Lcb(\W^\pl(\tau)).
\end{align*}
Since following Lemma~\ref{lemma finite wfin}, the training risk $\Lcb(\W^\pl(\tau))$ is infinite if $\tf{\W^\pl(\tau)}\to\infty$, which implies $\tf{\W^\pl(\tau)}<\infty$ for any $\tau\geq0$. 
Additionally, Theorem~\ref{thm diverge} proves the divergence of $\W(\tau)$ as $\tau\to\infty$, hence we have $\tf{\W^\perp(\tau)}\to\infty$.

Applying Lemma~\ref{lemma smaller loss}, as well as the fact that $\tf{\W^\pl(\tau)}<\infty$ and $\tf{\W^\perp(\tau)}\to\infty$, there exists sufficiently large $R_\pi$ as defined in \eqref{R pi} such that once $\tf{\W^\perp(\tau)}=R> R_\pi$, $\Lc(\W(\tau))-\Lc((1+\pi)R\Theta\cdot\Wm+\W^\pl(\tau))\ge 0$. Since $\Lc(\W)$ is convex following Lemma~\ref{lemma cvx}, we have that 
\begin{equation}\label{prj cvx ineq}
    \begin{split}
        \Lc(\W)&\leq\Lc((1+\pi)R\Theta\cdot\Wm+\W^\pl)+\li\nabla\Lc(\W),\W-\left((1+\pi)R\Theta\cdot\Wm+\W^\pl\right)\ri\\
        &=\Lc((1+\pi)R\Theta\cdot\Wm+\W^\pl)+\li\nabla\Lc(\W),\left(\W^\perp -(1+\pi)R\Theta\cdot\Wm\right)\ri\\
    &=\Lc((1+\pi)R\Theta\cdot\Wm+\W^\pl)+\li\prj_{\Scf^\perp}(\nabla\Lc(\W)),\left(\W^\perp-(1+\pi)R\Theta\cdot\Wm\right)\ri.
    \end{split}
\end{equation}
Here, the first inequality uses the convexity of $\Lc(\W)$ and last equation is obtained from the fact that $\Wm\perp\Scf$. It implies that once $\tf{\W^\perp(\tau)}=R> R_\pi$, 
\[
\li\prj_{\Scf^\perp}(\nabla\Lc(\W)),\left(\W^\perp-(1+\pi)R\Theta\cdot\Wm\right)\ri\geq0.
\]
Now we choose $\tau_0$ such that for all $\tau\geq\tau_0$, $\tf{\W^\perp(\tau)}>R_\pi$. Then for $\tau>\tau_0$, we get
\begin{align}
    \li\W^\perp(\tau+1)-\W^\perp(\tau), \frac{\Wm}{\tf{\Wm}}\ri
    \geq\frac{1}{1+\pi}\li\W^\perp(\tau+1)-\W^\perp(\tau),\frac{\W^\perp(\tau)}{\tf{\W^\perp(\tau)}}\ri\label{gd Wm ieq1}
\end{align}
where 
\begin{align}
&\li\W^\perp(\tau+1)-\W^\perp(\tau),\frac{\W^\perp(\tau)}{\tf{\W^\perp(\tau)}}\ri\nn\\
&=\frac{1}{2\tf{\W^\perp(\tau)}}\left(\tf{\W^\perp(\tau+1)}^2-\tf{\W^\perp(\tau)}^2-\tf{\W^\perp(\tau+1)-\W^\perp(\tau)}^2\right)\nn\\
    &\geq\frac{\tf{\W^\perp(\tau+1)}^2-\tf{\W^\perp(\tau)}^2}{2\tf{\W^\perp(\tau)}}-\tf{\W^\perp(\tau+1)-\W^\perp(\tau)}^2\label{gd Wm ieq2}\\
    &\geq\tf{\W^\perp(\tau+1)}-\tf{\W^\perp(\tau)}-\tf{\W^\perp(\tau+1)-\W^\perp(\tau)}^2\label{gd Wm ieq3}\\
    &\geq\tf{\W^\perp(\tau+1)}-\tf{\W^\perp(\tau)}-\tf{\W(\tau+1)-\W(\tau)}^2\label{gd Wm ieq4}\\
    &\geq\tf{\W^\perp(\tau+1)}-\tf{\W^\perp(\tau)}+2\eta\left(\Lc(\W(\tau+1))-\Lc(\W(\tau))\right).\label{gd Wm ieq5}
\end{align}
Here, \eqref{gd Wm ieq1} is obtained from \eqref{prj cvx ineq} and holds for all $\tau>\tau_0$; 
\eqref{gd Wm ieq2} comes from the fact that $\tf{\W^\perp(\tau)}>0.5$; \eqref{gd Wm ieq3} follows that for any $a,b>0$, $(a^2-b^2)/2b>a-b$; \eqref{gd Wm ieq4} follows the projection property that $\tf{\W(\tau+1)-\W(\tau)}^2=\tf{\W^\perp(\tau+1)-\W^\perp(\tau)}^2+\tf{\W^\pl(\tau+1)-\W^\pl(\tau)}^2$; and  \eqref{gd Wm ieq5} is obtained via Lemma~\ref{lemma des}.

Summing the above inequality over $\tau\geq\tau_0$ obtains 
\begin{align*}
    \li\W^\perp(\tau)-\W^\perp(\tau_0), \frac{\Wm}{\tf{\Wm}}\ri\geq\frac{1}{1+\pi}\left(\tf{\W^\perp(\tau)}-\tf{\W^\perp(\tau_0)}+2\eta\left(\Lc(\W(\tau))-\Lc(\W(\tau_0))\right)\right)
\end{align*}
\begin{align*}
    \Longrightarrow\li\frac{\W^\perp(\tau)}{\tf{\W^\perp(\tau)}},\frac{\Wm}{\tf{\Wm}}\ri\geq\frac{1}{1+\pi}\left(1+\frac{C}{\tf{\W^\perp(\tau)}}\right)
\end{align*}
where $$C:=\li\W^\perp(\tau_0),\frac{\Wm}{\tf{\Wm}}\ri-\tf{\W^\perp(\tau_0)}+2\eta\left(\Lc(\W(\tau))-\Lc(\W(\tau_0))\right).$$
Since $\tf{\W^\perp(\tau)}\to\infty$ and $0<\Lc(\W(\tau))\leq\Lc(\W(0))<\infty$, we get
\begin{align}
\lim_{\tau\to\infty}\li\frac{\W^\perp(\tau)}{\tf{\W^\perp(\tau)}},\frac{\Wm}{\tf{\Wm}}\ri\geq\frac{1}{1+\pi}.\label{gd Wm 1 corr}
\end{align}
Choosing $\pi\to0$ and combining \eqref{gd Wm 1 corr} with the fact that $\lim_{\tau\to\infty}\tf{\W^\pl(\tau)}<\infty$ completes the proof.

\end{proofof}

\ifarxiv
\section{Global Convergence of Regularization Path}
\else
\section{GLOBAL CONVERGENCE OF REGULARIZATION PATH}
\fi

\subsection{Proof of Theorem~\ref{thm general bias}}

\begin{lemma}\label{lemma reg orth dir} Suppose Assumptions~\ref{assume iden} and \ref{assume realizable} hold. Additionally, assume loss $\ell:\R\rightarrow\R$ is strictly decreasing and $|\ell'|$ is bounded.  Define $\bar\W_R^\perp:=\bar\W_R^\perp(\W^\pl)\in\Scf^\perp$ by
\begin{align}
\bar\W_R^\perp:=\arg\min_{\W\in\Scf^\perp,\tf{\W}\leq R}\Lc(\W+\W^\pl).\label{any wfin subj}
\end{align}
Let $\Wm\neq0$ denote the solution of \eqref{graph svm}. Then we have that for any $\W^\pl\in\Scf$ with $\tf{\W^\pl}<\infty$
\[
\lim_{R\to\infty}\frac{\bar\W_R^\perp}{R}=\frac{\Wm}{\tf{\Wm}}.
\]
\end{lemma}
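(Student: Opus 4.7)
The plan is to combine optimality of $\bar\W_R^\perp$ against the canonical competitor $R\Wm/\tf{\Wm}$ with a subsequential compactness argument inside $\Scf^\perp$. Since $\Wm\in\Scf^\perp$ and $\Wm\neq 0$ by hypothesis, the matrix $(R/\tf{\Wm})\Wm$ lies in $\Scf^\perp$ with Frobenius norm exactly $R$, so it is feasible for \eqref{any wfin subj}. Consequently
\begin{align*}
\Lc(\bar\W_R^\perp + \W^\pl) \;\leq\; \Lc\bigl((R/\tf{\Wm})\Wm + \W^\pl\bigr).
\end{align*}
Substituting $R' = R/\tf{\Wm}$ and applying \eqref{eq cyc 1 sub} in the proof of Lemma~\ref{lemma loss eq}, the right-hand side converges to $\tilde\Lc(\W^\pl) = (|\Icb|/n)\ell(1) + \Lcb(\W^\pl)$, which by \eqref{eq cyc 1} is also $\inf_{\W\in\Scf^\perp}\Lc(\W+\W^\pl)$. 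Sandwiching yields $\Lc(\bar\W_R^\perp + \W^\pl)\to\tilde\Lc(\W^\pl)$.

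Next I would pass to subsequential limits of $\V_R := \bar\W_R^\perp/R$. Since $\tf{\V_R}\leq 1$ and $\V_R\in\Scf^\perp$, compactness yields a limit $\V^\star\in\Scf^\perp$ with $\tf{\V^\star}\leq 1$. The key step is to show that every strict-priority margin of $\V^\star$ obeys $(\eb_i-\eb_j)^\top\V^\star\eb_k\geq 1/\tf{\Wm}$ for all $(i\Rightarrow j)\in\Gck$, $k\in[K]$. To this end, using Assumption~\ref{assume iden} and the softmax rewrite \eqref{def erm under assum iden} together with boundedness of $|\ell'|$ and finiteness of $\tf{\W^\pl}$, one sandwiches the excess loss $\Lc(\W+\W^\pl)-\tilde\Lc(\W^\pl)$ between constant multiples of $\sum_{i,\,t\in\Rcb_i}\exp\bigl(-(\eb_{y_i}-\eb_{x_{it}})^\top\W\xli\bigr)$ whenever the strict-priority margins of $\W$ are uniformly large. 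Plugging in $\W=\bar\W_R^\perp$ and $\W=(R/\tf{\Wm})\Wm$ and invoking the optimality inequality above forces the minimum strict-priority margin of $\bar\W_R^\perp$ to exceed $R/\tf{\Wm}-o(R)$; dividing by $R$ and passing to the limit proves the claim.

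The final step is a min-norm rescaling. Set $\alpha := \min_{(i\Rightarrow j)\in\Gck,\,k}(\eb_i-\eb_j)^\top\V^\star\eb_k\geq 1/\tf{\Wm}$. Then $\V^\star/\alpha$ is feasible for \eqref{graph svm}: the ``$=0$'' constraints hold automatically since $\V^\star\in\Scf^\perp$, and the ``$\geq 1$'' constraints hold by construction. The min-norm property of $\Wm$ gives $\tf{\V^\star}\geq\alpha\tf{\Wm}\geq 1$, which combined with $\tf{\V^\star}\leq 1$ forces $\tf{\V^\star}=1$ and $\alpha=1/\tf{\Wm}$, so $\V^\star/\alpha$ attains the minimum in \eqref{graph svm}. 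Uniqueness of the min-norm SVM solution (convex feasible set, strictly convex $\tf{\cdot}^2$) then yields $\V^\star=\Wm/\tf{\Wm}$, and since every subsequential limit coincides, the entire sequence $\V_R$ converges, establishing the lemma. The main obstacle is the middle step: converting the loss inequality into a two-sided margin comparison requires a uniform tail expansion of the excess loss, but the saturated-softmax asymptotics together with boundedness of $\tf{\W^\pl}$ make this tractable once the sample-level decomposition via $\Oc_i,\Rc_i,\Rcb_i$ is in place.
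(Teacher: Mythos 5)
Your proposal is correct and rests on the same core mechanism as the paper's proof: compare the excess loss $\Lc(\bar\W_R^\perp+\W^\pl)-\tilde\Lc(\W^\pl)$ against the competitor $R\Wm/\tf{\Wm}$, and exploit that the decay rate of this excess is governed by the smallest strict-priority margin through the saturated-softmax tail $\sum_{t\in\Rcb_i}e^{-(\eb_{y_i}-\eb_{x_{it}})^\top\W\xli}$. The only structural difference is organizational: the paper argues by contradiction (failure of directional convergence forces a violated margin constraint $(\eb_i-\eb_j)^\top\W'\eb_k\leq 1-\epsilon$, yielding a strictly worse loss than the competitor), whereas you run a direct subsequential-compactness argument, show the limit $\V^\star$ satisfies all ``$\geq 1/\tf{\Wm}$'' margins, and then invoke the min-norm uniqueness of $\Wm$. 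Both routes use the same data-dependent constants $\cl\leq-\ell'\leq\ch$ and the same reduction to the cyclic subproblem via Lemma~\ref{lemma loss eq}, so this is a reorganization rather than a different method; if anything, your framing is a bit cleaner in that it makes the two-sided sandwich of the excess loss explicit and avoids the paper's implicit passage to a subsequence inside the contradiction.
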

\begin{proof} 
%
%
%
%
Recap $\Oc_i,\Ocb_i,\Rc_i,\Rcb_i$, $i\in[n]$ from \eqref{def Oc Rc}. Let $\bgam_i=\X_i\cb_{y_i}$ where following Assumption~\ref{assume iden} we have 
\[
\gamma_{it}=\x_{it}^\top\cb_{y_i}=\begin{cases}
    1,\quad t\in\Oc_i\\
    0,\quad t\in\Ocb_i.
\end{cases}
\]
Recap $\Ic,\Icb$ from \eqref{def Ic}. To proceed, define $\bgammax_i$ as follows:
\begin{itemize}
    \item Consider $i\in\Icb$. Given $\min_{\W}\ell(\cb_{y_i}^\top\X_i^\top\sft{\X_i\W\xli})\geq\ell(\cb_{y_i}^\top\eb_{y_i})$, we define the maximal score $\bgammax_i:=\cb_{y_i}^\top\eb_{y_i}=1$.
    \item Consider $i\in\Ic$.  
    \begin{enumerate}
        \item Assumption~\ref{assume iden} ensures that all tokens, excluding the ones with token ID $x_{it}=y_i$,  return zero score, that is, $\cb_{y_i}^\top\eb_{k}=0$ for $k\neq y_i$.
        \item From proof of Lemma~\ref{lemma ortho}, for any $\W^\perp\in\Scf^\perp$ and $t\in\Rc_i$, $\x_{it}^\top(\W^\perp+\W^\pl)\xl_i=\x_{it}^\top\W^\pl\xl_i+\bar a_i$, where $\bar a_i$ is some constant associated with $\W^\perp$ and remains the same value within the same SCC. Let $\bb_i=\X_i\W^\pl\xli$. Then the probabilities for $t\in\Rc_i$ (if denoted by $s_{it}$) obey
        \begin{align}
        \frac{s_{it}}{\sum_{t'\in\Rc_i}s_{it'}}=\frac{e^{b_{it}+\bar a_i}}{\sum_{t'\in\Rc_i}e^{b_{it'}+\bar a_i}}=\frac{e^{b_{it}}}{\sum_{t'\in\Rc_i}e^{b_{it'}}},\label{score over Rc}
        \end{align}
        which means that 
        the probability distribution over set $\Rc_i$ 
        remains the same with varying $\W^\perp$.
    \end{enumerate}
    Combining both, we define the maximal score as follows:
    \[
    \bgammax_i:=|\Oc_i|\cdot\bar s_i\quad\text{where}\quad \bar s_i=\frac{e^{\eb_{y_i}^\top\W^\pl\xli}}{\sum_{t'\in\Rc_i}e^{b_{it'}}}.
    \]
    Note that if consider the cyclic subdataset $\bdata$ as in \eqref{def bdata}. Let $(\Xb_i,y_i)\in\bdata$ where $\Xb_i$ is the corresponding sequence by removing the tokens in $\Rcb_i$. Then we have $\bgammax_i=\cb_{y_i}^\top\bar\X_i^\top\sft{\bar\X_i\W^\pl\xli}$. 
\end{itemize}
Hence, given $\bb_i=\X_i\W^\pl\xli$, we obtain
\[
\bgammax_i=\begin{cases}
    1,& i\in\Icb\\
    \frac{|\Oc_i|e^{\eb_{y_i}^\top\W^\pl\xli}}{\sum_{t'\in\Rc_i}e^{b_{it'}}},& i\in\Ic.
\end{cases}
\]
Then we define the optimal risk of \eqref{any wfin subj} and its corresponding softmax probabilities $\smx_i$, $i\in[n]$ as follows: 
\[
\Lc_\st^{\W^\pl}:=\frac{1}{n}\sum_{i=1}^n\ell(\bgammax_i),\quad \text{and}\quad s^{\max}_{it}=\begin{cases}
    0,& t\in\Rcb_i\\
    \frac{e^{b_{it}}}{\sum_{t'\in\Rc_i}e^{b_{it'}}},&t\in\Rc_i
\end{cases} \quad\text{for all }i\in[n].
\]
Note that we also have 
\begin{align}\bgammax_i=\cb_{y_i}^\top\X_i^\top\smx_i=\sum_{t\in\Oc_i}s^{\max}_{it}=\frac{\sum_{t\in\Oc_i}e^{b_{it}}}{\sum_{t\in\Rc_i}e^{b_{it}}}\quad\text{and}\quad \Lc_\st^{\W^\pl}=\frac{1}{n}\sum_{i\in\Icb}\ell(1)+\Lcb(\W^\pl)\label{Lc opt}
\end{align}
where $\Lcb(\W^\pl)$ is the empirical risk over cyclic subdataset defined in Definition~\ref{def finite correct}.

In the following, we will complete the proof in three steps.

\noindent\textbf{Step 1: } We first show that $\lim_{R\to\infty}\Lc(R\cdot\Wm+\W^\pl)=\Lc_\st^{\W^\pl}$. 
It can be easily proven using Lemma~\ref{lemma loss eq} and \eqref{Lc opt} by showing that for any $\W^\pl$ with $\tf{\W^\pl}<\infty$, 
\[
\lim_{R\to\infty}\Lc(R\cdot\Wm+\W^\pl)=\min_{\W\in\Scf^\perp}\Lc(\W+\W^\pl)=\frac{|\Icb|}{n}\ell(1)+\Lcb(\W^\pl)=\Lc_\st^{\W^\pl}.
\]

\noindent\textbf{Step 2: }Next, we will prove that for any $\W^\pl\in\Scf$ with $\tf{\W^\pl}<\infty$, $\Wb_R^\perp$ achieves the optimal risk as $R\to\infty$ -- rather than problem having finite optima. It is to show that there is no finite $R$ can achieve optimal risk. Consider any $\W\in\Scf^\perp$ with $\tf{\W}<\infty$. Let $\s_i:=\sft{\X_i(\W+\W^\pl)\xli}$ and $\gamma_i^{\W}:=\cb_{y_i}^\top\X_i^\top\s_i$. Then we have that
\begin{align*}
\gamma_i^{\W}=\sum_{t\in\Oc_i}s_{it}=\sum_{t\in\Oc_i}\frac{\sum_{t'\in\Rc_i}s_{it'}}{\sum_{t'\in[T_i]}s_{it'}}s^{\max}_{it}\leq\sum_{t\in\Oc_i}s^{\max}_{it}=\bgammax_i
\end{align*}
where the equality holds when $\Rc_i=[T_i]$. Since $\Wm\neq0$, then there exists some $i\in[n]$ such that $\gamma_i^{\W}<\bgammax_i$. 
Therefore, for any finite $\W\in\R^{d\times d}$ and $\W\in\Scf^\perp$, since loss function is strictly decreasing
\[
\Lc(\W+\W^\pl)=\frac{1}{n}\sum_{i=1}^n\ell(\gamma_i^{\W}) > \frac{1}{n}\sum_{i=1}^n\ell(\bgammax)=\Lc_\st^{\W^\pl}.
\]

\noindent\textbf{Step 3:} Now, it remains to show that $\Wb_R^\perp$ converges in direction to $\Wm$. Suppose convergence fails. We will obtain a contradiction by showing that $R\cdot\Wm/\tf{\Wm}$ achieves a strictly superior loss compared to $\bar\W_R^\perp$ given sufficiently large $R$. Since $\bar\W_R^\perp$ fails to converge to $\Wm$, for some $\delta>0$, there exists arbitrarily large $R>0$ such that 
\[
\tf{\bar\W_R^\perp\cdot\tf{\Wm}/R-\Wm}\geq \delta.
\]
Let $\W'=\bar\W_R^\perp\cdot\tf{\Wm}/R$ where we have $\tf{\W'}\leq\tf{\Wm}$ and $\W'\neq\Wm$. 
Following Definition~\ref{def cyc subspace}, we obtain
\[
(\eb_i-\eb_j)^\top\W'\eb_k= 0\quad\text{where}\quad(i\asymp j)\in\Gck.
\]
Then for some $\epsilon:=\epsilon(\delta)$, there exists $i,j,k$ such that 
\[
(\eb_i-\eb_j)^\top\W'\eb_k\leq 1-\epsilon\quad\text{where}\quad(i\Rightarrow j)\in\Gck.
\]

Now, we will argue that this leads to a contradiction by proving $\Lc(R\cdot\Wm/\tf{\Wm}+\W^\pl)<\Lc(\Wb_R+\W^\pl)$ for sufficiently large $R$. Let $\Theta=1/\tf{\Wm}$ and we will show that $\Lc(R\Theta\cdot\Wm+\W^\pl)<\Lc(R\Theta\cdot\W'+\W^\pl)$ for sufficiently large $R$.

To obtain the result, we establish a refined softmax probability control by studying the distance to $\Lc_\st^{\W^\pl}$. Recap the definitions of $\bgammax_i$ and $\smx_i$, and let $\bb_i=\X_i\W^{\pl}\xli$, $\ab_i^\st=\X_i(R\Theta\cdot\Wm)\xli$,  $\s_i^\st=\sft{\X_i(R\Theta\cdot\Wm+\W^\pl)\xli}=\sft{\ab_i^\st+\bb_i}$, and $\gamma_i^\st:=\cb_{y_i}^\top\X_i^\top\s_i^\st$. Additionally, let $\ab_i^R=\X_i(R\Theta\cdot\W')\xli$, $\s_i^R=\sft{\X_i(R\Theta\cdot\W'+\W^\pl)\xli}=\sft{\ab_i^R+\bb_i}$, and $\gamma_i^R:=\cb_{y_i}^\top\X_i^\top\s_i^R$. 

Following Definition~\ref{def cyc subspace}, we get for all $t,t'\in\Rc_i$
\[
(\x_{it}-\x_{it'})^\top\W\xli=0\quad\text{for any }\W\perp\Scf\Longrightarrow a^\st_{it}-a^\st_{it'}=a^R_{it}-a^R_{it'}=0.
\]
%
Then
\begin{align*}
    \sum_{t\in\Oc_i}s_{it}^R&=\frac{\sum_{t\in\Oc_i}e^{a^R_{it}+b_{it}}}{\sum_{t\in[T_i]}e^{a^R_{it}+b_{it}}}\leq\frac{\sum_{t\in\Oc_i}e^{b_{it}}}{\sum_{t\in\Rc_i}e^{b_{it}}+e^{-(1-\epsilon)R\Theta-\bar b}}\leq\frac{c_i}{d_i+e^{-(1-\epsilon)R\Theta-\bar b}},\quad\exists i\in[n]\\
    \sum_{t\in\Oc_i}s_{it}^\st&=\frac{\sum_{t\in\Oc_i}e^{a^\st_{it}+b_{it}}}{\sum_{t\in[T_i]}e^{a^\st_{it}+b_{it}}}\geq\frac{\sum_{t\in\Oc_i}e^{b_{it}}}{\sum_{t\in\Rc_i}e^{b_{it}}+\sum_{t\in\Rcb_i}e^{-R\Theta+b_{it}}}\geq\frac{c_i}{d_i+Te^{-R\Theta+\bar b}},\quad\forall i\in[n],
\end{align*}
where $c_{i}=\sum_{t\in\Oc_i}e^{b_{it}}$, $d_i=\sum_{t\in\Rc_i}e^{b_{it}}$, and $\bar b:=\max_{t\in\Rc_i,i\in[n]}|b_{it}|$, and we have $\bgammax_i=c_i/d_i$.

Since $\ell$ is strictly decreasing and $|\ell'|$ is bounded, let $\cl\leq-\ell'\leq\ch$ for some constants $\cl,\ch>0$. Note that $\cl,\ch$ are data-dependent. 
Then we have
\begin{align*}
    \Lc(R\Theta\cdot\W'+\W^\pl)-\Lc_\st^{\W^\pl}&\geq\frac{1}{n}\left(\ell(\gamma_i^R)-\ell(\bgammax_i)\right)\geq\frac{\cl}{n}\left(\bgammax_i-\gamma_i^R\right)\\
    &=\frac{\cl}{n}\left(\bgammax_i-\sum_{t\in\Oc_i}s_{it}^R\right)\\
    &\geq\frac{\cl \gamma_i^{\max}}{n}\left(1-\frac{1}{1+e^{-(1-\epsilon)R\Theta-\bar b}/d_i}\right)
\end{align*}
and let $j:=\max_{i\in[n]}\left(\ell(\gamma_i^\st)-\ell(\bgammax_i)\right)$. We can upper-bound the loss difference for $R\Theta\cdot\Wm+\W^\pl$ as follows:
\begin{align*}
    \Lc(R\Theta\cdot\Wm+\W^\pl)-\Lc_\st^{\W^\pl}&\leq\max_{i\in[n]}\left(\ell(\gamma_i^\st)-\ell(\bgammax_i)\right)\leq\ch\left(\bgammax_j-\gamma_j^\st\right)\\
    &=\ch\left(\bgammax_j-\sum_{t\in\Oc_j}s_{it}^\st\right)\\
    &\leq\ch\gamma_j^{\max}\left(1-\frac{1}{1+Te^{-R\Theta+\bar b}/d_j}\right)\\
    &\leq\frac{\ch\gamma_j^{\max}T}{d_j}e^{-R\Theta+\bar b}.
\end{align*}
Combining them together results in that, $\Lc(R\Theta\cdot\W'+\W^\pl)>\Lc(R\Theta\cdot\Wm+\W^\pl)$ whenever
\begin{align}
&\frac{\cl \gamma_i^{\max}}{n}\left(1-\frac{1}{1+e^{-(1-\epsilon)R\Theta-\bar b}/d_i}\right)>\frac{\ch\gamma_j^{\max}T}{d_j}e^{-R\Theta+\bar b}\nn\\
&\Longrightarrow R>R_\epsilon:=\frac{1}{\Theta\cdot\min(\epsilon,1)}\log\left(\frac{2nT\ch \gamma_j^{\max}\cdot\max(d_i,1)}{\cl \gamma_i^{\max}d_j}\right)+\frac{2\bar b}{\Theta\cdot\min(\epsilon,1)}.\label{cyc reg R bound}
\end{align}
Note that since $\W^\pl$ is finite, $b_{it}$ for all $i\in[n],t\in[T_i]$ are bounded and fixed, and therefore, $0< d_i<\infty$, for all $i\in[n]$ and $\bar b<\infty$. \eqref{cyc reg R bound} completes the proof by contradiction. 
\end{proof}


Now, gathering all the results we have obtained so far, we are ready to prove Theorem~\ref{thm general bias}. 

\begin{proofof}{Theorem~\ref{thm general bias}}
Recap the dataset $\data=(\X_i,y_i)_{i=1}^n$ and index sets $\Oc_i,\Ocb_i,\Rc_i,\Rcb_i$, $i\in[n]$ from \eqref{def Oc Rc}. Let $\bgam_i=\X_i\cb_{y_i}$ denote the score vector of $i$-th input. Since Assumption~\ref{assume iden} holds, then 
\[
\gamma_{it}=\x_{it}^\top\cb_{y_i}=\begin{cases}
    1,\quad t\in\Oc_i\\
    0,\quad t\in\Ocb_i.
\end{cases}
\]
Let $\s^{\W}_i=\sft{\X_i\W\xli}$. The regularization path solution of the ERM problem is defined as follows:  
\[
\bar{\W}_R=\arg\min_{\tf{\W}\leq R}\Lc(\W)\quad\text{where}\quad\Lc(\W)=\frac{1}{n}\sum_{i=1}^n \ell(\cb_{y_i}^\top\X_i^\top \sft{\X_i\W\xli})=\frac{1}{n}\sum_{i=1}^n\ell\left(\sum_{t\in\Oc_i}s^{\W}_{it}\right).
\]
%
%
Additionally, let $\W_R^\perp=\prj_{\Scf^\perp}(\Wb_R)$ and $\W_R^\pl=\prj_{\Scf}(\Wb_R)$. 
Lemma~\ref{lemma reg orth dir} has shown that for any finite $\lim_{R\to\infty}\W_R^\pl$, 
\[
\lim_{R\to\infty}\frac{\Wb_R}{R}=\lim_{R\to\infty}\frac{\W_R^\perp}{\sqrt{R^2-\tf{\W_R^\pl}^2}}=\lim_{R\to\infty}\frac{\W_R^\perp}{\tf{\W_R^\perp}}=\frac{\Wm}{\tf{\Wm}}.
\]
Therefore it remains to prove that $\lim_{R\to\infty}\W_R^\pl\in\Wcf$. 


{Suppose $\lim_{R\to\infty}\W^\pl_R:=\W'\not\in\Wcf$.} Then for any $\W^\pl\in\Wcf$, applying Lemma~\ref{lemma loss eq}, we obtain 
    \[
    \min_{\W^\perp\in\Scf^\perp}\Lc(\W^\perp+\W')>\min_{\W^\perp\in\Scf^\perp}\Lc(\W^\perp+\W^\pl)=\lim_{R\to\infty}\Lc(R\cdot\Wm+\W^\pl).
    \]
Therefore, $\W'$ does not achieve the minimal loss as $R\to\infty$. 
\end{proofof}


\subsection{Proof of Lemma~\ref{lemma opt risk}}

\begin{proof}
%
%
%
Let $\bgam_i=\X_i\cb_{y_i}$ denote the score vector of $i$-th input and $\gamma_{it}=\x_{it}^\top\cb_{y_i}$. Let $\bgammax_i=\eb_{y_i}^\top\cb_{y_i}=\max_{t\in[T_i]}\gamma_{it}$ following Assumptions~\ref{assume realizable} and \ref{assume relax}. What's more, since loss $\ell$ is strictly decreasing, we define the optimal loss as follows:
\[
\Lc_\st:=\frac{1}{n}\sum_{i=1}^n\ell(\bgammax_i).
\]
For any $\W\in\R^{d\times d}$, let $\s_i=\X_i\W\xli$, $i\in[n]$. If $\tf{\W}<\infty$, then $\min_{t\in[T_i],i\in[n]}s_{it}>0$ and for any $i\in[n]$
\[
\s_i^\top\bgam_i=\sum_{t=1}^{T_i}s_{it}\gamma_{it}<\bgammax_i.
\]
Since loss function $\ell$ is strictly decreasing, we get
\[
\Lc(\W)=\frac{1}{n}\sum_{i=1}^n\ell(\s_i^\top\bgam_i)>\frac{1}{n}\sum_{i=1}^n\ell(\bgammax_i)=\Lc_\st.
\]
Let $\W$ be any attention weight satisfying all the ``$\geq1$'' constraints in \eqref{acyc svm}. We next prove that $\lim_{R\to\infty}\Lc(R\cdot\W)=\Lc_\st$. 
Recap $\Oc_i$ and $\Ocb_i$ from \eqref{def Oc Rc}. Since token $\eb_{y_i}$ is always contained in $\X_i$ following Assumption~\ref{assume realizable}, we have $|\Oc_i|\geq1,i\in[n]$, and $\X_i$ contains $|\Oc_i|$ optimal tokens $\eb_{y_i}$. Note that under acyclic data setting, $\W$ separates tokens $\eb_{y_i}$ from the rest of the tokens within $\X_i$. Then $\lim_{R\to\infty}\sft{\X_i(R\cdot\W)\xli}$ will output $1/|\Oc_i|$ for $t\in\Oc_i$ and zero for the left. Specifically, let $\s^R_i:=\sft{\X_i(R\cdot\W)\xli}$, and following the SVM objective \eqref{acyc svm} for any $i\in[n]$, we get
\begin{align*}
&s^R_{it}=\frac{e^{\x_{it}^\top(R\cdot\W)\xli}}{\sum_{t\in[T_i]}e^{\x_{it}^\top(R\cdot\W)\xli}}=\frac{1}{|\Oc_i|+\sum_{t\in\Ocb_i}e^{(\x_{it}-\eb_{y_i})^\top(R\cdot\W)\xli}}\geq\frac{1}{|\Oc_i|+e^{-R}}\quad\text{for all $t\in\Oc_i$}\\
&\text{and then,}\quad\sum_{t\in\Oc_i}s^R_{it}=\frac{|\Oc_i|}{|\Oc_i|+\sum_{t\in\Ocb_i}e^{(\x_{it}-\eb_{y_i})^\top(R\cdot\W)\xli}}\geq\frac{1}{1+e^{-R}}.
\end{align*}
Then $\lim_{R\to\infty}\sum_{t\in\Oc_i}s_{it}^R=1$ and therefore,
\begin{align*}
&\lim_{R\to\infty}s_{it}^R=1/|\Oc_i|\quad\text{for $t\in\Oc_i$},\qquad\text{and}\qquad\lim_{R\to\infty}s_{it}^R=0\quad\text{for $t\in\Ocb_i$}.
\end{align*}
Hence we have
\[
\lim_{R\to\infty}\X_i^\top\s^R_i=\sum_{t\in\Oc_i}\frac{1}{|\Oc_i|}\eb_{y_i}=\eb_{y_i}.
\]
{Since $|\ell'|$ is bounded, then  $\lim_{R\to\infty}\Lc(R\cdot\W)=\frac{1}{n}\sum_{i=1}^n\ell(\cb_{y_i}^\top\eb_{y_i})=\frac{1}{n}\sum_{i=1}^n\ell(\bgammax_i)=\Lc_\st$. }
\end{proof}

\subsection{Proof of Theorem~\ref{thm acyc bias}}

\begin{proof}
Recap the dataset $\data=(\X_i,y_i)_{i=1}^n$. The regularization path solution of the ERM problem (per \ref{algo rp} and \eqref{erm}) is defined as follows:  
\[
\bar{\W}_R=\arg\min_{\tf{\W}\leq R}\Lc(\W)\quad\text{where}\quad\Lc(\W)=\frac{1}{n}\sum_{i=1}^n \ell(\cb_{y_i}^\top\X_i^\top \sft{\X_i\W\xli}).
\]
The proof is similar to the proof of Theorem 2 in \cite{tarzanagh2023transformers} by choosing $\op_i=y_i$. However in our work, we allow each sequence contains more than one optimal tokens, while \cite{tarzanagh2023transformers} forces that the optimal token is unique. 

Following the proof in Lemma~\ref{lemma opt risk}, let $\bgam_i=\X_i\cb_{y_i}$, $\bgammax_i=\eb_{y_i}^\top\cb_{y_i}=\max_{t\in[T_i]}\gamma_{it}$, and the optimal training risk
\[
\Lc_\st:=\frac{1}{n}\sum_{i=1}^n\ell(\bgammax_i).
\]




From Lemma~\ref{lemma opt risk}, we have that for any finite $\W$, $\Lc(\W)<\lim_{R\to\infty}\Lc(R\cdot\Wm)=\Lc^\st$. Then the optimal risk $\Lc_\st$ is achievable and to achieve the limit, $R$ has to be infinite. Then it remains to prove that 
%
$\bar\W_R$ converges in direction to $\Wm$. 

Suppose convergence fails. We will obtain a contradiction by showing that $R\cdot\Wm/\tf{\Wm}$ achieves a strictly superior loss compared to $\bar\W_R$. Suppose $\bar\W_R$ fails to directionally converge towards $\Wm$. For some $\delta>0$, there exists arbitrarily large $R>0$ such that 
\[
\tf{\bar\W_R\cdot\tf{\Wm}/R-\Wm}\geq \delta.
\]
Let $\W'=\bar\W_R\cdot\tf{\Wm}/R$ where we have $\tf{\W'}\leq\tf{\Wm}$ and $\W'\neq\Wm$. Since $\Wm$ is the min-norm solution of \eqref{acyc svm}, then for some $\epsilon:=\epsilon(\delta)$, there exists $i,j,k$ such that 
\[
(\eb_i-\eb_j)^\top\W'\eb_k\leq 1-\epsilon\quad\text{where}\quad(i\Rightarrow j)\in\Gck.
\]
Now, we will argue that this leads to a contradiction by proving $\Lc(R\cdot\Wm/\tf{\Wm})<\Lc(\Wb_R)$ for sufficiently large $R$. Let $\Theta=1/\tf{\Wm}$ and we will show that $\Lc(R\Theta\cdot\Wm)<\Lc(R\Theta\cdot\W')$ for sufficiently large $R$.

To obtain the result, we establish a refined softmax probability control as in the proof of Theorem~\ref{thm general bias} by studying the distance to $\Lc_\st$. 
Let $\ab_i^\st=\X_i(R\Theta\cdot\Wm)\xli$, $\ab_i^R=\X_i(R\Theta\cdot\W')\xli$, $\s_i^\st:=\sft{\ab_i^\st}$, $\s_i^R:=\sft{\ab_i^R}$, $\gamma_i^\st:=\cb_{y_i}^\top\X_i^\top\s_i^\st$, and $\gamma_i^R:=\cb_{y_i}^\top\X_i^\top\s_i^R$. Recap that $\bgammax_i=\eb_{y_i}^\top\cb_{y_i}$. Then 
\begin{align}
    &\sum_{t\in\Oc_i}s_{it}^R=\frac{\sum_{t\in\Oc_i}e^{a^R_{it}}}{\sum_{t\in[T_i]}e^{a^R_{it}}}\leq\frac{|\Oc_i|}{|\Oc_i|+e^{-(1-\epsilon)R\Theta}}\leq\frac{1}{1+e^{-(1-\epsilon)R\Theta}/T},\quad\exists i\in[n]\\
    &\sum_{t\in\Oc_i}s_{it}^\st=\frac{\sum_{t\in\Oc_i}e^{a^\st_{it}}}{\sum_{t\in[T_i]}e^{a^\st_{it}}}\geq\frac{|\Oc_i|}{|\Oc_i|+(T-|\Oc_i|)e^{-R\Theta}}\geq\frac{1}{1+Te^{-R\Theta}},\quad\forall i\in[n].\label{sfx prob wmm}
\end{align}

Since $\ell$ is strictly decreasing and $|\ell'|$ is bounded, 
let $\cl\leq-\ell'\leq\ch$ for some constants $\cl,\ch>0$. Note that $\cl,\ch$ are data-dependent. Additionally, define the score minimal/maximal score gaps as
\[
\cmin=\min_{y,k\in[K],y\neq k}(\eb_{y}-\eb_{k})^\top\cb_{y},\quad\cmax=\max_{y,k\in[K],y\neq k}(\eb_{y}-\eb_{k})^\top\cb_{y}
\]
where $\cmax\geq\cmin>0$. Then we have that there exists $i\in[n]$,
\begin{align}\label{risk diff W}
    \Lc(R\Theta\cdot\W')-\Lc_\st&\geq\frac{1}{n}\left(\ell(\gamma_i^R)-\ell(\bgammax_i)\right)\geq\frac{\cl}{n}\left(\bgammax_i-\gamma_i^R\right)\nn\\
    &\geq\frac{\cl}{n}\cmin\left(1-\sum_{t\in\Oc_i}s_{it}^R\right)\geq\frac{\cl\cmin}{n}\frac{1}{1+Te^{(1-\epsilon)R\Theta}}
\end{align}
and letting $j:=\arg\max_{i\in[n]}\left(\ell(\gamma_i^\st)-\ell(\bgammax_i)\right)$, we can upper-bound the loss difference for $R\Theta\cdot\Wm$ as follows:
\begin{align}\label{risk diff Wmm}
    \Lc(R\Theta\cdot\Wm)-\Lc_\st&\leq\max_{i\in[n]}\left(\ell(\gamma_i^\st)-\ell(\bgammax_i)\right)\leq\ch\left(\bgammax_j-\gamma_j^\st\right)\nn\\
    &\leq\ch\cmax\left(1-\sum_{t\in\Oc_i}s_{it}^\st\right)\leq\ch\cmax\frac{1}{1+e^{R\Theta}/T}\leq\ch\cmax T e^{-R\Theta}.
\end{align}
Combining them together results in that, $\Lc(R\Theta\cdot\W')>\Lc(R\Theta\cdot\Wm)$ whenever 
\[
\frac{\cl\cmin}{n}\frac{1}{1+Te^{(1-\epsilon)R\Theta}}>\ch\cmax T e^{-R\Theta}\Longrightarrow R>\frac{1}{\Theta\cdot\min(\epsilon,1)}\log\left(\frac{2nT^2\ch\cmax}{\cl\cmin}\right).
\]
This completes the proof by contradiction. 
\end{proof}

\ifarxiv
\section{Implementation Details and Additional Experiments}
\else
\section{IMPLEMENTATION DETAILS AND ADDITIONAL EXPERIMENTS}
\fi
\subsection{Implementation Details}

In all the experiments, we train single-layer self-attention layer models using PyTorch and SGD optimizer. We conduct normalized gradient descent method to enhance the increasing of the norm of attention weight, so that softmax can easily saturate. Specifically, at each iteration $\tau$, we update attention weight $\W$ via 
\[
\W(\tau+1)=\W(\tau)-\eta\frac{\nabla\Lc(\W(\tau))}{\tf{\nabla\Lc(\W(\tau))}}.
\]
All the results are averaged over $100$ random trails and in each trail, we create the dataset and its corresponding TPGs, SCCs as follows:
\begin{enumerate}
    \item Given dimension $d$ and vocabulary size $K$, generate random embedding table $\Eb=[\eb_1~\cdots~\eb_K]^\top\in\R^{K\times d}$ such that each $\eb\in\Eb$ is randomly sampled from unit sphere.
    \item Given sample size $n$ and sequence length $T$, create dataset $\data=(\X_i,y_i)_{i=1}^n$ and $\X_i=[\x_{it}~\cdots~\x_{iT}]^\top\in\R^{T\times d}$ where $\x_{it}$ are randomly sampled from $\Eb$. For acyclic setting, label $\eb_{y_i}$ is determined by the token in the $\X_i$ that has the highest priority order; while for general cyclic setting, $\eb_{y_i}$ are also randomly sampled from $\X_i$. 
    
    \item Construct TPGs and apply Tarjan's algorithm \cite{tarjan1972scc} to find SCCs of each TPG. For global convergence experiments (Section~\ref{sec global gd}), TPGs are created based on the token relations between $\x_{it}$'s and $\eb_{y_i}$'s in the dataset $\data$; while for local convergence analysis (Section~\ref{sec:local-gd}), we instead establish the token relations between $\x_{it}$'s and $\hat \eb_{y_i}$'s following the instruction in Section~\ref{sec:local-gd}, where $\hat \eb_{y_i}$ is determined by the GD solution. 
    \item $\bdata$ is created following Definition~\ref{cyc sub} based on the SCCs of the corresponding TPGs.
\end{enumerate}
Here, we set the sequence length to be the same for all the samples in $\data$, and we emphasize that though $\data$ contains inputs with same number of tokens, the randomness in sampling $\x_{it}$ and $\eb_{y_i}$ will still result in a variety of TPGs and SCCs, and $\bdata$ may contain inputs with varying sequence lengths (see Figure~\ref{fig:intro}). 


\smallskip
\noindent$\bullet$ \textbf{Generating $\Wf$ and $\tWf$.} Inspired by the convexity and finiteness of $\Lcb(\W)$ per Definition~\ref{def finite correct} under the setting of Theorem~\ref{thm cyclic gd}, we can derive $\Wf$ via gradient descent. Hence, to obtain $\Wf$, we train separate models but with the same architecture {from zero initialization} on the sub-dataset $\bdata$. As for the experiments shown in Section~\ref{sec:local-gd}, we follow the same method as generating $\Wf$. However, we emphasize that under the local convergence setting, there is no guarantee that gradient descent will converge to the $\tWf$ solution as problem is more general, i.e., with nonconvex head, and dataset $\bdata$ might not be enough to capture the performance of tokens within the same SCCs. Though, our results in Figures~\ref{fig local ls} and \ref{fig local ce} indicate that $\tWf$ can predict the GD convergence performance better than $\Wf$ which is drawn from the dataset-based TPGs. We defer a rigorous definition of local $\tWf$ and guarantees related to gradient descent for future exploration.


\smallskip
\noindent$\bullet$ \textbf{Local convergence experiments (Figures~\ref{fig local ls} and \ref{fig local ce}).} To evaluate our local convergence conjecture, we conduct random experiments with more general head (satisfying Assumption~\ref{assume relax}) and, and consider squared loss $\ell(u)=(1-u)^2$ in Figure~\ref{fig local ls} and cross-entropy loss in Figure~\ref{fig local ce}. In both experiment, we create embedding tabels with $K=8,d=8$ and datasets with $n=4,T=6$. We choose step size $\eta=0.1$ and also conduct normalized gradient descent. Correlations are reported in Figs.~\ref{fig local ls Wm} and \ref{fig local ce Wm} and the distance of $\tf{\prj_{\tScf}(\W(\tau))-\tWf}$ are presented in the orange curves in Figs.~\ref{fig local ls Wf} and \ref{fig local ce Wf}. In both experiments, correlations between $\frac{\W(\tau)}{\tf{\W(\tau)}}\text{ and }\frac{\tWm}{\tf{\tWm}}$ end with \yl{$>0.99$} values. Fig.~\ref{fig local ls Wf} achieves \yl{$0$} distance error since employing squared loss, attention is inclined to select tokens that appear mostly frequently in the labels of the dataset, resulting in $\Rc_i=\Oc_i$ for $i\in[n]$ and $\bdata=\emptyset$. \yl{While in Fig.~\ref{fig local ce Wf}, the global and local norm of difference is around $9.59 \text{ and }0.09$ respectively, where $\bdata\neq\emptyset$. This implies that the distance of $\prj_{\tScf}(\W(\tau))$ is much closer to $\tWf$ compared to the distance between $\prj_{\Scf}(\W(\tau))$ and $\Wf$}.

\smallskip
\begin{figure}[t!] 
\centering
\begin{minipage}{0.45\linewidth}
\begin{tikzpicture}
  \node at (0,0){\includegraphics[height=.6\linewidth, trim={1.3cm 1.3cm 0 0}, clip]{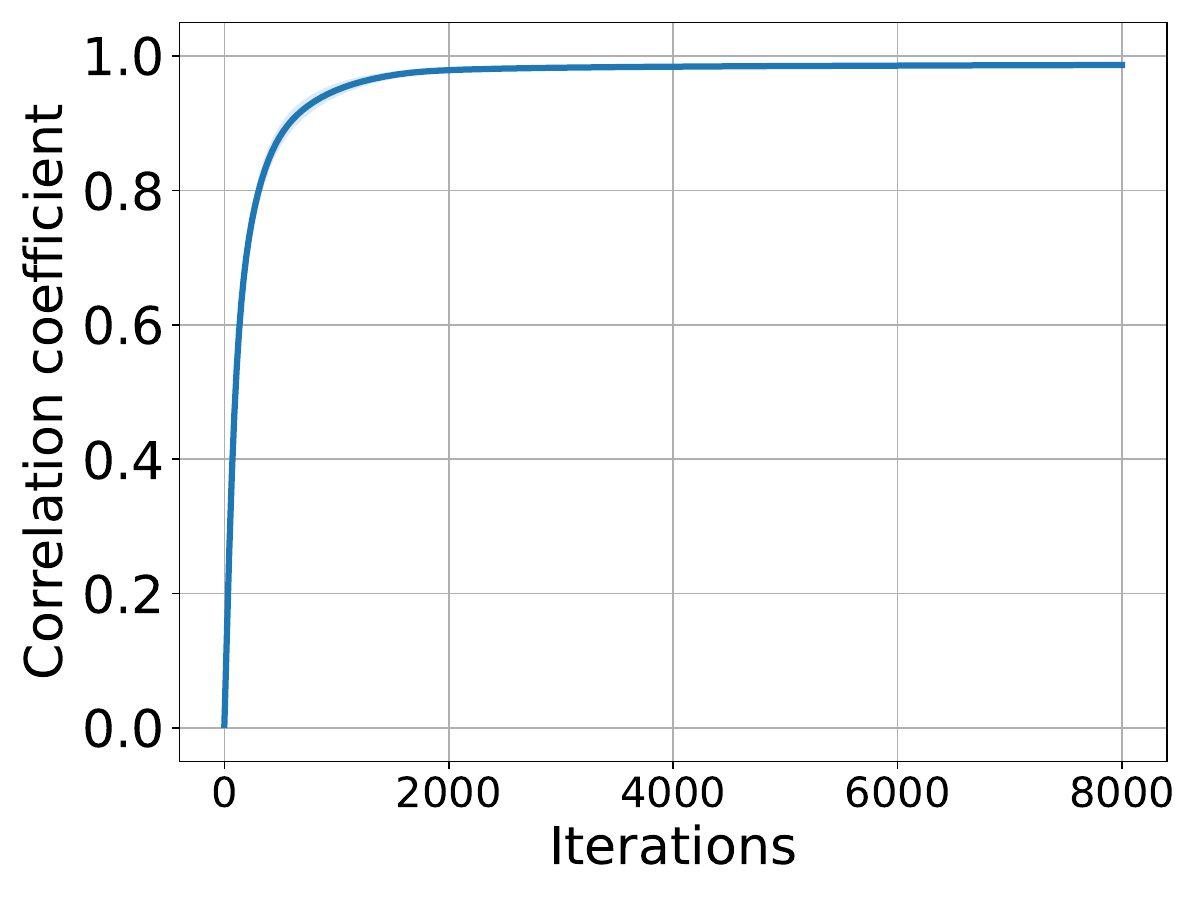}};
  \node at (0.2,-2.5) {{Iterations ($\tau$)}};
  \node[rotate=90] at (-3.4,0) {{Correlation coefficient}};
  \end{tikzpicture}
\vspace{-7pt}
\caption{{
$\frac{\W(\tau)}{\tf{\W(\tau)}}\to\frac{\Wm}{\tf{\Wm}}$
}}\label{fig large corr}
\end{minipage}
\begin{minipage}{0.45\linewidth}
\begin{tikzpicture}
  \node at (0,0){\includegraphics[height=.6\linewidth, trim={1.3cm 1.3cm 0 0}, clip]{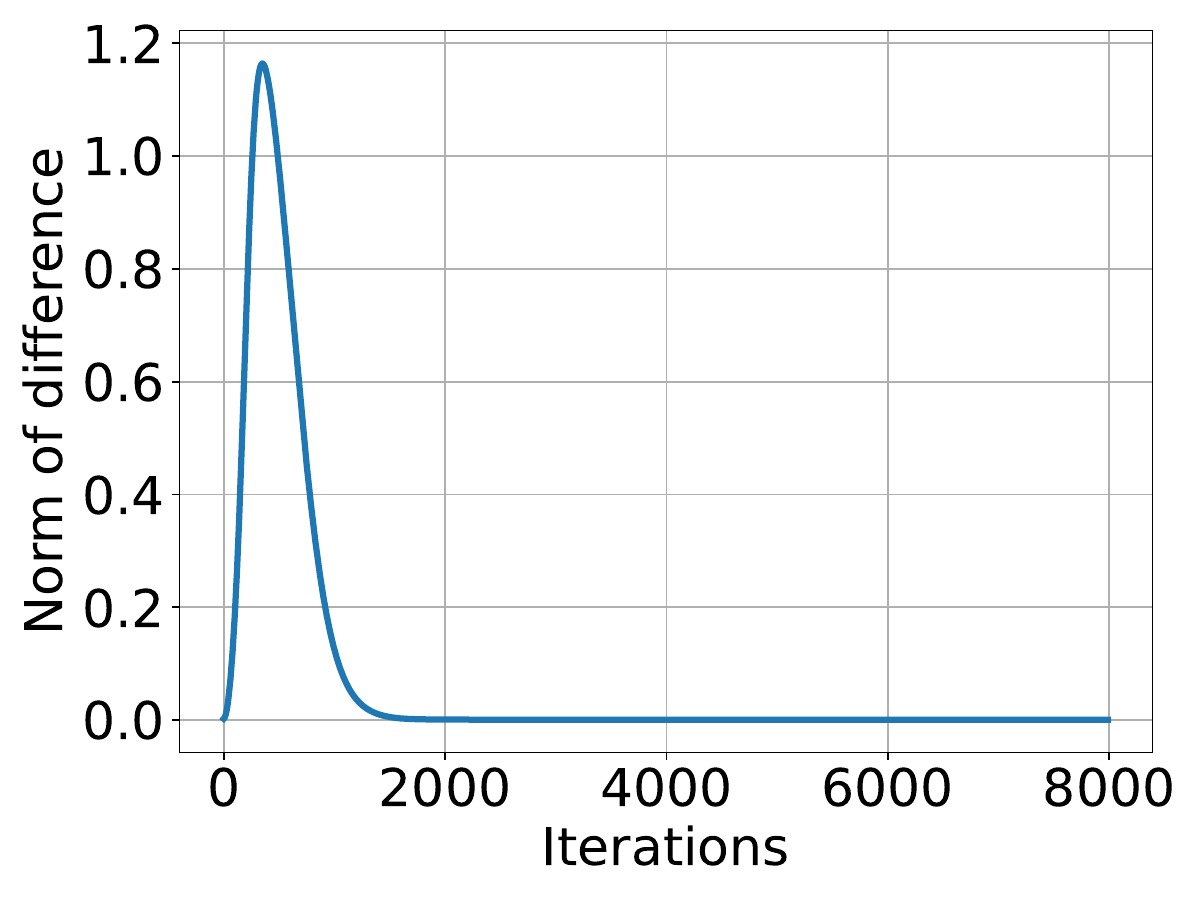}};
  \node at (0.2,-2.5) {{Iterations ($\tau$)}};
  \node[rotate=90] at (-3.7,0) {{$\tf{\hWf-\Wf}$}};
  \end{tikzpicture}
\vspace{-7pt}
\caption{{ 
$\prj_{\Scf}(\W(\tau))\to\Wf$}
}\label{fig large diff}
\end{minipage}

\vspace{14pt}
\begin{minipage}{0.45\linewidth}
\begin{tikzpicture}
  \node at (0,0){\includegraphics[height=.62\linewidth, trim={1.3cm 1.3cm 0 0}, clip]{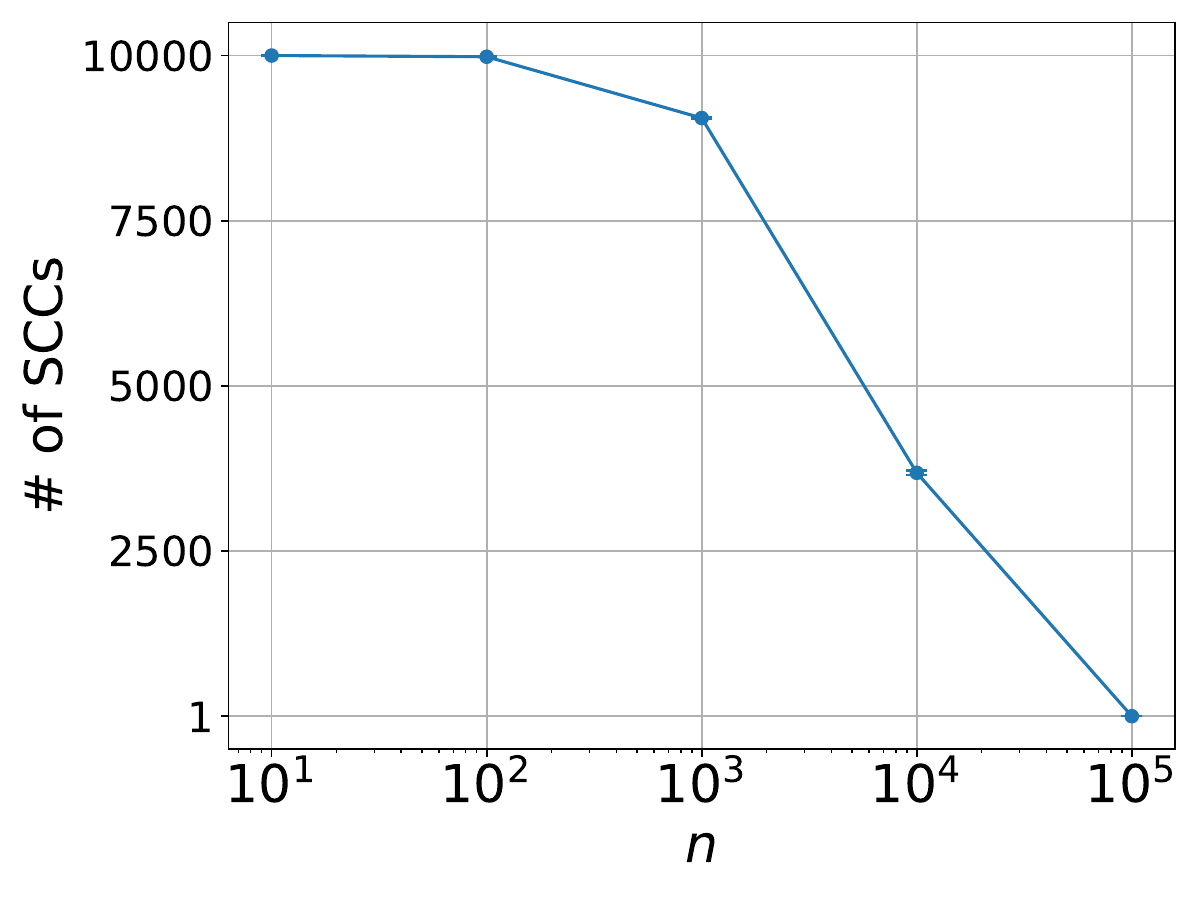}};
  \node at (0.2,-2.5) {{$n$}};
  \node[rotate=90] at (-3.4,0) {{\# of SCCs}};
  \end{tikzpicture}
\vspace{-7pt}
\caption{{Number of of SCCs vs $n$ 
}}\label{fig sccs}
\end{minipage}
\hspace{8pt}
\begin{minipage}{0.45\linewidth}
\begin{tikzpicture}
  \node at (0,0){\includegraphics[height=.61\linewidth, trim={1.3cm 1.3cm 0 0}, clip]{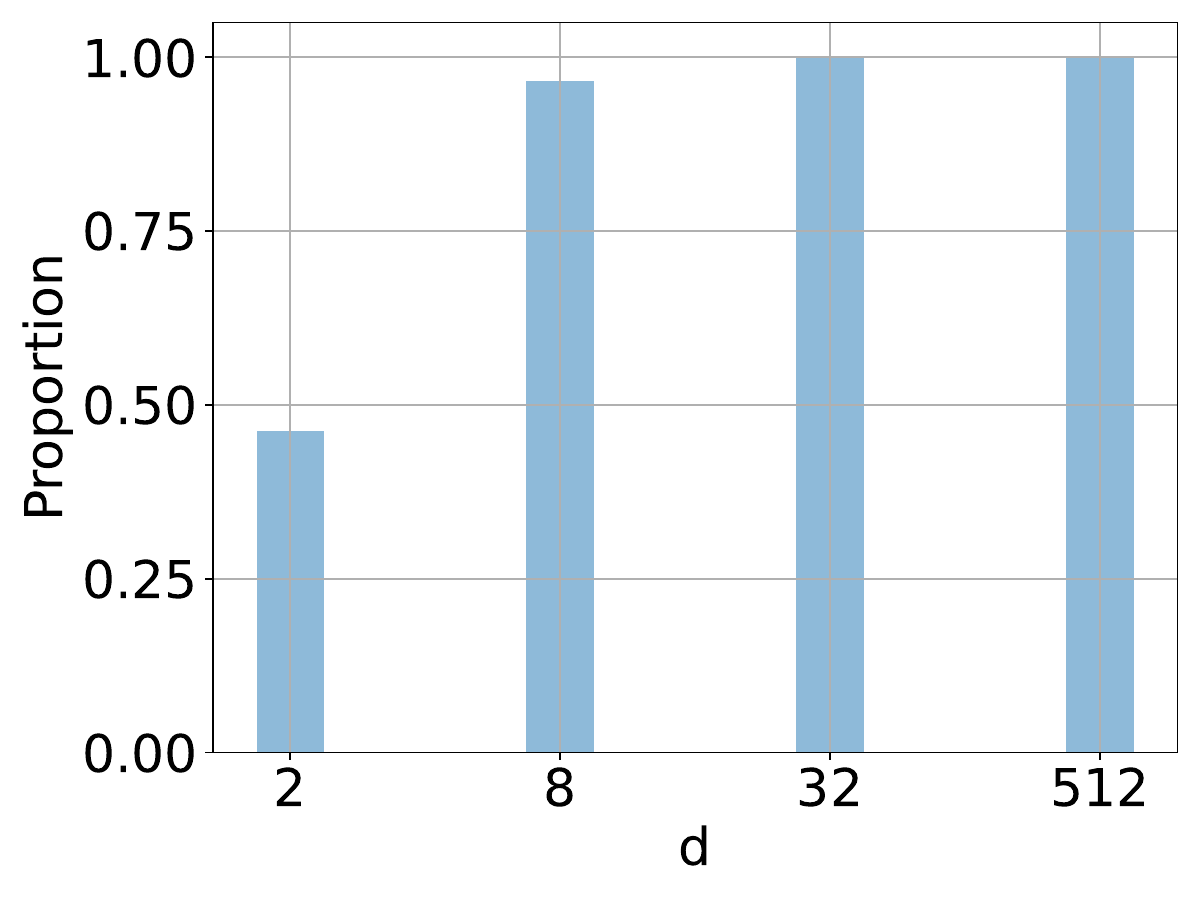}};
  \node at (0.2,-2.5) {{$d$}};
  \node[rotate=90] at (-3.5,0) {{Proportion}};
  \end{tikzpicture}
\vspace{-7pt}
\caption{{Feasibility of $\Wm$
}}\label{fig feasible}
\end{minipage}

\vspace{-15pt}
\end{figure}

\subsection{Additional Experiments}

\paragraph{Global convergence experiments on large $K$ (Figures~\ref{fig large corr} and~\ref{fig large diff}).} 
Assumption~\ref{assume iden} in our work requires $K\leq d$, which helps make the optimization landscape more benign such that global convergence of GD is guaranteed. Unlike previous work \cite{tarzanagh2023margin,tarzanagh2023transformers} that relies on strong equal score conditions to induce global convergence, our assumption is much less strict. Empirically, we argue that this constraint is not necessary as we can apply a mask $\M \in \R^{n \times K \times T}$ to directly collect the attention probability for each distinct token from the attention map without explicitly calculating the linear head. Therefore, we can still impose Assumption~\ref{assume iden} when $K > d$, which aligns more closely with the real-world setting. In Figs.~\ref{fig large corr} and \ref{fig large diff}, we repeat the global convergence experiments by setting $n = 16,T = 64, d=128$ and $K=10000$. Results are averaged over 100 random instances. The averaged correlation is $\approx 0.987$ and the soft component error reaches $0.025$. The results again validate Theorem \ref{thm cyclic gd}.

\paragraph{SCC Structure on large $n$ (Figure~\ref{fig sccs}).} When we increase the sample size $n$ and fix others, more edges and cycles will be added to the graphs. Different SCCs can then be merged into one. As illustrated in Fig. \ref{fig sccs}, the graph eventually collapses to a single SCC.

\paragraph{Feasible condition of \eqref{graph svm} (Figure~\ref{fig feasible}).} To verify that \eqref{graph svm} is feasible when $d \geq K$ (Lemma~\ref{lemma feasible}), in Fig.~\ref{fig feasible}, we run experiments with fixed $n=16, T=128, K=512$ and varying $d$ from $2$ to $512$. Define $\Cc_y$ as the SCC that the label token belongs to. We calculate the proportion of selected tokens that are in $\Cc_y$ to the size of $\Cc_y$, and \eqref{graph svm} is feasible when the value reaches $1$.
The interpretation is that: When $d$ is small, the problem focuses on separating an optimally feasible subset of training data from the others and the empirical SVM bias is captured by a relaxed Graph-SVM solution with constraints based on the subset. As $d$ grows, the exact Graph-SVM becomes feasible. This is similar to the findings in \cite{ji2019risk}.

\end{document}


%

%

\onecolumn
\aistatstitle{Instructions for Paper Submissions to AISTATS 2024: \\
Supplementary Materials}

\section{FORMATTING INSTRUCTIONS}

To prepare a supplementary pdf file, we ask the authors to use \texttt{aistats2024.sty} as a style file and to follow the same formatting instructions as in the main paper.
The only difference is that the supplementary material must be in a \emph{single-column} format.
You can use \texttt{supplement.tex} in our starter pack as a starting point, or append the supplementary content to the main paper and split the final PDF into two separate files.

Note that reviewers are under no obligation to examine your supplementary material.

\section{MISSING PROOFS}

The supplementary materials may contain detailed proofs of the results that are missing in the main paper.

\subsection{Proof of Lemma 3}

\textit{In this section, we present the detailed proof of Lemma 3 and then [ ... ]}

\section{ADDITIONAL EXPERIMENTS}

If you have additional experimental results, you may include them in the supplementary materials.

\subsection{The Effect of Regularization Parameter}

\textit{Our algorithm depends on the regularization parameter $\lambda$. Figure 1 below illustrates the effect of this parameter on the performance of our algorithm. As we can see, [ ... ]}

\vfill


%

%

\onecolumn
\aistatstitle{Instructions for Paper Submissions to AISTATS 2024: \\
Supplementary Materials}

\section{FORMATTING INSTRUCTIONS}

To prepare a supplementary pdf file, we ask the authors to use \texttt{aistats2024.sty} as a style file and to follow the same formatting instructions as in the main paper.
The only difference is that the supplementary material must be in a \emph{single-column} format.
You can use \texttt{supplement.tex} in our starter pack as a starting point, or append the supplementary content to the main paper and split the final PDF into two separate files.

Note that reviewers are under no obligation to examine your supplementary material.

\section{MISSING PROOFS}

The supplementary materials may contain detailed proofs of the results that are missing in the main paper.

\subsection{Proof of Lemma 3}

\textit{In this section, we present the detailed proof of Lemma 3 and then [ ... ]}

\section{ADDITIONAL EXPERIMENTS}

If you have additional experimental results, you may include them in the supplementary materials.

\subsection{The Effect of Regularization Parameter}

\textit{Our algorithm depends on the regularization parameter $\lambda$. Figure 1 below illustrates the effect of this parameter on the performance of our algorithm. As we can see, [ ... ]}

\vfill